\documentclass[11pt, letterpaper]{article}

\usepackage[utf8]{inputenc} 
\usepackage{hyperref}       
\usepackage{url}            
\usepackage{booktabs}       
\usepackage{amsfonts}       
\usepackage{nicefrac}       
\usepackage{microtype}      
\usepackage{xcolor}         
\RequirePackage{amsthm,amsmath,amsfonts,amssymb}

\usepackage[capitalise]{cleveref}
\usepackage[top=1in, bottom=1in, left=1in, right=1in]{geometry}
\usepackage{float}
\usepackage{subfig}
\usepackage{epsfig}
\usepackage[normalem]{ulem}
\usepackage{multicol}
\usepackage{sectsty}
\usepackage{tikz}
\usetikzlibrary{shapes,arrows,positioning}

\providecommand{\tauDK}{\hat {\rm ATE}_{\rm DQ}}

\newcommand{\field}[1]{\ensuremath{\mathbb{#1}}}
\newcommand{\R}{\ensuremath{\field{R}}} 
\newcommand{\I}[1]{\ensuremath{\mathbb{I}_{\left\{#1\right\}}}} 
\newcommand{\E}{\ensuremath{\mathsf{E}}} 

\newcommand{\Ascr}{\ensuremath{\mathcal A}}

\newcommand{\Sscr}{\ensuremath{\mathcal S}}


\providecommand{\sigon}{\sigma_{\mathrm{on}}}
\providecommand{\one}{\mathbf{1}}
\providecommand{\eoff}{\tau_{\mathrm{off}}}
\providecommand{\sigoff}{\sigma_{\mathrm{off}}}

\newtheoremstyle{thm-sf}{}{}{\itshape}{}{\sffamily\bfseries}{.}{ }{}
\theoremstyle{thm-sf}
\makeatletter
\newtheorem*{rep@theorem}{\rep@title}
\newcommand{\newreptheorem}[2]{%
  \newenvironment{rep#1}[1]{%
    \def\rep@title{#2 \ref{##1}}%
    \begin{rep@theorem}}%
    {\end{rep@theorem}}}
\makeatother
\newreptheorem{theorem}{Theorem}
\newreptheorem{lemma}{Lemma}

\newtheorem{assumption}{Assumption}

\newtheorem{theorem}{Theorem}

\newtheorem{lemma}{Lemma}

\usepackage{setspace}
\allsectionsfont{\sffamily}

\floatname{algorithm}{\normalfont\sffamily\bfseries Algorithm}

\providecommand{\norm}[1]{\left\lVert\mspace{1mu} #1 \mspace{1mu}\right\rVert}
\providecommand{\R}{\mathbb{R}}

\usepackage{titlesec}

\titlespacing{\paragraph}{0pt}{5pt}{10pt}
\titlespacing{\section}{0pt}{5pt}{5pt}
\titlespacing{\subsection}{0pt}{5pt}{5pt}

\doublespacing

\title{\textsf{\textbf{Markovian Interference in Experiments}}}

%

\author{
  Vivek F. Farias \\
  Operations Research Center, Massachusetts Institute of Technology 
  \and
  Andrew A. Li \\
  Tepper School of Business, Carnegie Mellon University 
  \and
  Tianyi Peng \\
  Department of Aeronautics and Astronautics, Massachusetts Institute of Technology 
  \and
  Andrew Zheng \\
  Operations Research Center, Massachusetts Institute of Technology
  }

\date{}

\begin{document}
\maketitle


\begin{abstract}
\noindent We consider experiments in dynamical systems where interventions on some experimental units impact other units through a limiting constraint (such as a limited inventory). Despite outsize practical importance, the best estimators for this `Markovian' interference problem are largely heuristic in nature, and their bias is not well understood. We formalize the problem of inference in such experiments as one of policy evaluation. Off-policy estimators, while unbiased, apparently incur a large penalty in variance relative to state-of-the-art heuristics. We introduce an on-policy estimator: the Differences-In-Q's (DQ) estimator. We show that the DQ estimator can in general have exponentially smaller variance than off-policy evaluation. At the same time, its bias is second order in the impact of the intervention. This yields a striking bias-variance tradeoff so that the DQ estimator effectively dominates state-of-the-art alternatives. From a theoretical perspective, we introduce three separate novel techniques that are of independent interest in the theory of Reinforcement Learning (RL). Our empirical evaluation includes a set of experiments on a city-scale ride-hailing simulator.   
\end{abstract}


\section{Introduction}

Experimentation is a broadly-deployed learning tool in online commerce that is simple to execute, in principle: apply the treatment in question at random (e.g.~an A/B test), and `naively' infer the average effect of the treatment by differencing the average outcomes under treatment and control. About a decade ago, Blake and Coey \cite{blake2014marketplace} pointed out a challenge in such experimentation on Ebay:

\vspace{1em}

\noindent {\em ``Consider the example of testing a new search engine ranking algorithm which steers test buyers towards a particular class of items for sale. If test users buy up those items, the supply available to the control users declines.''}  

\vspace{1em}

\noindent This violation of the so-called Stable Unit Treatment Value Assumption (SUTVA) \cite{cox1958planning} has been viewed as problematic in online platforms as early as Reiley's seminal `Magic on the Internet' work \cite{lucking-reileyUsingFieldExperiments1999}. Blake and Coey \cite{blake2014marketplace} were simply pointing out that the resulting inferential biases were large, which is particularly problematic since treatment effects in this context are typically tiny. The {\em interference} problem above is germane to experimentation on commerce platforms where interventions on a given experimental unit impact other units, since all units share a common inventory of `demand'  or `supply' depending on context.

Despite the ubiquity of such interference, a practical solution is far from settled. An ongoing line of work addresses the problem via {\em experimental design}, assigning treatments carefully to mitigate the bias of `naively'-derived estimators. In the best cases such designs provably reduce bias by exploiting certain application specific structures, but often it is unclear whether the problem at hand affords such structure (a case in point being the search-engine example above, as will be apparent later). As such, experimentation on online platforms still largely relies on simple randomization, i.e.~A/B tests. Motivated by this fact, we focus instead on {\em designing effective  estimators} assuming simple randomization. We demonstrate a novel estimator which, thanks to an effective bias-variance tradeoff, is a compelling alternative to both alternative state-of-the-art estimators as well as bespoke experimental designs when they apply. 

\paragraph{Markovian Interference and Existing Approaches:} We study a generic experimentation problem within a system represented as a Markov Decision Process (MDP), where treatment corresponds to an action which may interfere with state transitions. This form of interference, which we refer to as {\em Markovian}, naturally subsumes the platform examples above, as recently noted by others either implicitly \cite{qinReinforcementLearningRidesharing2021} or explicitly \cite{johari2022experimental,shi2020reinforcement}. In that example, a user arrives at each time step, the platform chooses an action (whether to treat the user), and the user's purchase decision alters the system state (inventory levels).

Our goal is to estimate the Average Treatment Effect (ATE), defined as the difference in steady-state reward with and without applying the treatment. In light of the above discussion, we assume that experimentation is done under simple randomization (i.e.~A/B testing). Now without design as a lever, there are perhaps two existing families of estimators:
\newline
\textbf{1. Naive:} We will explicitly define the {\em Naive} estimator in the next section, but the strategy amounts to simply ignoring the presence of interference. This is by and large what is done in practice. Of course it may suffer from high bias (we show this momentarily in Section~\ref{sec:example}), but it serves as more than just a strawman. In particular, bias is only one side of the estimation coin, and with respect to the other side, namely variance, the Naive estimator is effectively the best possible.
\newline
\textbf{2. Off-Policy Evaluation (OPE):} Another approach comes from viewing our problem as one of policy evaluation in reinforcement learning (RL). Succinctly, it can be viewed as estimating the average reward of two different policies (no treatment, or treatment) given observations from some {\em third} policy (simple randomization). This immediately suggests framing the problem as one of {\em Off-Policy Evaluation}, and borrowing one of many existing {\em unbiased} estimators, e.g.~\cite{thomas2015high,thomas2016data,liu2018breaking,jiang2016doubly,kallus2020double,kallus2022efficiently}. This tack appears to be promising, e.g.~\cite{shi2020reinforcement}, but we observe that the resulting variance is necessarily large ({\bf Theorem \ref{th:cramer-rao}}).

\paragraph{Our Contributions:}
Against the above backdrop, we propose a novel {\em on}-policy treatment-effect estimator, which we dub the `Differences-In-Q's' (DQ) estimator, 
for experiments with Markovian interference. In a nutshell, we characterize our contribution as follows:

\noindent {\em The DQ estimator has provably negligible bias relative to the treatment effect. Its variance can, in general be exponentially smaller than that of an efficent off-policy estimator. In both stylized and large-scale real-world models, it dominates state-of-the-art alternatives}.

We next describe these relative merits in greater detail:
\newline
\textbf{1. Second-order Bias: }We show ({\bf Theorem \ref{th:dynkin_bias}}) that when the impact of an intervention on transition probabilities is $O(\delta)$, the bias of the DQ estimator is $O(\delta^2)$. The DQ estimator thus leverages the one piece of structure we have relative to generic off-policy evaluation: treatment effects are typically small. Our analysis introduces a novel Taylor-like expansion of the ATE (Theorem~\ref{th:korder-bias}) that in addition to the current setting, is of general interest in the theory of RL (for instance, in the context of Policy Optimization). 
\newline
\textbf{2. Variance: }We show ({\bf Theorem \ref{th:dynkin_var}}) that the DQ estimator is asymptotically normal, and provide a non-trivial, explicit characterization of its variance. By comparison, we show ({\bf Theorem \ref{th:price}}) that this variance can, in general, be exponentially (in the size of the state space) smaller than the variance of {\em any} unbiased off-policy estimator. Our analysis introduces two new techniques. First, we prove what we dub an `Entrywise Non-expansive Lemma', that we believe is crucial to elucidating the variance reduction afforded by on-policy methods. Second, we introduce a novel linearization trick which dramatically simplifies the analysis of variance in RL via the delta method.  

\noindent Summarizing the above points, we are the first (to our knowledge) to explicitly characterize the favorable bias-variance trade-off in using {\it on-policy} estimation to tackle off-policy evaluation. This new lens has broader implications for OPE and policy optimization in RL (e.g., this leads to a new approach with a provably lower bias than some widely used methods in policy optimization, see Section~\ref{sec:policy-optimization}).
\newline
\textbf{3. Practical Performance: }Despite the technical novelty described above, we view this as our most important contribution. We conduct experiments in both a caricatured one-dimensional environment proposed by others \cite{johari2022experimental}, as well as a city-scale simulator of a ride-sharing platform. We show that in both settings the DQ estimator has MSE that is substantially lower than (a) naive, and several state-of-the-art off-policy estimators, and even (b) estimators given access to incumbent state-of-the-art experimental {\em designs}.

\subsection{An Illustrative Example}
\label{sec:example}
It will be useful at this point to consider a simple example which highlights (a) the model of interference that we address, (b) the shortcomings of existing approaches to inference under such interference, and (c) our own approach to the problem.  {\em Importantly, all results presented in this simple example will extend to general MDPs by virtue of our analysis in Sections~\ref{sec:estimator} and \ref{sec:price}}.

Consider the continuous-time Markov chain depicted in \cref{fig:simple}; this is simply an $M/M/N/N$ queue (or the `Erlang B' model). The state space, ranging from 0 to $N$, can be thought of as the quantity of some resource (e.g.~rental homes of a similar type and geography) currently `occupied'. Customers arrive according to a Poisson process with rate $\lambda$, and independently with probability $p$, occupy a resource if available, for an exponentially-distributed duration with mean $1/\mu$. In spite of its simplicity, this model is closely related to one previously studied by  \cite{johari2022experimental} in the context of interference in commerce platforms.
\begin {figure}[!h]
\centering
\begin{tikzpicture}[->,line width =1pt, node distance =2.5cm]
	\node[circle, draw] (zero) [minimum size =14mm]{$0$};
	\node[circle, draw] (one) [right of=zero,minimum size =14mm] {$1$};
	\node[circle, draw] (two) [right of=one,minimum size =14mm]{$2$};
	\node[] (dot) [right of=two,minimum size =14mm]{...};
	\node[circle,draw] (N2) [right of=dot,minimum size =14mm]{$N-2$};
	\node[circle,draw] (N1) [right of=N2,minimum size =14mm]{$N-1$};
	\node[circle,draw] (N) [right of=N1,minimum size =14mm]{$N$};
	
	\path (zero) edge [bend left] node [above] {$p\lambda$} (one);
	\path (one) edge [bend left] node [above] {$p\lambda$} (two);
	\path (two) edge [bend left] node [above] {$p\lambda$} (dot);
	\path (dot) edge [bend left] node [above] {$p\lambda$} (N2);
	\path (N2) edge [bend left] node [above] {$p\lambda$} (N1);
	\path (N1) edge [bend left] node [above] {$p\lambda$} (N);
	
	\path (one) edge [bend left] node [below] {$\mu$} (zero);
	\path (two) edge [bend left] node [below] {$2 \mu$} (one);
	\path (dot) edge [bend left] node [below] {$3 \mu$} (two);
	\path (N2) edge [bend left] node [below] {$(N-2) \mu$} (dot);
	\path (N1) edge [bend left] node [below] {$(N-1) \mu$} (N2);
	\path (N) edge [bend left] node [below] {$N\mu$} (N1);
	
\end{tikzpicture}
\caption {A continuous-time Markov chain. Arrows indicate {\em rates} of transition between states.}
\label {fig:simple}
\end{figure}
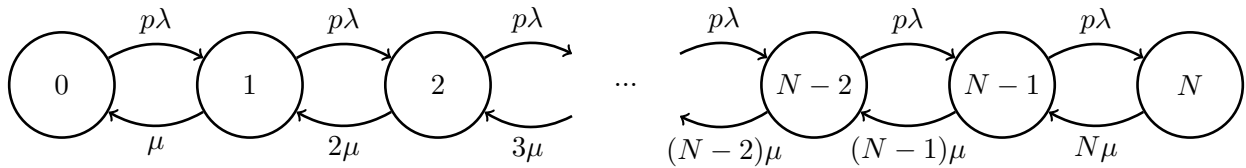

Now consider a treatment whose effect is to increase the probability $p$ by some (unknown) quantity $\delta \ge 0$. Our goal is to measure the effect of the treatment on the steady-state rate of occupation (i.e. the steady-state rate of rightward transitions). We wish to estimate the treatment effect from a simple A/B test; i.e., an experiment that randomly applies (or does not apply) the treatment to each arriving customer. We now describe the various candidate estimators under this experimental design.


\paragraph{Existing Approach 1 -- Naive:} Given the observed trajectory during this experiment, the `naive' approach measures the empirical rates at which customers with and without treatment occupy resources, and takes the difference -- effectively ignoring interference. While this is largely what is done in practice, unfortunately the resulting estimator is {\em biased}. Specifically, its expected value overestimates the true treatment effect, loosely because it ignores the fact that an increase in $p$, while increasing the immediate likelihood of occupation, has the secondary effect of {\em decreasing} the availability of resources, and thus preventing new occupations in the future. This is {\em interference}.

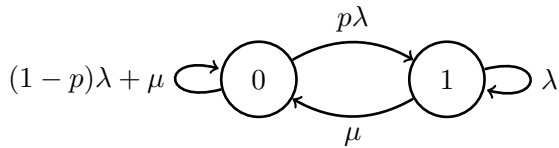
\begin {figure}[!h]
\centering
\begin{tikzpicture}[->,line width =1pt, node distance =2.5cm]
	\node[circle, draw] (zero) [minimum size =10mm]{$0$};
	\node[circle, draw] (one) [right of=zero,minimum size =10mm] {$1$};
	
	\path (zero) edge [bend left] node [above] {$p\lambda$} (one);
	\path (zero) edge [loop left] node {$(1-p)\lambda + \mu$} (zero);
	\path (one) edge [bend left] node [below] {$\mu$} (zero);
	\path (one) edge [loop right] node {$\lambda$} (one);

%
%
%
\end{tikzpicture}
\caption {The discrete Markov chain analogous to the continuous-time chain depicted in \cref{fig:simple}, for the case $N=1$. 
Arrows indicate transition {\em probabilities}, rather than rates. Without loss of generality, the parameters are normalized so that $\lambda + \mu = 1$.}
\label {fig:simple2}
\end{figure}

To be concrete, consider the simplest case: $N=1$. \cref{fig:simple2} depicts the equivalent {\em discrete} Markov chain, where we have assumed (without loss of generality) that $\lambda + \mu = 1$. A new occupation occurs whenever the chain transitions from state 0 to state 1. We are interested in the rate of such transitions, which can be worked out to be $p\lambda\mu/(p\lambda + \mu)$. The additive increase in this term when $p$ is replaced with $p + \delta$, is the so-called {\em average treatment effect} (ATE) that we are after. For this example, it suffices to know that the ATE is $\Omega(\delta)$.

The chain we actually observe, e.g.~resulting from an A/B test, applies the treatment with $1/2$ probability at each time period, affecting the transition probabilities when in state 0. Given a single trajectory $\{s_t\}$ of length $T$ from this chain, the {\em Naive} estimator then is
\[\hat{\rm ATE}_{\rm NV} =  \frac{1}{|T_1|} \sum_{t\in T_1} \I{s_{t}=0,s_{t+1}=1} - \frac{1}{|T_0|} \sum_{t\in T_0} \I{s_{t}=0,s_{t+1}=1},  \]
where $T_1$ and $T_0$ are, respectively, the sets of time periods in which the treatment was and was not applied. An explicit calculation then shows 
\begin{equation} \label{eq:simple} 	
\left\lvert 
\lim_T \hat{\rm ATE}_{\rm NV}  - {\rm ATE} 
\right\rvert 
\approx 
\frac{p\lambda}{\mu} {\rm ATE},
\end{equation} 
where the approximation ($\approx$) hides terms of size $O(\delta^2)$.
In words, unless the system is extremely unoccupied ($\mu \gg p\lambda$), the Naive estimator {\em has bias that is on the order of the treatment effect}. It is worth noting that the variance of this Naive estimator is effectively $O(1)$ -- i.e. it is as small as we can hope for. 

\paragraph{Existing Approach 2 -- Off-Policy Evaluation:}
We may view the problem at hand as one of {\em Off-Policy Evaluation} in reinforcement learning. To do this, we associate an MDP with our chain. The actions in this MDP correspond to treating or not treating an arrival at a given state in the chain. The reward associated with this action is 1 if the subsequent transition is to the right and 0 otherwise. The ATE then corresponds to the difference in average steady-state rewards between the two policies which always select, respectively, the treatment and non-treatment actions.

The task of estimating the ATE is now trivially viewed as one of OPE. This in turn, immediately suggests a whole host of existing OPE estimators that yield {\em unbiased} estimates of the ATE, e.g.~\cite{jiang2016doubly,kallus2020double,kallus2022efficiently}. This natural approach appears to be promising, e.g.~\cite{shi2020reinforcement}, but outside of secondary issues (e.g.~discounted vs.~average reward), the primary issue is that being unbiased appears to come at a price: variance. Specifically, one may show that {\em any} unbiased OPE estimator has variance that grows {\it exponentially} with the number of states in our chain, as $e^{\Omega(N)}$.

This sets up two extremes of a bias-variance tradeoff in our simple example: the Naive estimator has $O(1)$ variance, but its bias is on the order of the treatment effect itself. Any unbiased OPE estimator on the other hand will have variance that scales like $e^{\Omega(N)}$.

\paragraph{Our Approach -- The DQ Estimator:} Continuing to keep in mind the MDP policy evaluation lens from above, observe that the Naive estimator effectively computes the average difference in instantaneous rewards, averaged over states visited under the policy corresponding to simple randomization. Our estimator makes one change to the Naive estimator: instead of computing the average difference in instantaneous rewards, we instead compute the average difference in {\em Q-functions} \footnote{Q-functions are formally introduced in Section~\ref{sec:estimator}.}. Intuitively, doing so allows us to partially account for the long-term effects of selecting the treatment over no-treatment at any given state, and consequently, we hope for a less biased estimate of the treatment effect.

It turns out that the DQ estimator (denoted by $\hat{{\rm{ATE}}}_{\rm{DQ}}$) provides a dramatic reduction in bias. Starting with bias, the DQ estimator's bias can be worked out explicitly here, 
\begin{align*}
\left|\lim_{T} \hat{{\rm{ATE}}}_{\rm{DQ}} - {\rm{ATE}}\right| \approx \frac{\delta}{2} \frac{\lambda}{(\mu+\lambda p)} {\rm{ATE}}
\end{align*}
from which we find that it is $O(\delta^2)$. See Appendix \ref{sec:calculation-example} for details. This is second order relative to the ATE, and, of course, a marked improvement over the Naive estimator's bias. It turns out that this reduction in bias is generic: one of our primary contributions ({\bf Theorem \ref{th:dynkin_bias}}) is to prove the DQ estimator's bias is $O(\delta^2)$ in general MDPs.

Turning next to variance, we can show that the variance of the DQ estimator in our example is $O(N)$. In contrast, an optimal unbiased estimator has variance $e^{\Omega(N)}$, so that the DQ estimator provides an exponential reduction in variance for a relatively small increase in bias. In fact, these relative merits are also generic. Specifically, in {\bf Theorem \ref{th:dynkin_var}} we upper bound the variance of the DQ estimator for general MDPs. This upper bound scales as $\log(1/\rho_{\rm min})$, where $\rho_{\rm min}$ is probability of the least-visited state under the stationary distribution, which can be as large as $1/N$; in the given example $\rho_{\rm min} = e^{-\Omega(N)}$. In {\bf Theorem \ref{th:cramer-rao}} we prove a lower bound on the variance of {\em any} unbiased estimator in the context of general MDPs, which is exponentially larger -- scaling as $\Omega(1/\rho_{\rm min})$. These two bounds show that the variance reduction relative to OPE is generic ({\bf Theorem \ref{th:price}}).

In summary, this example illustrates precisely the bias-variance trade-offs embodied by each estimator in Table~\ref{tab:bias-variance}. In particular, the DQ estimator has bias second-order in the estimand, with variance exponentially smaller than any unbiased OPE estimator --- capturing a particularly advantageous spot in the bias-variance curve. These results hold in generality for a large class of problems, which we formalize in the next section.

\begin{table}[h!]
\centering
\begin{tabular}{@{}lcc@{}} \toprule
	{\bf Estimator} & {\bf Bias} & {\bf Variance} \\ \midrule
	Naive & $\Omega(\delta)$ & $O(1)$  \\
	Off-Policy Evaluation & 0 & $e^{\Omega(N)}$ \\
	Differences-In-Q's (DQ) & $O(\delta^2)$ & $O(N)$ \\ \bottomrule
\end{tabular}
\caption{The bias-variance tradeoff of different estimators. Bias is parameterized by the additive impact $\delta$ of the intervention on transition probabilities -- note that the ATE itself can be $\Omega(\delta)$. `Variance' shows the limiting variance of each estimator on this example, as a function of the cardinality $N$ of the state space. In full generality, variance is $O(\log (1 / \rho_{\rm \min}))$ for DQ, and $\Omega(1/\rho_{\min})$ for OPE, where $\rho_{\rm min}$ is the frequency of the least-visited state under the steady-state distribution. In this example $\rho_{\rm min} = e^{-\Omega(N)}$, but in general $\rho_{\rm min}$ can be up to $1/N$.}
\label{tab:bias-variance}
\end{table}

\paragraph{Aside -- Alternative Experimental Designs: }

Whereas our focus is on estimation assuming simple-randomization, a more sophisticated {\it two-sided randomization} (TSR) design has also been studied for this specific system in \cite{johari2022experimental}. In their scheme, both customers {\it and} resources are randomized independently into treatment and control, and the intervention is applied only if both the customer and the resource are treated. We provide empirical comparisons against this approach in Section~\ref{sec:experiments}, which show that DQ outperforms TSR in typical supply / demand regimes, despite a simpler design.

\subsection{Related Literature:}


The largest portion of work in interference is in {\em experimental design}, with the design levers ranging from stopping times in A/B tests \cite{kharitonov2015sequential,johari2017peeking,yang2017framework,JohPekWal2020Always}, to any form of more-sophisticated `clustering' of units \cite{cornfield1978symposium,farquhar1978symposium,commit1991community,donner2004pitfalls,murray1998design,ugander2013balanced,walker2014design,eckles2017design,farias2021learning}, to clustering specifically when interference is represented by a network \cite{manski2013identification,ugander2013graph,saveski2017detecting,athey2018exact,basse2019randomization,pouget2019variance,zigler2021bipartite}, to the proportion of units treated \cite{hudgens2008toward,tchetgen2012causal,baird2018optimal}, to the timing of treatment \cite{sneider2018experiment,bojinov2020design,glynnAdaptiveExperimentalDesign2020}, and beyond \cite{backstrom2011network,katzir2012framework,toulis2013estimation,manski2013identification,choi2017estimation,basse2018model,gui2015network,saveski2017detecting,farias2022synthetically}. As alluded to earlier, these sophisticated designs can be powerful, but cost, user experience, and other implementation concerns restrict their application in practice \cite{kirnChallengesExperimentation2022,kohavi2020trustworthy}.

We view this paper as orthogonal to this literature, but will eventually compare against a recent state-of-the-art design, so-called {\em two-sided randomization} \cite{johari2022experimental,bajari2021multiple}, that is specific to the context of two-sided marketplaces (e.g.~the one we simulate).

As stated earlier, the problem we study is one of {\em off-policy evaluation (OPE) }\cite{precup2001off,sutton2008convergent}. 
The fundamental challenge in OPE 
is high variance, which can be attributed to the nature of the algorithmic tools used, e.g. sampling procedures \cite{thomas2015high,thomas2016data,liu2018breaking}. Recent work on `doubly-robust' estimators \cite{jiang2016doubly,kallus2020double,kallus2022efficiently} has improved on variance (incidentally, our estimator is loosely tied to these, as we discuss in Section \ref{sec:conclusion}), but again we will show, via a formal lower bound, that unbiased estimators as a whole have prohibitively large variance. Finally, our motivation is close in spirit to a recent paper \cite{shi2020reinforcement}, which applies OPE directly in Markovian interference settings; we make a direct experimental comparison in Section \ref{sec:experiments}.

In the policy optimization literature, `trust-region' methods \cite{schulman2015trust} and conservative policy iteration \cite{kakadeApproximatelyOptimalApproximate2002} use a related on-policy estimation approach to bound policy improvement. Relative to the existing literature, we develop an on-policy surrogate with provably lower bias than extant proposals; see Section~\ref{sec:policy-optimization}. Furthermore, the explicit application of on-policy estimation in the context of OPE, and in particular the striking bias-variance tradeoff this enables, are novel to this paper.


\section{Model} \label{sec:model}
Having discussed each estimator in a specific example, we now formalize the general inference problem that we tackle, casting it in the language of MDPs. Vis-\`{a}-vis the existing literature, this lens allows us to reason about the problem using a large, well-established toolkit, and makes obvious the fact that OPE provides unbiased estimation of the ATE. We then present what we call the `Naive' estimator (alluded to in the introduction). This is the lowest-variance estimator one can hope for in this setting, but it can have significant bias, as we see in \cref{eq:simple}.


We begin by defining an MDP with state space $\Sscr$. We denote by $s_t \in \Sscr$ the state of the MDP at time $t \in \mathbb{N}$. Every state is associated with a set of available actions $\Ascr$ which govern the transition probabilities between states via the (unknown) function $p:\Sscr \times \Ascr \times \Sscr \to [0,1]$. We assume that $\Ascr = \{0,1\}$ irrespective of state; for descriptive purposes, we will associate the `$1$' action with the use of a prospective intervention, so that `$0$' is associated with not employing the intervention. We denote by $r(s,a)$ the reward earned in state $s$ having employed action $a$. A policy $\pi: \Sscr \rightarrow \Ascr$ maps states to random actions. We define the average reward $\lambda^\pi$, under any (ergodic, unichain) policy $\pi$, according to:
\[
\lambda^\pi = \lim_{T \to \infty} \frac{1}{T} \sum_{t=1}^T r(s_t,\pi(s_t)).
\]
There are three policies we define explicitly: 

\textbf{The Incumbent Policy $\pi_0$: }This policy never uses the intervention, so that $\pi_0(s) = 0$ for all $s$. This is `business as usual'. Denote the associated transition matrix as $P_{0}$ (i.e. the entries of $P_0$ are exactly $p(\cdot,0,\cdot)$)

\textbf{The Intervention Policy $\pi_1$: }This policy always uses the intervention, so that $\pi_1(s) = 1$ for all $s$. This reflects the system, should the intervention under consideration be `rolled out'. Denote the associated transition matrix as $P_{1}$.

\textbf{The Experimentation Policy $\pi_{p}$: }This policy corresponds to the experiment design. Simple randomization would select $\pi(s) = 1$ with some fixed probability $p$, say $1/2$, independently at every period. This corresponds to the sort of search engine experiment alluded to in the introduction. The transition matrix associated with this design is then $P_{1/2} = \frac{1}{2}P_{0} + \frac{1}{2}P_{1}$.

%
\noindent\textbf{\textsf{The Inference Problem: }}We are given a single sequence of $T$ states, actions, and rewards, observed under the experimentation policy $\pi_p$ (recall that cost and constraints \cite{kirnChallengesExperimentation2022,kohavi2020trustworthy} prohibit us from running $\pi_{0}$ or $\pi_{1}$ separately until convergence). We observe the sequence  $\{(s_t, a_t, r(s_t,a_t)): t = 1,\dots,T\}$, wherein $a_t \triangleq \pi_p(s_t)$. Our goal is to estimate the average treatment effect (ATE): ${\rm ATE} \triangleq \lambda^{\pi_1} - \lambda^{\pi_0}. $

\noindent\textbf{\textsf{The Naive Estimator and Bias.}} A natural approach to estimating the ATE is to use simple randomization (i.e.~$P_{1/2}$) and  the {\em Naive} estimator, which we define in the language of MDPs as: $\hat{\rm ATE}_{\rm NV} =  \frac{1}{|T_1|} \sum_{t\in T_1} r(s_t,a_t) - \frac{1}{|T_0|} \sum_{t\in T_0} r(s_t,a_t),$
where $T_1 = \{t: a_t =1\}$ and $T_0 = \{t: a_t =0\}$. In the context of the example of Section~\ref{sec:example}, this corresponds to simply taking the difference between the probability of renting a resource among test users ($T_1$), and control users ($T_0$). What goes wrong is simply that the two empirical averages above, that seek to estimate $\lambda^{\pi_1}$ and $\lambda^{\pi_0}$ respectively, employ the wrong measure over states.
As we saw, this is sufficient to introduce bias that is on the order of the treatment effect being estimated.
\section{The Differences-In-Q's Estimator}
\label{sec:estimator}

We are now prepared to introduce our estimator for inference in the presence of Markovian interference.
Before defining our estimator, which we will see is only slightly more complicated than the Naive estimator, we recall a few useful objectis in average-reward MDPs. Denote the average cost of a policy $\pi$ by $\lambda^{\pi}$. The $V$-function of a policy $\pi$, $V_{\pi}$, characterizes the ``reward-to-go'' $V_{\pi}(s) := \E\Big[\sum_{t=0}^{\infty} r(s_t, a_t) - \lambda^{\pi}~\big|~s_0 = s\Big].$
It is also known that $(V_{\pi}, \lambda^{\pi})$ is the fixed point of the Bellman operator $T_{\pi}$ with $T_{\pi}(V_{\pi}, \lambda^{\pi}) = V_{\pi}.$ Here $T_{\pi}: \mathbb{R}^{|\Sscr|} \times \mathbb{R} \rightarrow \mathbb{R}^{|\Sscr|}$ is given by $T_\pi (V, \lambda)  = r_\pi - \lambda \mathbf{1} + P_{\pi} V$ where $r_\pi: \Sscr \rightarrow \mathbb{R}$ is defined according to $r_\pi(s) = \E\left[r(s, \pi(s))\right]$.
Finally, the $Q$-function associated with $\pi$, denoted $Q_\pi: \Sscr \times \Ascr \rightarrow \mathbb{R}$, is defined according to $Q_\pi(s,a) := \E\Big[\sum_{t=0}^{\infty} r(s_t, a_t) - \lambda^{\pi}~\big|~s_0 = s, a_0 = a\Big]$. Put simply, the $Q$-function measures the `excess' reward obtained starting from $s$ with the action $a$ relative to the average reward under $\pi$. 

\subsection{An Idealized First Step}
In motivating our estimator, let us begin with the following idealization of the Naive estimator, where we denote by $\rho_{1/2}$ the steady state distribution under the randomization policy $\pi_{1/2}$: $\E_{\rho_{1/2}} \left[\hat {\rm ATE}_{\rm NV}\right] = \sum_s \rho_{1/2}(s) \left[ r(s,1) - r(s,0)\right].$ It is not hard to see that in the example of Section~\ref{sec:example}, we continue to have $\lvert \E_{\rho_{1/2}} [\hat {\rm ATE}_{\rm NV} ] - {\rm ATE} \rvert \approx \frac{p\lambda}{\mu} \mathrm{ATE}$, i.e. this idealization of the Naive estimator continues to have bias on the order of the treatment effect. Consider then, the following alternative:
\[
\E_{\rho_{1/2}} \left[\tauDK\right] = \sum_s \rho_{1/2}(s) \left[Q_{\pi_{1/2}}(s,1) - Q_{\pi_{1/2}}(s,0)\right],
\]
where the term $\E_{\rho_{1/2}} [\tauDK]$ can for now just be thought of as an idealized constant ($\tauDK$ is defined soon in \eqref{eq:Dynkin}).
Compared to $\E_{\rho_{1/2}} [\hat {\rm ATE}_{\rm NV}]$, we see that $\E_{\rho_{1/2}} [\hat {\rm ATE}_{\rm DQ}]$  takes a remarkably similar form, except that as opposed to an average over differences in rewards, we compute an average of differences in $Q$-function values. The idea is that doing so will hopefully compensate for the shift in distribution induced by $\pi_{1/2}$, as it does in the example of Section~\ref{sec:example}.

Is the dramatic mitigation of bias we see in the example generic? If the experimentation policy mixes fast, our first set of results essentially answers this question in the affirmative. In particular, we make the following mixing time assumption:


\begin{assumption}[Mixing time]\label{assump:mixing-time}
There exist constants $C$ and $\lambda$ such that for all $s \in \Sscr$, $d_{\rm TV} (P_{1/2}^{k}(s, \cdot), \rho_{1/2}) \leq C \lambda^{k}$ where $d_{\rm TV}(\cdot,\cdot)$ denotes total variation distance.
\end{assumption}

We then have that the second order bias we saw in Section~\ref{sec:example} is, in fact, generic:

\begin{theorem}[Bias of DQ]
\label{th:dynkin_bias}
Assume that for any state $s \in \Sscr$,  
$
d_{\rm TV}( p(s,1,\cdot), p(s,0,\cdot) ) \leq \delta. 
$
Then,
\[
\left|{\rm ATE} - \E_{\rho_{1/2}} \left[
\hat {\rm ATE}_{\rm DQ}
\right]\right| \leq C' \left(\frac{1}{1-\lambda}\right)^2 r_{\max} \cdot \delta^2
\]
where $r_{\max} := \max_{s,a} \left|r(s,a)\right|$ and $C'$ is a constant depending (polynomially) on $\log(C)$.
\end{theorem}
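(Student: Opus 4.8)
The plan is to rewrite the bias as an inner product of a perturbed stationary distribution with a bounded vector, and then to show this perturbation vanishes to first order because $P_{1/2}=\tfrac12(P_0+P_1)$ lies at the exact midpoint of $P_0$ and $P_1$. Write $\Delta r(s)=r(s,1)-r(s,0)$ and $\Delta P=P_1-P_0$; every row of $\Delta P$ sums to zero and has $\ell_1$-norm $\|\Delta P(s,\cdot)\|_1=2\,d_{\rm TV}(p(s,1,\cdot),p(s,0,\cdot))\le 2\delta$. Set $g:=\tfrac12(\Delta r+\Delta P\,V_{\pi_{1/2}})$. From $Q_{\pi_{1/2}}(s,a)=r(s,a)-\lambda^{\pi_{1/2}}+\sum_{s'}p(s,a,s')V_{\pi_{1/2}}(s')$ one reads off $Q_{\pi_{1/2}}(\cdot,1)-Q_{\pi_{1/2}}(\cdot,0)=2g$ and $V_{\pi_{1/2}}=\tfrac12\big(Q_{\pi_{1/2}}(\cdot,1)+Q_{\pi_{1/2}}(\cdot,0)\big)$, so $\E_{\rho_{1/2}}[\tauDK]=\rho_{1/2}^\top\big(Q_{\pi_{1/2}}(\cdot,1)-Q_{\pi_{1/2}}(\cdot,0)\big)=2\,\rho_{1/2}^\top g$. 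Combining the Bellman equation for $\pi_{1/2}$ with the stationarity identities $(\rho^{\pi_1})^\top P_1=(\rho^{\pi_1})^\top$ and $(\rho^{\pi_0})^\top P_0=(\rho^{\pi_0})^\top$ (this is exactly the average-reward performance-difference lemma applied with base policy $\pi_{1/2}$) gives $\lambda^{\pi_1}=\lambda^{\pi_{1/2}}+(\rho^{\pi_1})^\top g$ and $\lambda^{\pi_0}=\lambda^{\pi_{1/2}}-(\rho^{\pi_0})^\top g$, hence
\[
{\rm ATE}-\E_{\rho_{1/2}}\!\big[\tauDK\big]=\big(\rho^{\pi_1}+\rho^{\pi_0}-2\rho_{1/2}\big)^\top g .
\]

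Next I would expose the second-order cancellation in $\rho^{\pi_1}+\rho^{\pi_0}-2\rho_{1/2}$. Let $Z=(I-P_{1/2}+\1\rho_{1/2}^\top)^{-1}=\sum_{k\ge0}\big(P_{1/2}^k-\1\rho_{1/2}^\top\big)$ be the fundamental matrix of $\pi_{1/2}$ (the series converges by \cref{assump:mixing-time}), which satisfies the standard identities $Z\1=\1$, $\rho_{1/2}^\top Z=\rho_{1/2}^\top$, and $(I-P_{1/2})Z=I-\1\rho_{1/2}^\top$. Subtracting $\rho_{1/2}^\top(I-P_{1/2})=0$ from $(\rho^{\pi_1})^\top(I-P_1)=0$ and using $P_1=P_{1/2}+\tfrac12\Delta P$ yields $(\rho^{\pi_1}-\rho_{1/2})^\top(I-P_{1/2})=\tfrac12(\rho^{\pi_1})^\top\Delta P$; since $\Delta P\,\1=\0$ the right side is zero-sum, and right-multiplying by $Z$ gives $(\rho^{\pi_1}-\rho_{1/2})^\top=\tfrac12(\rho^{\pi_1})^\top\Delta P\,Z$, and symmetrically $(\rho^{\pi_0}-\rho_{1/2})^\top=-\tfrac12(\rho^{\pi_0})^\top\Delta P\,Z$. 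Adding,
\[
\big(\rho^{\pi_1}+\rho^{\pi_0}-2\rho_{1/2}\big)^\top=\tfrac12\big(\rho^{\pi_1}-\rho^{\pi_0}\big)^\top\Delta P\,Z ,
\]
which is quadratic in $\Delta P$: by the two displayed formulas $\rho^{\pi_1}-\rho^{\pi_0}$ is already $O(\delta)$, and it is multiplied by $\Delta P$ once more.

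To make this quantitative I would prove a non-expansiveness estimate for $Z$ on the mean-zero subspace (an ``entrywise non-expansive''-type lemma): for any row vector $v$ with $v^\top\1=0$, $\|v^\top Z\|_1\le t_{\rm mix}\,\|v\|_1$, where $t_{\rm mix}:=\sum_{k\ge0}\min\{1,\,2C\lambda^k\}=O\!\big((1-\lambda)^{-1}(1+\log C)\big)$. This follows from $v^\top Z=\sum_k v^\top P_{1/2}^k$ together with $\|v^\top P_{1/2}^k\|_1\le\min\{1,\,2C\lambda^k\}\,\|v\|_1$ (the bound $1$ since $P_{1/2}^k$ is stochastic; the bound $2C\lambda^k$ by splitting $v$ into its positive and negative parts, each a scaled probability vector converging to $\rho_{1/2}$ at total-variation rate $C\lambda^k$). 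Using $\|v^\top\Delta P\|_1\le2\delta\|v\|_1$ and the fact that $(\rho^{\pi_1})^\top\Delta P$ and $(\rho^{\pi_1}-\rho^{\pi_0})^\top\Delta P$ are zero-sum, chaining gives $\|\rho^{\pi_1}-\rho_{1/2}\|_1\le\delta\,t_{\rm mix}$, hence $\|\rho^{\pi_1}-\rho^{\pi_0}\|_1\le2\delta\,t_{\rm mix}$, and then $\|\rho^{\pi_1}+\rho^{\pi_0}-2\rho_{1/2}\|_1\le2\delta^2 t_{\rm mix}^2$ (the crude bound $\|\rho^{\pi_1}+\rho^{\pi_0}-2\rho_{1/2}\|_1\le2\delta\,t_{\rm mix}$ also holds, which is what one uses when $\delta\,t_{\rm mix}>1$). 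For $g$, $\|\tfrac12\Delta r\|_\infty\le r_{\max}$ while $\|\tfrac12\Delta P\,V_{\pi_{1/2}}\|_\infty\le\tfrac\delta2\,\mathrm{span}(V_{\pi_{1/2}})$ with $\mathrm{span}(V_{\pi_{1/2}})=O(r_{\max}t_{\rm mix})$ (since $|V_{\pi_{1/2}}(s)|\le\sum_t\min\{2r_{\max},\,2Cr_{\max}\lambda^t\}$), so $\|g\|_\infty=O\!\big(r_{\max}(1+\delta\,t_{\rm mix})\big)$. Plugging into the first display, $|{\rm ATE}-\E_{\rho_{1/2}}[\tauDK]|\le\|\rho^{\pi_1}+\rho^{\pi_0}-2\rho_{1/2}\|_1\,\|g\|_\infty=O\!\big(r_{\max}\,t_{\rm mix}^2\,\delta^2\big)$ (checking the two regimes $\delta\,t_{\rm mix}\le1$ and $\delta\,t_{\rm mix}>1$ separately, each gives this), which is the claim because $t_{\rm mix}=O\!\big((1-\lambda)^{-1}(1+\log C)\big)$.

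The crux is the middle step, and two things must line up. First, the algebraic cancellation that makes $\rho^{\pi_1}+\rho^{\pi_0}-2\rho_{1/2}$ genuinely second order relies on $P_{1/2}$ being the \emph{exact} midpoint of $P_0$ and $P_1$; an asymmetric randomization would leave an $\Omega(\delta)$ term and the theorem would fail. Second, every vector multiplied by $Z$ is zero-sum, so the possibly large entries of $Z$ are irrelevant and only its contraction on the mean-zero subspace at the mixing-time scale matters — this is precisely what yields the $\log C$ (rather than $C$) dependence in $C'$. A minor point: $\pi_0$ and $\pi_1$ are unichain by the model assumptions, so $\rho^{\pi_1},\rho^{\pi_0}$ are well defined, and this route never differentiates in a perturbation parameter, so the only chain whose mixing we invoke is $\pi_{1/2}$, exactly as in \cref{assump:mixing-time}.
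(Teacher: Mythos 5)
Your proof is correct, and it reaches the bound by a genuinely different decomposition than the paper's, even though the two share their key technical ingredients. The paper builds a full Taylor expansion of the ATE in $\delta$ around $P_{1/2}$ by iterating the stationary-distribution perturbation formula (\cref{lem:perturbation-stationary-distribution}), identifies $\E_{\rho_{1/2}}[\tauDK]$ with the first-order truncation, and bounds the explicit remainder via $\norm{(I-P_{1/2})^{\#}}_{1,\infty}\leq\frac{2\ln(C)+1}{1-\lambda}$ (\cref{lem:group-inverse-L1}). You instead start from the average-reward performance-difference lemma to get the exact identity ${\rm ATE}-\E_{\rho_{1/2}}[\tauDK]=(\rho^{\pi_1}+\rho^{\pi_0}-2\rho_{1/2})^\top g$, and then apply the perturbation identity once to each of $\rho^{\pi_1},\rho^{\pi_0}$ so that the midpoint symmetry $P_{1/2}=\tfrac12(P_0+P_1)$ produces the cancellation $(\rho^{\pi_1}+\rho^{\pi_0}-2\rho_{1/2})^\top=\tfrac12(\rho^{\pi_1}-\rho^{\pi_0})^\top\Delta P\,Z$, which is manifestly quadratic in $\Delta P$; your $\ell_1$ contraction of $Z$ on the mean-zero subspace is the same estimate as \cref{lem:group-inverse-L1}, just stated for zero-sum left-multipliers, and your two-regime check on $\delta\,t_{\rm mix}$ correctly handles the $\delta\,\mathrm{span}(V_{\pi_{1/2}})$ term in $g$. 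What each approach buys: yours isolates the structural reason the bias is second order (the exact midpoint cancellation, which your own remark correctly identifies as essential) and never needs the full series; the paper's Taylor-series organization is what generalizes to the $K$-th order corrections of \cref{th:korder-bias}, which your identity does not immediately yield.

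One cosmetic inconsistency worth fixing: you define $Z=(I-P_{1/2}+\1\rho_{1/2}^\top)^{-1}$ but then equate it to $\sum_{k\ge0}(P_{1/2}^k-\1\rho_{1/2}^\top)$, which is the group inverse $(I-P_{1/2})^{\#}$; these differ by $\1\rho_{1/2}^\top$, and correspondingly $Z\1=\1$ holds for the former but not the latter. Since every left-multiplication of $Z$ in your argument is by a zero-sum row vector, and $(I-P_{1/2})M=I-\1\rho_{1/2}^\top$ holds for both choices of $M$, nothing downstream is affected, but you should commit to one definition.
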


\subsection{The Differences-In-Q's Estimator}

Motivated by the development in the previous subsection, the {\em Differences-In-Q's (DQ)} estimator we propose to use is simply 
\begin{equation}
\label{eq:Dynkin}
\hat {\rm ATE}_{\rm DQ}
= 
\frac{1}{|T_1|}
\sum_{t \in T_1}
\hat Q_{\pi_{1/2}}(s_t,a_t)
-
\frac{1}{|T_0|}
\sum_{t \in T_0}
\hat Q_{\pi_{1/2}}(s_t,a_t),
\end{equation}
where we take an empirical average over the state trajectory produced under the randomization policy, and $\hat Q_{\pi_{1/2}}$ is an estimator of the $Q$-function. For concreteness, we obtain $\hat Q_{\pi_{1/2}}$ by solving
\begin{align}\label{eq:LSTD0-V}
\min_{\hat V, \hat \lambda}
\sum_{s \in \Sscr} 
\left(\sum_{t, s_{t}=s} 
r(s_t, a_t) - \hat{\lambda} + \hat V(s_{t+1}) - \hat V(s_t)
\right)^2.
\end{align}
Our main result characterizes the variance and asymptotic normality of $\hat {\rm ATE}_{\rm DQ}$:
\begin{theorem}[Variance and Asymptotic Normality of DQ]
\label{th:dynkin_var}
The DQ estimator is asymptotically normal so that $
\sqrt{T}
\left(\hat {\rm ATE}_{\rm DQ} - \E_{\rho_{1/2}} \left[\hat {\rm ATE}_{\rm DQ}\right]\right) \overset{d}{\rightarrow} \mathcal{N}(0, \sigma_{\rm DQ}^2),
$
with limiting standard deviation 
\[
\sigma_{\rm DQ} \leq C'\left(\frac{1}{1-\lambda}\right)^{5/2} \log \left( \frac{1}{\rho_{\min}}\right) r_{\max}.
\]
where $\rho_{\min} := \min_{s\in S} \rho_{1/2}(s)$ and $C'$ is a constant depending (polynomially) on $\log(C)$. 
\end{theorem}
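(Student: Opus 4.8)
\emph{Step 1: reduce $\hat{\rm ATE}_{\rm DQ}$ to a smooth function of empirical averages.} Write $n_s$ for the visit count of state $s$ along the trajectory, and let $\hat P_{1/2}$, $\hat r_{1/2}$ be the empirical transition kernel and empirical reward under $\pi_{1/2}$. The first-order conditions for \eqref{eq:LSTD0-V} show that for $T$ large enough that the empirical chain is ergodic, the minimizer $(\hat V,\hat\lambda)$ solves the empirical Bellman equation $\hat r_{1/2}-\hat\lambda\one+\hat P_{1/2}\hat V-\hat V=0$ exactly, so that $\hat V$, $\hat\lambda$, and hence each term $\hat Q_{\pi_{1/2}}(s_t,a_t)=r(s_t,a_t)-\hat\lambda+\hat V(s_{t+1})$, are deterministic functions of the vector $\hat\theta$ of empirical frequencies of $(s,a,s')$-triples. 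Thus $\hat{\rm ATE}_{\rm DQ}=\Phi(\hat\theta)$ for a fixed map $\Phi$, with $\Phi(\theta^\star)=\E_{\rho_{1/2}}[\hat{\rm ATE}_{\rm DQ}]$ at the population frequency vector $\theta^\star$. Smoothness of $\Phi$ near $\theta^\star$ follows from invertibility of $I-P_{1/2}+\one\rho_{1/2}^\top$, and $\nabla\Phi(\theta^\star)$ is obtained by implicit differentiation of the Bellman fixed point.

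\emph{Step 2: CLT, delta method, and the linearization trick.} Under \cref{assump:mixing-time} the chain $(s_t,a_t,s_{t+1})$ is geometrically mixing, so $\sqrt T(\hat\theta-\theta^\star)\overset{d}{\to}\mathcal N(0,\Sigma)$ with long-run covariance $\Sigma=\sum_{k\in\Z}\Cov(\cdot_0,\cdot_k)$, and the delta method gives $\sqrt T(\hat{\rm ATE}_{\rm DQ}-\E_{\rho_{1/2}}[\hat{\rm ATE}_{\rm DQ}])\overset{d}{\to}\mathcal N(0,\sigma_{\rm DQ}^2)$ with $\sigma_{\rm DQ}^2=\nabla\Phi(\theta^\star)^\top\Sigma\,\nabla\Phi(\theta^\star)$. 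Equivalently — and this is the linearization I would use to avoid manipulating $\Sigma$ directly — write $\hat{\rm ATE}_{\rm DQ}-\E_{\rho_{1/2}}[\hat{\rm ATE}_{\rm DQ}]=\tfrac1T\sum_t\psi(s_t,a_t,s_{t+1})+o_P(T^{-1/2})$ for an explicit influence function $\psi$, so $\sigma_{\rm DQ}^2$ is the long-run variance of $\psi$ along the chain. Bounding this reduces to (a) a bound on a $\rho_{1/2}$-weighted norm of $\psi$ and (b) control of the mixing inflation of the long-run variance relative to the one-step variance, which contributes a fixed power of $1/(1-\lambda)$ via a standard geometric-mixing estimate.

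\emph{Step 3: the influence function, the variance factors, and the main obstacle.} Differentiating through the empirical Bellman fixed point, $\psi$ splits into a Naive-type term from the reward and action frequencies and a term from the perturbation of $\hat V$, the latter of the form ``deviation matrix $(I-P_{1/2}+\one\rho_{1/2}^\top)^{-1}$ applied to a Bellman-residual noise.'' Both this deviation matrix and $V_{\pi_{1/2}}$ itself have magnitude $O(r_{\max}/(1-\lambda))$ by the mixing assumption; combining this with the long-run-variance factor from (b) and taking a square root produces the claimed $(1/(1-\lambda))^{5/2}$ scaling. The crux is the appearance of $\log(1/\rho_{\min})$ rather than $1/\rho_{\min}$: a naive bound on $\nabla\Phi(\theta^\star)$ carries the per-state normalizations $1/n_s\approx 1/(T\rho_{1/2}(s))$, which in the Erlang example are as large as $1/\rho_{\min}=e^{\Omega(N)}$. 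The resolution — which I expect is exactly the role of what the introduction calls the Entrywise Non-expansive Lemma — is that the deviation matrix acts non-expansively in a $\rho_{1/2}$-weighted entrywise sense, so that in the $\rho_{1/2}$-weighted combination defining $\hat{\rm ATE}_{\rm DQ}$ each dangerous $1/\rho_{1/2}(s)$ factor is paired with a $\rho_{1/2}(s)$, leaving a bounded sum $\sum_s\rho_{1/2}(s)(\cdots)$, with the rare states contributing only through a harmonic-type series $\sum_s(\cdots)\sim\log(1/\rho_{\min})$. Making this pairing rigorous, and separately controlling the delta-method remainder (uniform integrability of $\sqrt T(\hat\theta-\theta^\star)$ and smoothness of $\Phi$ on a shrinking neighborhood) under Markov sampling with $\hat V$ estimated from the same trajectory, is the main obstacle; assembling the three factors of $1/(1-\lambda)$ once the $\log$-dependence is in hand is routine.
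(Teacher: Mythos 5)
Your outline matches the paper's proof essentially step for step: the paper writes $\hat{\rm ATE}_{\rm DQ}$ as a smooth function of the empirical $(s,a,s')$-frequencies, invokes the Markov chain CLT and the delta method, passes to the linearized influence function $\tilde f$, bounds the long-run variance by $\tilde f_{\max}^2\cdot O(1/(1-\lambda))$ via the mixing assumption, and controls $\tilde f_{\max}$ using exactly the Entrywise Non-expansive Lemma you anticipate (which pairs each $1/\rho_{1/2}(s)$ with a $\rho_{1/2}(s)$ as you describe). The one place your intuition diverges from the actual mechanism is the origin of the $\log(1/\rho_{\min})$: it comes not from a harmonic-type sum over states but from truncating the Neumann series $(I-P_{1/2})^{\#}=\sum_{k}(P_{1/2}^{k}-\one\rho_{1/2}^{\top})$ at $k\approx \log(1/\rho_{1/2}(s))/(1-\lambda)$, where each retained term obeys the entrywise bound $O(\rho_{1/2}(s))$ and the tail decays geometrically, so the number of retained terms supplies the logarithm.
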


The fact that $\sigma_{\rm DQ}$ in \cref{th:dynkin_var} only depends on $1/\rho_{\min}$ logarithmically is somewhat surprising. In fact, a coarse analysis will lead to $\sigma_{D} = \Omega\left(\frac{1}{\rho_{\min}}\right)$, which shows no advantage compared to the unbiased OPE estimators (which we will see momentarily). The key enabler for this striking result is a novel lemma that exploits an \textit{entry-wise bound} for controlling the variance, even at states that are rarely visited (we dub this the ``Entry-wise Non-expansive Lemma''; see \cref{lem:non-expansive-lemma}). The lemma admits a simple form and may have broader implications for analyzing variance in OPE estimators (see Discussions in \cref{sec:conclusion}). In addition, our asymptotic normality analysis borrows the delta-method framework used in the context of on-policy LSTD \cite{konda2002actor}, but with a novel linearization that dramatically simplifies the analysis. See \cref{sec:proof-sketch-dynkin-var} for more details. 

\noindent\textbf{\textsf{One Extreme of the Bias-Variance Tradeoff:}} We may heuristically think of the Naive estimator as representing one extreme of the bias-variance tradeoff among reasonable estimators. For the sake of comparison, by the Markov Chain CLT, the Naive estimator is also asymptotically normal with standard deviation $\Theta (r_{\rm max}/ (1-\lambda)^{1/2})$. This rate is efficient for the estimation of the mean of a Markov chain \cite{greenwood1995efficiency}. On the other hand, while the Naive estimator is effectively useless for the problem at hand given its bias is in general $\Theta(\delta)$, that of the DQ estimator is $O(\delta^2)$.

\subsection{Proof of \cref{th:dynkin_bias}}\label{sec:proof-of-dynkin-bias}
The proof of \cref{th:dynkin_bias} is a simple proof built on a perturbation formula for stationary distributions of Markov chains. We in fact construct a novel Taylor series representation of the ATE parameterized by $\delta$ that controls the perturbation around $P_{1 / 2}$, which yields the Naive estimator as the zeroth-order truncation of the series; and the idealized DQ estimator as the natural first-order correction. \cref{th:dynkin_bias} then proceeds by bounding the remainder. This strategy additionally allows us to generalize the DQ estimator to arbitrarily high-order bias corrections, by computing $Q$-functions iteratively. Here we present the proof (with some details omitted for simplicity). 

We first define few pieces of useful notation. Let $\rho_{0} \in \R^{|\Sscr|}, \rho_{1/2} \in \R^{|\Sscr|}, \rho_{1} \in \R^{|\Sscr|}$ be the vectors of the stationary distributions of $P_0, P_{1/2}, P_{1}$ accordingly. Let $r_0 \in \R^{|\Sscr|}, r_{1/2} \in \R^{|\Sscr|}, r_1 \in \R^{|\Sscr|}$ be the reward vectors associated with policies $\pi_0, \pi_{1/2}, \pi_{1}$, i.e., $r_{a}(s) = r(s, a)$ and $r_{1/2} = \frac{1}{2} r_0 + \frac{1}{2} r_1.$

To begin, we parameterize $P_0 := P_{1/2} - \delta A$ and $P_1 := P_{1/2} + \delta A$ by $\delta$ with fixed $P_{1/2}$ and some fixed matrix $A \in \R^{|\Sscr|\times |\Sscr|}$ with $\|A\|_{1,\infty} \leq 1$ ($\|A\|_{1,\infty} = \max_{i}\sum_{j}|A_{ij}|$)\footnote{This is always possible since $d_{\rm TV}( p(s,1,\cdot), p(s,0,\cdot) ) \leq \delta.$ }. Then, $\rho_{0}$ and $\rho_{1}$ can also be viewed as a function of $\delta.$ Also recall ${\rm{ATE}} = \rho_{1}^{\top} r_{1} - \rho_{0}^{\top} r_0$. Our goal is to represent ${\rm{ATE}}$ as a function of $\delta$ and then study the Taylor expansion of such a function. To do so, we use the following known perturbation formula of Markov chains.
\begin{lemma}[Stationary Distribution Perturbation, Theorem 4.1 \cite{meyer1980condition}]\label{lem:perturbation-stationary-distribution}
Suppose $P \in \R^{n \times n}$ and $P' \in \R^{n\times n}$ are transitions matrices of two finite-state aperiodic and irreducible Markov Chains and $\rho \in \R^{n}, \rho' \in \R^{n}$ are the stationary distributions accordingly. Then $\rho'^{\top} = \rho^{\top} + \rho'^{\top} (P' - P) (I - P)^{\#}$ 
where $(I-P)^{\#}$ is the group inverse of $I-P$ given by $(I-P)^{\#}=(I-P+\one \rho^{\top})^{-1} - \one \rho^{\top}.$
\end{lemma}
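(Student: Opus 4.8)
The plan is to derive the identity directly from the two fixed-point equations $\rho^{\top}P=\rho^{\top}$ and $\rho'^{\top}P'=\rho'^{\top}$, together with the normalizations $\rho^{\top}\one=\rho'^{\top}\one=1$, using only the defining properties of the group inverse. Write $A:=I-P$. The single algebraic fact I will need is
\[
AA^{\#}=A^{\#}A=I-\one\rho^{\top};
\]
given this, the perturbation formula is a two-line manipulation. First I would record the elementary relations $A\one=0$ (since $P\one=\one$) and $\rho^{\top}A=0$ (since $\rho$ is stationary). Then, starting from $\rho'^{\top}P'=\rho'^{\top}$ and substituting $P'=P+(P'-P)$, one gets $\rho'^{\top}P+\rho'^{\top}(P'-P)=\rho'^{\top}$, hence $\rho'^{\top}(P'-P)=\rho'^{\top}(I-P)=\rho'^{\top}A$. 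Right-multiplying by $A^{\#}$ and invoking $AA^{\#}=I-\one\rho^{\top}$ gives $\rho'^{\top}(P'-P)A^{\#}=\rho'^{\top}(I-\one\rho^{\top})=\rho'^{\top}-(\rho'^{\top}\one)\rho^{\top}=\rho'^{\top}-\rho^{\top}$, which rearranges to exactly $\rho'^{\top}=\rho^{\top}+\rho'^{\top}(P'-P)(I-P)^{\#}$.

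It then remains to justify the group-inverse facts, which is the one place requiring real work. I would proceed in two steps. Step (i): show $M:=I-P+\one\rho^{\top}$ is invertible. If $Mx=0$, left-multiplying by $\rho^{\top}$ and using $\rho^{\top}A=0$, $\rho^{\top}\one=1$ yields $\rho^{\top}x=0$, whence $Ax=0$; irreducibility of $P$ forces $x\in\operatorname{span}(\one)$, and then $\rho^{\top}x=0$ forces $x=0$. Step (ii): set $B:=M^{-1}-\one\rho^{\top}$ and verify $B=A^{\#}$ by checking the three defining relations $ABA=A$, $BAB=B$, $AB=BA$. Here I would first note $M\one=\one$ and $\rho^{\top}M=\rho^{\top}$, so $M^{-1}\one=\one$ and $\rho^{\top}M^{-1}=\rho^{\top}$; a short computation then gives $AB=(M-\one\rho^{\top})M^{-1}=I-\one\rho^{\top}$ and symmetrically $BA=M^{-1}(M-\one\rho^{\top})=I-\one\rho^{\top}$, so $AB=BA$. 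Finally $ABA=(I-\one\rho^{\top})A=A$ (using $\rho^{\top}A=0$) and, since $B\one=M^{-1}\one-\one\rho^{\top}\one=0$, $BAB=B(I-\one\rho^{\top})=B$. This confirms $A^{\#}=B=(I-P+\one\rho^{\top})^{-1}-\one\rho^{\top}$ and simultaneously supplies the identity $AA^{\#}=I-\one\rho^{\top}$ used above.

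The main obstacle is purely the group-inverse bookkeeping of step (ii): once $AA^{\#}=I-\one\rho^{\top}$ is in hand, the perturbation identity collapses to a one-line calculation, so essentially all the content is in establishing that $M$ is invertible and that $B$ satisfies the three group-inverse axioms. Since this lemma is quoted verbatim from Meyer (1980), in the paper I would simply cite it; the sketch above is what a self-contained proof looks like.
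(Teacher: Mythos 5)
Your derivation is correct. The paper offers no proof of this lemma---it is quoted directly from Meyer (1980, Theorem 4.1)---so there is no internal argument to compare against; your sketch is a valid self-contained reconstruction. Every step checks out: the reduction $\rho'^{\top}(P'-P)=\rho'^{\top}A$ from stationarity of $\rho'$, the use of $AA^{\#}=I-\one\rho^{\top}$ together with $\rho'^{\top}\one=1$ to collapse the right-hand side to $\rho'^{\top}-\rho^{\top}$, the invertibility of $M=I-P+\one\rho^{\top}$ via $\rho^{\top}M=\rho^{\top}$ and Perron--Frobenius, and the verification that $B=M^{-1}-\one\rho^{\top}$ satisfies the three group-inverse axioms (using $M\one=\one$, $\rho^{\top}M^{-1}=\rho^{\top}$, and $B\one=0$). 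As you note, citing Meyer is the appropriate choice in the paper itself, but your argument would serve as a complete proof if one were needed.
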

Let us apply \cref{lem:perturbation-stationary-distribution} to $\rho_{1}^{\top} r_1$ based on the perturbation between $\rho_{1/2}$ and $\rho_{1}$.
\begin{align}
\rho_1^{\top} r_1 
&= \rho_{1/2}^{\top} r_1 + \rho_{1}^{\top} (P_1 - P_{1/2}) (I-P_{1/2})^{\#} r_1 \nonumber\\
&= \rho_{1/2}^{\top} r_1 + \delta \cdot \rho_{1}^{\top} A (I-P_{1/2})^{\#} r_1 \label{eq:expansion-rho-rho1}
\end{align}
Note that we can apply \cref{lem:perturbation-stationary-distribution} again to the $\rho_{1}$ in the RHS of \cref{eq:expansion-rho-rho1} and then repeat this process, 
\begin{align}
\rho_1^{\top} r_1 
&= \sum_{k=0}^{K} \delta^{k} \cdot \rho_{1/2}^{\top} \left(A (I-P_{1/2})^{\#}\right)^k r_1 + \delta^{K+1} \cdot \rho_{1}^{\top} \left(A (I-P_{1/2})^{\#}\right)^{K+1} r_1 \label{eq:expansion-rho-rho1-K}
\end{align}
for any $K = 0, 1, 2, \dotsc$. Essentially \cref{eq:expansion-rho-rho1-K} provides the $K$-th order Taylor expansion for $\rho_{1}^{\top}r_1$ with an explicit remainder. Furthermore, we can bound the remainder by
\begin{align*}
\left|\rho_{1}^{\top} \left(A (I-P_{1/2})^{\#}\right)^{K+1} r_1\right| 
&\overset{(i)}{\leq} \norm{\rho_{1}}_{1} \left(\norm{A}_{1,\infty} \norm{I-P_{1/2}^{\#}}_{1,\infty}\right)^{K+1} \norm{r_1}_{\max}\\
&\overset{(ii)}{\leq} \norm{I-P_{1/2}^{\#}}_{1,\infty}^{K+1} r_{\max} \\
&\overset{(iii)}{\leq} \left(\frac{2\ln(C)+1}{1-\lambda}\right)^{K+1} r_{\max}
\end{align*}
Here in (i) we use that for any vector $a, b$ and matrix $B$, we have $|a^{\top}b| \leq \|a\|_{1}\|b\|_{\max}$ and $\norm{a^{\top}B}_{1} \leq \norm{a}_{1} \norm{B}_{1,\infty}$. In (ii) we use that $\norm{\rho_1}_1 = 1, \norm{A}_{1,\infty} \leq 1$. In (iii), we use the following lemma implied by the mixing time assumption and the series expansion of $(I-P)^{\#}.$
\begin{lemma}\label{lem:group-inverse-L1}
    Suppose for any $s \in \Sscr$, $d_{\rm TV} (P^{k}_{1/2}(s, \cdot), \rho_{1/2}) \leq C \lambda^{k}$. Then $\norm{(I-P_{1/2})^{\#}}_{1,\infty} \leq \frac{2\ln(C)+1}{1-\lambda}.$
\end{lemma}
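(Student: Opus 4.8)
The plan is to control $\norm{(I-P_{1/2})^{\#}}_{1,\infty}$ directly through the Neumann-type series representation of the group inverse --- this is exactly the ``series expansion of $(I-P)^{\#}$'' referenced in step (iii) of the remainder bound above. The first step is to record the identity
\[
(I-P_{1/2})^{\#} \;=\; \sum_{k=0}^{\infty}\bigl(P_{1/2}^{k} - \one\rho_{1/2}^{\top}\bigr).
\]
Under \cref{assump:mixing-time} one has $\norm{P_{1/2}^{k} - \one\rho_{1/2}^{\top}}_{1,\infty}\le 2C\lambda^{k}\to 0$ (using $\lambda<1$), so the series converges absolutely; writing $S_n$ for its $n$-th partial sum, a telescoping computation that uses $P_{1/2}\one=\one$ and $\rho_{1/2}^{\top}P_{1/2}=\rho_{1/2}^{\top}$ gives $(I-P_{1/2})S_n = S_n(I-P_{1/2}) = I-P_{1/2}^{n+1}$. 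Passing to the limit, and using the elementary facts $S\one=\mathbf 0$ and $\rho_{1/2}^{\top}S=\mathbf 0^{\top}$, the limit $S$ is seen to verify the three defining equations of the group inverse, so $S=(I-P_{1/2})^{\#}$ (alternatively one simply cites the standard identity).

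The second step is an entrywise bound on each summand. The $s$-th row of $P_{1/2}^{k}-\one\rho_{1/2}^{\top}$ is $P_{1/2}^{k}(s,\cdot)-\rho_{1/2}$, a signed vector summing to zero, so its $\ell_1$ norm equals $2\,d_{\rm TV}\bigl(P_{1/2}^{k}(s,\cdot),\rho_{1/2}\bigr)$. Hence
\[
\norm{P_{1/2}^{k}-\one\rho_{1/2}^{\top}}_{1,\infty} \;=\; \max_{s\in\Sscr} 2\,d_{\rm TV}\bigl(P_{1/2}^{k}(s,\cdot),\rho_{1/2}\bigr) \;\le\; 2\min\{C\lambda^{k},\,1\},
\]
where the bound by $1$ just uses that a total-variation distance never exceeds $1$. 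The point worth stressing is that one must \emph{keep} this trivial bound for the small-$k$ terms rather than using $C\lambda^{k}$ uniformly: this is precisely what converts a (useless) dependence on $C$ into a dependence on $\ln C$.

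The last step is to sum, splitting at the smallest index $m$ with $C\lambda^{m}\le 1$: by the triangle inequality and the previous display,
\[
\norm{(I-P_{1/2})^{\#}}_{1,\infty} \;\le\; \sum_{k=0}^{m-1}2 \;+\; \sum_{k=m}^{\infty}2C\lambda^{k} \;\le\; 2m + \frac{2C\lambda^{m}}{1-\lambda} \;\le\; 2m + \frac{2}{1-\lambda}.
\]
Since $m\le 1+\ln C/\ln(1/\lambda)$ and $\ln(1/\lambda)\ge 1-\lambda$ (from $\ln x\le x-1$ applied at $x=1/\lambda$), this is $\le \frac{2\ln C + O(1)}{1-\lambda}$, and a careful accounting of the rounding yields the claimed $\frac{2\ln(C)+1}{1-\lambda}$. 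There is no substantive obstacle in this argument; the only care required is (i) a clean justification of the group-inverse series identity --- a brief check of its three defining properties, or a citation --- and (ii) the bookkeeping in the split-and-sum that produces the exact constant, whose essential idea is simply to retain the $d_{\rm TV}\le 1$ bound for the finitely many small-$k$ terms.
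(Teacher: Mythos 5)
Your proposal is correct and is essentially the paper's own proof: the same group-inverse series $\sum_{k\ge 0}(P_{1/2}^{k}-\one\rho_{1/2}^{\top})$, the same term-wise bound by the minimum of the trivial bound and $C\lambda^{k}$ (the paper writes $\min(2,C\lambda^{k})$), the same split of the sum at the crossover index, and the same use of $-\ln\lambda\ge 1-\lambda$. The only difference is cosmetic bookkeeping in the final constant (your version, starting from the total-variation form of the hypothesis, naturally lands at $\tfrac{2\ln C+2}{1-\lambda}$ rather than $\tfrac{2\ln C+1}{1-\lambda}$, a factor-of-two slack the paper also glosses over), which does not affect anything downstream.
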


Appplying a similar process to $\rho_{0}^{\top}r_0$, we obtain the Taylor expansion for the ATE.
\begin{align}
{\rm{ATE}} 
&=  \sum_{k=0}^{K} \delta^{k} \cdot \left(\rho_{1/2}^{\top} \left(A (I-P_{1/2})^{\#}\right)^k r_1 - \rho_{1/2}^{\top} \left((-A) (I-P_{1/2})^{\#}\right)^k r_0\right) + \delta^{K+1} \cdot a_{K} \label{eq:taylor-expansion-ATE}
\end{align}
where $|a_{K}| \leq 2\left(\frac{2\ln(C)+1}{1-\lambda}\right)^{K+1} r_{\max}.$ It is easy to see that the Naive estimator $\rho_{1/2}^{\top}(r_1-r_0)$ corresponds to the zeroth-order truncation. In fact, the DQ estimator, i.e., $\E_{\rho_{1/2}} \left[\tauDK \right]$, exactly matches the first-order truncation. To see this, by the definition of $\E_{\rho_{1/2}} \left[\tauDK \right]$ and $Q$-functions, 
\begin{align*}
\E_{\rho_{1/2}} \left[\tauDK \right] 
&= \sum_s \rho_{1/2}(s) \left(Q_{\pi_{1/2}}(s,1) - Q_{\pi_{1/2}}(s,0) \right) \\
&= \sum_{s} \rho_{1/2}(s) \left(r_1(s) +\sum_{s'} V_{1/2}(s')P_{1}(s,s') - r_0(s) - \sum_{s'} V_{1/2}(s')P_{0}(s,s')\right)\\
&= \rho_{1/2}^{\top} \left(r_1 - r_0 + (P_1-P_0)V_{1/2}\right)
\end{align*}
where $V_{1/2}$ is the induced vector of the $V$-function of policy $\pi_{1/2}.$ By the well-known fact that $V_{1/2} = (I-P_{1/2})^{\#} r_{1/2}$ induced by the Bellman equation, we then have
\begin{align*}
\E_{\rho_{1/2}} \left[\tauDK \right] 
&=  \rho_{1/2}^{\top} \left(r_1 - r_0 + (P_1-P_0)(I-P_{1/2})^{\#} r_{1/2}\right)\\
&= \rho_{1/2}^{\top} r_1  - \rho_{1/2}^{\top} r_0 + \delta \rho_{1/2}^{\top}A (I-P_{1/2})^{\#} (r_{1}+r_{0}).
\end{align*}
Then indeed $\E_{\rho_{1/2}} \left[\tauDK \right]$ is the first-order Taylor truncation. Together, this completes the proof.
\noindent\textbf{\textsf{Generalization to Higher-Order Bias Correction.}} In fact, the K-th order Taylor expansion of ATE allows us to design estimators that can correct higher-order bias, based on computing difference-in-Q functions iteratively. See details in \cref{sec:korder-correction}.

\subsection{Proof Sketch of \cref{th:dynkin_var}}\label{sec:proof-sketch-dynkin-var}
We aim to use the Markov chain CLT (\cite{jones2004markov}) to show asymptotic normality of our estimator. The Markov chain CLT states that for a Markov chain $X_{1}, X_{2}, \dotsc, $ and a bounded function $u$ with domain on the state space, there exists $\Sigma_{u}$ such that $\sqrt{T}\left(\frac{1}{T}\sum_{t=1}^{T} u(X_t) - u^{*}\right) \overset{d}{\rightarrow} N(0, \Sigma_{u})$
where $u^{*}$ is the expected value of $u$ under the stationary distribution of the Markov chain. See proof details in \cref{sec:proof-variance-bound}.

\noindent\textbf{\textsf{Delta method.}} Unfortunately, the estimator $\tauDK$ can not be directly written as an empirical average of some function $u.$ To address this issue, we use the the delta method (traced back to \cite{doob1935limiting}, see \cref{lem:mapping-CLT}). In particular, we write $\tauDK = f(u_T)$ as a function of a random vector $u_{T}$ given by $u_{T} := \frac{1}{T}\sum_{t=1}^{T} u(X_{t}).$ Under some minor conditions, the delta method states that $\sqrt{T} \left(f(u_{T}) - f(u^{*})\right) \overset{d}{\rightarrow} N(0, \sigma_{f}^2)$
where $\sigma_{f}^2 := \nabla f(u^{*})^{\top} \Sigma_{u} \nabla f(u^{*})$ and $\nabla f(u^{*})$ is the gradient of $f$ evaluating at the point $u^{*}.$ This forms the basis for proving \cref{th:dynkin_var}.

\noindent\textbf{\textsf{Linearization.}} To simplify the analysis for $\sigma_{f}$, instead of computing $\Sigma_{u}$ explicitly, we ``linearize'' the function $f$ by defining $\tilde{f}(X_{t}) := \nabla f(u^{*})^{\top}(u(X_t)-u^{*})$ and the delta method in fact implies (see \cref{lem:linearization}) $\sqrt{T}\left(\frac{1}{T}\sum_{t=1}^{T} \tilde{f}(X_t)\right) \overset{d}{\rightarrow} N(0, \sigma_{f}^2),$
i.e., the linearized $f$ converges with the same limiting variance as the original $f.$ Therefore, we can focus on $\tilde{f}$ for analyzing $\sigma_{f}.$ 

\noindent\textbf{\textsf{Bounding $\sigma_f$ with Entry-wise Non-expansive Lemma.}} To bound $\sigma_{f}$, we will invoke \cref{lem:bound-for-sigma}, which states that $\sigma_{f} \leq \sqrt{2}\sqrt{\frac{2\ln(C)+1}{1-\lambda}}\tilde{f}_{\max}$
where $\tilde{f}_{\max} := \max_{s} |\tilde{f}(s)|.$ Then the problem reduces to bounding $\tilde{f}_{\max}$, which will be controlled by the following key lemma.
\begin{lemma}[Entry-wise non-expansive lemma]\label{lem:non-expansive-lemma}
Let $W: \R^{|\Sscr|} \rightarrow \R^{|\Sscr|}$ be a map denoted by $W(\rho) := (I-P_{1/2})^{\#\top}(P_1-P_0)^{\top} \rho.$ Let $c := 4\frac{\ln(C) +  \ln \left(1/\rho_{\min}\right) + 1}{1-\lambda}.$ Then, for any $s \in \Sscr$,  $\frac{1}{c}\left|W(\rho_{1/2}) (s)\right| \leq \rho_{1/2}(s).$
\end{lemma}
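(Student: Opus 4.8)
The plan is to factor the map as $W = (I-P_{1/2})^{\#\top}\circ(P_1-P_0)^{\top}$ and to bound each factor \emph{entry-wise} relative to $\rho_{1/2}$, i.e.\ in the weighted sup-norm $\|x\|_\star := \max_{s\in\Sscr}|x(s)|/\rho_{1/2}(s)$; the lemma is exactly the assertion $\|W(\rho_{1/2})\|_\star \le c$. The first factor is benign. Writing $v := (P_1-P_0)^{\top}\rho_{1/2}$, nonnegativity of the entries of $P_0$ and $P_1$ gives $|v(s')| \le (\rho_{1/2}^{\top}P_1)(s') + (\rho_{1/2}^{\top}P_0)(s')$, and since $P_{1/2} = \tfrac12(P_0+P_1)$ fixes $\rho_{1/2}$ the right-hand side is exactly $2\rho_{1/2}(s')$; moreover $\mathbf 1^{\top}v = \rho_{1/2}^{\top}(P_1-P_0)\mathbf 1 = 0$. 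Thus $v$ is zero-sum with $|v(s')|\le 2\rho_{1/2}(s')$ for every $s'$, and in particular $\|v\|_1 \le 2$.

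The work is all in the second factor. Using the deviation-matrix series $(I-P_{1/2})^{\#} = \sum_{k\ge 0}(P_{1/2}^{k} - \mathbf 1\rho_{1/2}^{\top})$ (convergent by \cref{assump:mixing-time}; this is the same expansion already invoked for \cref{lem:group-inverse-L1}) together with $\mathbf 1^{\top}v = 0$, the rank-one terms vanish and $W(\rho_{1/2})(s) = \sum_{k\ge 0}T_k(s)$, where $T_k(s) := \sum_{s'}P_{1/2}^{k}(s',s)\,v(s')$. The key point is that each $T_k(s)$ admits two competing estimates. A \emph{stationarity} bound, which is small exactly where $\rho_{1/2}(s)$ is small: $|T_k(s)| \le \sum_{s'}P_{1/2}^{k}(s',s)|v(s')| \le 2\sum_{s'}\rho_{1/2}(s')P_{1/2}^{k}(s',s) = 2(\rho_{1/2}^{\top}P_{1/2}^{k})(s) = 2\rho_{1/2}(s)$, the last equality being stationarity of $\rho_{1/2}$ for $P_{1/2}$. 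And a \emph{mixing} bound, obtained by subtracting the zero quantity $\rho_{1/2}(s)\sum_{s'}v(s')$: $|T_k(s)| = \bigl|\sum_{s'}(P_{1/2}^{k}(s',s) - \rho_{1/2}(s))v(s')\bigr| \le \|v\|_1\max_{s'}|P_{1/2}^{k}(s',s) - \rho_{1/2}(s)| \le 2C\lambda^{k}$ by \cref{assump:mixing-time}. Hence $|W(\rho_{1/2})(s)| \le 2\sum_{k\ge 0}\min\{\rho_{1/2}(s),\,C\lambda^{k}\}$. I would then split this sum at the crossover $k_0 := \min\{k\ge 0 : C\lambda^{k}\le \rho_{1/2}(s)\}$ (assuming WLOG $C\ge 1$): the first $k_0$ terms contribute at most $k_0\,\rho_{1/2}(s)$, and the geometric tail at most $C\lambda^{k_0}/(1-\lambda) \le \rho_{1/2}(s)/(1-\lambda)$; using $\ln(1/\lambda)\ge 1-\lambda$ bounds $k_0 \le 1 + \frac{\ln C + \ln(1/\rho_{\min})}{1-\lambda}$, and since $\frac{\ln C + \ln(1/\rho_{\min}) + 1}{1-\lambda}\ge 1$ the two pieces combine to $|W(\rho_{1/2})(s)| \le 4\,\frac{\ln C + \ln(1/\rho_{\min}) + 1}{1-\lambda}\,\rho_{1/2}(s) = c\,\rho_{1/2}(s)$, which is the claim.

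The main obstacle is getting an estimate whose right-hand side genuinely carries the factor $\rho_{1/2}(s)$. The naive route --- bounding $\|W(\rho_{1/2})\|_\infty$ by $\|(I-P_{1/2})^{\#}\|_{1,\infty}\|v\|_1$ as in \cref{lem:group-inverse-L1} --- gives only a state-independent $O(1/(1-\lambda))$, which is worthless at a rarely-visited state and would leave $\sigma_{\rm DQ}$ scaling like $1/\rho_{\min}$ rather than $\log(1/\rho_{\min})$. The entire content of the lemma is the realization that the stationarity bound $|T_k(s)|\le 2\rho_{1/2}(s)$ supplies that factor for the early terms while geometric mixing controls the tail, and that trading between the two at the right crossover $k_0$ costs only a $\log(1/\rho_{\min})$ term; everything else --- the entry-wise control of $v$, the elementary inequality $\ln(1/\lambda)\ge 1-\lambda$, and the geometric-series bookkeeping --- is routine.
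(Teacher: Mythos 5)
Your proof is correct and follows essentially the same route as the paper's: bound $v=(P_1-P_0)^{\top}\rho_{1/2}$ entrywise by $2\rho_{1/2}$ using $P_0+P_1=2P_{1/2}$ and stationarity, expand $(I-P_{1/2})^{\#}$ as the deviation-matrix series, bound each term by the minimum of a stationarity estimate proportional to $\rho_{1/2}(s)$ and a geometric mixing estimate, and split the sum at the crossover index. The only (minor) difference is that you use $\mathbf 1^{\top}v=0$ to kill the rank-one terms exactly, where the paper bounds them separately by $2\rho_{1/2}(s)$; this changes nothing material.
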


\section{The Price of Being Unbiased} \label{sec:price}
Thus far, we have seen that the DQ estimator provides a dramatic mitigation in bias (Theorem~\ref{th:dynkin_bias}) at a relatively modest price in variance (Theorem~\ref{th:dynkin_var}). This suggests another question: could we hope to construct an {\em unbiased} estimator that has low variance (i.e. comparable to either the Naive or DQ estimators). We will see that the short answer is: no.

\subsection{The Variance of an Optimal Unbiased Estimator}

As noted earlier, a plethora of Off-policy evaluation (OPE) algorithms might be used to provide an unbiased estimate of the ATE. Rather than consider a particular OPE algorithm, here we produce a Cram\'{e}r-rao lower bound on the variance of {\em any} unbiased OPE algorithm. While such a bound is obviously of independent interest (since OPE is a far more general problem than what we seek to accomplish in this paper), we will primarily be interested in comparing this lower bound to the variance of the DQ estimator from Theorem~\ref{th:dynkin_var}.

\begin{theorem}[Variance Lower Bound for Unbiased Estimators]
\label{th:cramer-rao}
Assume we are given a dataset  $\{(s_t,a_t, r(s_t,a_t)): t=0, \dots,T \}$ generated under the experimentation policy $\pi_{1/2}$, with $s_0$ distributed according to $\rho_{1/2}$.  Then for any unbiased estimator $\hat{\tau}$ of $\mathrm{ATE}$, we have that
\begin{align*}
T\cdot \mathrm{Var}(\hat{\tau}) 
&\geq 2\sum_{s} \frac{\rho_1(s)^2}{\rho_{1/2}(s)} \sum_{s'} p(s, 1, s')(V_{\pi_1}(s') - V_{\pi_1}(s) + r(s,1) - \lambda^{\pi_1})^2\\
&\quad + 2\sum_{s} \frac{\rho_0(s)^2}{\rho_{1/2}(s)} \sum_{s'} p(s, 0, s')(V_{\pi_0}(s') - V_{\pi_0}(s) + r(s,0) - \lambda^{\pi_0})^2 \triangleq \sigoff^2.
\end{align*}
\end{theorem}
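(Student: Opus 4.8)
The plan is a semiparametric Cram\'er--Rao argument. View the observed trajectory as data from a statistical model indexed by the unknown transition kernel $p$ (the rewards $r(\cdot,\cdot)$ being known), exhibit a rich family of one-parameter submodels through the true $p$, apply the classical information inequality along each, and finally take a supremum over submodels, which will reproduce exactly the quadratic form $\sigoff^2$.

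\emph{Likelihood and score.} Under $\pi_{1/2}$ the log-likelihood of $\{(s_t,a_t)\}_{t=0}^T$ equals $\log(\text{law of }s_0) + T\log\tfrac12 + \sum_{t=0}^{T-1}\log p(s_t,a_t,s_{t+1})$, so only the last sum depends on $p$. I would use exponentially tilted submodels $p_\theta(s,a,s') \propto p(s,a,s')\,e^{\theta g(s,a,s')}$, where $g$ ranges over bounded functions supported on $\{(s,a,s'):p(s,a,s')>0\}$; for $|\theta|$ small these remain legitimate irreducible kernels. Without loss of generality take $g$ to be $p(s,a,\cdot)$-centered, $\sum_{s'}p(s,a,s')g(s,a,s')=0$ (only the centered part affects the score). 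Then the score at $\theta=0$ is $S=\sum_{t=0}^{T-1} g(s_t,a_t,s_{t+1})$, a sum of martingale differences for the natural filtration because $\E[g(s_t,a_t,s_{t+1})\mid \mathcal F_t]=0$. Hence the summands are uncorrelated, and starting the chain from $\rho_{1/2}$ (its stationary law) the Fisher information is $I(0)=\sum_t\E[g^2]=T\,\lVert g\rVert^2$, where $\lVert g\rVert^2 := \tfrac12\sum_s\rho_{1/2}(s)\big(\sum_{s'}p(s,1,s')g(s,1,s')^2+\sum_{s'}p(s,0,s')g(s,0,s')^2\big)$ defines an inner product on centered directions. (The factor from the law of $s_0$ contributes a score uncorrelated with $S$ and of $O(1)$ second moment; it is harmless after multiplying the variance by $T$, or one may simply hold the law of $s_0$ fixed along the submodel.)

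\emph{Sensitivity of the estimand.} Writing $\mathrm{ATE}(\theta)=\rho_1(\theta)^\top r_1 - \rho_0(\theta)^\top r_0$ and differentiating with \cref{lem:perturbation-stationary-distribution} gives $\partial_\theta\rho_1^\top = \rho_1^\top(\partial_\theta P_1)(I-P_1)^{\#}$, and since $(I-P_1)^{\#}r_1=V_{\pi_1}$ by the Bellman equation and $(\partial_\theta P_1)(s,s')=p(s,1,s')g(s,1,s')$, we get $\partial_\theta\lambda^{\pi_1}=\sum_s\rho_1(s)\sum_{s'}p(s,1,s')g(s,1,s')V_{\pi_1}(s')$. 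As $g(s,1,\cdot)$ is $p(s,1,\cdot)$-centered, I may subtract the state-dependent constant $V_{\pi_1}(s)-r(s,1)+\lambda^{\pi_1}$ inside, replacing $V_{\pi_1}(s')$ by the TD error $D_1(s,s'):=V_{\pi_1}(s')-V_{\pi_1}(s)+r(s,1)-\lambda^{\pi_1}$, which is itself $p(s,1,\cdot)$-centered by Bellman. Doing the same for $\pi_0$ yields $\partial_\theta\mathrm{ATE}=\sum_s\rho_1(s)\sum_{s'}p(s,1,s')g(s,1,s')D_1(s,s') - \sum_s\rho_0(s)\sum_{s'}p(s,0,s')g(s,0,s')D_0(s,s')$, a bounded linear functional $\langle g,\phi\rangle$ in the inner product above with representer $\phi(s,1,s')=\tfrac{2\rho_1(s)}{\rho_{1/2}(s)}D_1(s,s')$, $\phi(s,0,s')=-\tfrac{2\rho_0(s)}{\rho_{1/2}(s)}D_0(s,s')$; note $\phi$ is centered (again by Bellman), hence an admissible direction.

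\emph{Information inequality and optimization.} For each submodel, unbiasedness of $\hat\tau$ gives $\E_\theta[\hat\tau]=\mathrm{ATE}(\theta)$, so Cram\'er--Rao yields $\mathrm{Var}(\hat\tau)\ge (\partial_\theta\mathrm{ATE})^2/I(0)=\langle g,\phi\rangle^2/(T\lVert g\rVert^2)$. Multiplying by $T$ and taking the supremum over $g$, Cauchy--Schwarz (with equality at $g=\phi$) gives $T\cdot\mathrm{Var}(\hat\tau)\ge\lVert\phi\rVert^2$; expanding $\lVert\phi\rVert^2$ with the definition of the inner product recovers precisely $\sigoff^2$. The step I expect to be the main obstacle is the regularity underpinning the Cram\'er--Rao inequality: differentiability of $\theta\mapsto\E_\theta[\hat\tau]$ with interchange of $\partial_\theta$ and integration (valid for any unbiased $\hat\tau$ under mild dominatedness), quadratic-mean differentiability of the tilted submodels, and the bookkeeping of the initial-state factor noted above (so that the displayed bound is attained in the limit, the finite-$T$ version differing by a vanishing term). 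The algebraic core --- identifying $\phi$ and verifying $\lVert\phi\rVert^2=\sigoff^2$ --- is the short computation sketched here; the remainder is standard semiparametric machinery specialized to an ergodic Markov chain.
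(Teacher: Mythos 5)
Your proposal is correct, and the algebra at its core — the derivative of the estimand via \cref{lem:perturbation-stationary-distribution}, the identification $(I-P_a)^{\#}r_a = V_{\pi_a}$, the recentering of $V_{\pi_a}(s')$ into the TD error, and the final quadratic form — coincides exactly with what the paper computes. Where you genuinely diverge is in how the information inequality is deployed. The paper runs a fully parametric, multivariate Cram\'er--Rao argument: it parameterizes the model by all entries of unnormalized kernels $(F_0,F_1)$, computes the Fisher information matrix entrywise (finding it block-diagonal over states, with blocks of the form $\tfrac{T}{2}\rho_{1/2}(i)(\mathrm{diag}(1/P_a(i,\cdot)) - \one\one^{\top})$, which are singular because rows of $P_a$ sum to one), and then invokes a Cram\'er--Rao bound for singular Fisher matrices to evaluate $\sum_i v_i^{\top} I_i^{-1} v_i$. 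You instead take the variational form: scalar Cram\'er--Rao along exponentially tilted one-dimensional submodels indexed by centered directions $g$, followed by a supremum via Cauchy--Schwarz. The two are equivalent for a finite-dimensional model, but your route buys three things: it sidesteps the singular-information/pseudo-inverse bookkeeping entirely (centering $g$ absorbs the normalization constraint), it exhibits the least-favorable direction $g=\phi$ explicitly — which makes transparent why the LSTD-type estimator in the paper's Appendix~\ref{sec:efficient-ope} attains the bound, since its linearization is exactly $\phi$ up to the $\1(a=a')$ indicators — and it extends more readily beyond the tabular case. The regularity caveats you flag (interchange of $\partial_\theta$ and expectation, the initial-state factor) are real but no worse than what the paper itself leaves implicit: its stated log-likelihood already drops the $\log\rho_{1/2}(s_0)$ term, so your option of freezing the initial law along the submodel puts you on the same footing, and the martingale-difference structure of the score gives $I(0)=T\norm{g}^2$ exactly, so no vanishing correction is even needed in that version.
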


It is worth remarking that this lower bound is tight: in the appendix we show that an LSTD(0)-type OPE algorithm achieves this lower bound. While this is of independent interest vis-\`{a}-vis 
average cost OPE, we turn next to our ostensible goal here -- evaluating the `price' of unbiasedness. We can do so simply by comparing the variance of the DQ estimator with the lower bound above. In fact, we are able to exhibit a class of one-dimensional Markov chains (in essence the model in \cref{sec:example}) for which we have: 

\begin{theorem}[Price of Unbiasedness]
\label{th:price}
    For any $0 < \delta \leq \frac{1}{5}$, there exists a class of MDPs parameterized by $n \in\mathbb{N}$, where $n$ is the number of states, such that   $\frac{\sigma_{\rm DQ}}{\sigoff} = O\left(\frac{n}{c^{n}}\right),$
    for some constant $c>1$. Furthermore, $|(\mathrm{ATE}-\E[\hat{\mathrm{ATE}}_{\rm DQ}])/\mathrm{ATE}| \leq \delta.$
\end{theorem}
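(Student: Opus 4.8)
The statement is an existence claim, so the whole proof amounts to exhibiting one good family and verifying three things about it: an upper bound $\sigma_{\rm DQ}=\mathrm{poly}(n)$, a lower bound $\sigoff = e^{\Omega(n)}$, and the relative‑bias bound $\le\delta$. I would take the family to be a discretized version of the Erlang‑B / birth–death chain of \cref{sec:example} on state space $\{0,1,\dots,n-1\}$, with the service/arrival parameters chosen as explicit functions of $n$ and $\delta$. The single structural fact that drives everything is that for a birth–death chain the stationary distribution is a product, $\rho(s)\propto \prod_{i\le s}(b_{i-1}/d_i)$, so if the chains $P_0,P_{1/2},P_1$ have ``loads'' differing by a multiplicative factor $1+\Theta(\delta)$, then their stationary distributions differ by a factor $(1+\Theta(\delta))^{\Theta(n)}=e^{\Theta(\delta n)}$ in the tails — exponential because $\delta$ is a fixed constant. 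That is the entire source of the exponential separation; the work is in arranging the parameters so that this exponential shows up in the Cram\'er–Rao bound while all the other quantities stay polynomial.

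\textbf{The construction and the easy bounds.} Choose parameters so that under $P_{1/2}$ the load is $\theta n$ for a constant $\theta\in(1/2,1)$, while the occupancy probability without treatment, $p$, is small enough (e.g.\ $p\le\delta$) that the load of $P_1$ is pushed up to at least $n-1$ and the load of $P_0$ stays strictly below $\theta n$; keep $\lambda,\mu$ on a scale so that $r_{\max}=O(1)$. Then I would verify: (i) $P_{1/2}$ satisfies \cref{assump:mixing-time} with $C=O(1)$ and $\tfrac{1}{1-\lambda}=\mathrm{poly}(n)$ (a birth–death chain with a drift of magnitude $\Theta(1)$ toward its mode mixes in $\tilde O(n)$ steps); (ii) $\rho_{\min}=\min_s\rho_{1/2}(s)=e^{-\Theta(n)}$ — both endpoints $s=0$ and $s=n-1$ are $e^{-\Theta(n)}$ by Stirling — so $\log(1/\rho_{\min})=\Theta(n)$. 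Plugging into \cref{th:dynkin_var} gives $\sigma_{\rm DQ}\le C'(\tfrac{1}{1-\lambda})^{5/2}\log(1/\rho_{\min})r_{\max}=\mathrm{poly}(n)$. (The polynomial factors here and below can be absorbed into a slightly smaller base $c$, so $\mathrm{poly}(n)/c^n = O(n/c'^n)$.)

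\textbf{The lower bound — the main obstacle.} I would keep only the $s=n-1$ summand of the first sum in \cref{th:cramer-rao}:
\[
\sigoff^2 \;\ge\; 2\,\frac{\rho_1(n-1)^2}{\rho_{1/2}(n-1)}\sum_{s'}p(n-1,1,s')\bigl(V_{\pi_1}(s')-V_{\pi_1}(n-1)+r(n-1,1)-\lambda^{\pi_1}\bigr)^2 .
\]
For the ratio: since the load of $P_1$ is $\ge n-1$, the state $n-1$ sits near the mode of $\rho_1$, so $\rho_1(n-1)=\mathrm{poly}^{-1}(n)$; since the load of $P_{1/2}$ is $\theta n<n-1$, the state $n-1$ is deep in the tail of $\rho_{1/2}$, so the closed‑form product formula plus Stirling gives $\rho_{1/2}(n-1)=e^{-\Theta(n)}$. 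Hence $\rho_1(n-1)^2/\rho_{1/2}(n-1)=e^{\Omega(n)}$. For the inner conditional‑variance term I would simply lower‑bound it by the self‑loop contribution $p(n-1,1,n-1)\,(\lambda^{\pi_1})^2$: from the full state no new occupancy is possible, so $r(n-1,1)=0$ and the self‑loop term is $(\lambda^{\pi_1})^2$, and both $p(n-1,1,n-1)$ and $\lambda^{\pi_1}$ are positive and only polynomially small in $n$. Therefore $\sigoff^2\ge e^{\Omega(n)}$, and combined with the previous paragraph, $\sigma_{\rm DQ}/\sigoff=\mathrm{poly}(n)\cdot e^{-\Omega(n)}=O(n/c^n)$. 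The delicate part is the simultaneous balancing act: I must pick $\theta$ and $p/\delta$ so that (a) $\mathrm{load}_1\ge n-1$ (so $\rho_1(n-1)$ is only polynomially small — this is why a constant load would fail, since then $\rho_1(n-1)$ would be super‑exponentially small and the ratio would collapse), (b) $\mathrm{load}_{1/2}<n-1$ (so $\rho_{1/2}(n-1)$ is exponentially, not super‑exponentially, small), (c) the mixing time stays $\mathrm{poly}(n)$, and (d) the relative bias stays $\le\delta$; checking all four are compatible is the real content.

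\textbf{The relative bias.} Because the constant in \cref{th:dynkin_bias} carries a $(\tfrac{1}{1-\lambda})^2=\mathrm{poly}(n)$ factor, invoking it directly would only give relative bias $O(n^2\delta)$, which is too weak. Instead I would bound the bias directly from the Taylor expansion \eqref{eq:taylor-expansion-ATE}: $\mathrm{ATE}-\E_{\rho_{1/2}}[\hat{\mathrm{ATE}}_{\rm DQ}]=\delta^2 a_1$ with $a_1$ the second‑order coefficient. For this birth–death family $A=(P_1-P_0)/(2\delta)$ is supported on adjacent states, so it acts on the value functions as a bounded first‑difference operator; chasing $A(I-P_{1/2})^{\#}A(I-P_{1/2})^{\#}$ through this structure (rather than through crude $\|\cdot\|_{1,\infty}$ bounds) shows $a_1$ is only of order the per‑step reward scale, while $\mathrm{ATE}$ is of the same order times a constant (it is the derivative of the steady‑state occupancy rate in $\delta$). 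Hence the relative bias is $O(\delta)$ with an $n$‑independent constant, and one tunes $\theta,p$ so that constant is $\le 1$. (Equivalently, one may carry out the explicit $n$‑state computation extending \cref{sec:calculation-example}.) I expect Paragraph~3's four‑way parameter reconciliation, together with this sharp bias estimate, to be where essentially all the effort goes; everything else is a direct substitution into \cref{th:dynkin_var} and \cref{th:cramer-rao}.
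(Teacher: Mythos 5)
Your skeleton matches the paper's: pick a birth--death family, get $\sigma_{\rm DQ}=\mathrm{poly}(n)$ from \cref{th:dynkin_var} via $\log(1/\rho_{\min})=\Theta(n)$ and polynomial mixing, lower-bound $\sigoff$ by keeping a single summand of \cref{th:cramer-rao} at a state that is exponentially rare under $\rho_{1/2}$ but only polynomially rare under $\rho_1$, and absorb the polynomial factors into the base $c$. (Your self-loop lower bound on the conditional-variance term is legitimate, since every summand is nonnegative.) The constructions differ in one essential way, however. The paper's chain has a constant downward drift everywhere, so $\rho_{1/2}(s)\propto(1-4\delta)^{s}$, and the intervention is \emph{localized to the tail}: $P_1$ differs from $P_{1/2}$ only on states $s\ge k$, with $k$ chosen so that the stationary mass beyond $k$ is $O(\delta/n^{2})$, and the reward is supported at state $n$ with $\lambda^{\pi_1}=0$. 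This localization does two jobs at once: it makes $\rho_1(n)$ only polynomially small while $\rho_{1/2}(n)$ stays exponentially small (your separation mechanism), \emph{and} it forces $\norm{\rho_1-\rho_{1/2}}_{1}+\norm{\rho_0-\rho_{1/2}}_{1}=O(\delta/n^{2})$, so the relative bias is bounded by $O(\delta/n^{2})\cdot\norm{(I-P_{1/2})^{\#}}_{1,\infty}=O(\delta/n^{2})\cdot O(n^{2})=O(\delta)$ using only the crude norm machinery already developed for \cref{th:dynkin_bias}.

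Your construction perturbs every state and shifts the mode of the stationary distribution from $\theta n$ to $n-1$, so $\norm{\rho_1-\rho_{1/2}}_{1}=\Theta(1)$ and this easy route to the bias bound is closed. You correctly note that invoking \cref{th:dynkin_bias} directly only yields relative bias $O(\mathrm{poly}(n)\,\delta)$, but the substitute you propose --- that the remainder coefficient in \eqref{eq:taylor-expansion-ATE} is $O(1)$ with an $n$-independent constant because $A$ acts as a bounded first-difference operator --- is asserted rather than proved, and it is the crux of the theorem's second claim. Making it rigorous requires uniform-in-$n$ Lipschitz bounds both on the value function $V_{1/2}=(I-P_{1/2})^{\#}r$ of the loaded Erlang chain and on the second-level object $(I-P_{1/2})^{\#}AV_{1/2}$, whose ``reward'' is itself a first difference; a coupling argument gives Lipschitz constants that degrade with the coalescence time of adjacent starts, which is not obviously $O(1)$ here. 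Note also that the exact remainder in \eqref{eq:taylor-expansion-ATE} is evaluated against $\rho_0$ and $\rho_1$, not $\rho_{1/2}$, so the structure must be chased through those measures as well. As written, the bias half of the statement is not established; the cleanest fix is to adopt the paper's device of supporting $P_1-P_0$ only on a set of $\rho_{1/2}$-mass $O(\delta/n^{2})$, which costs nothing in the variance separation.
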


\noindent\textbf{\textsf{Another Extreme of the Bias-Variance Tradeoff:}} Theorems~\ref{th:dynkin_var}, \ref{th:cramer-rao}, and \ref{th:price} together reveal the opposite extreme of the bias-variance tradeoff. Specifically, if we insisted on an unbiased estimator for our problem (of which there are many, thanks to our framing of the problem as one of OPE), we would pay a large price in terms of variance. In particular Theorem~\ref{th:price} illustrates that this price can grow exponentially in the size of the state space. This jibes with our empirical evaluation in both caricatured and large-scale MDPs in Section~\ref{sec:experiments}. 

Taken together our results reveal that the DQ estimator accomplishes a striking bias-variance tradeoff: it has substantially smaller variance than any unbiased estimator (in fact, comparable to the Naive estimator), all while ensuring bias that is second order in the impact of the intervention.


%
%

\section{Experiments} \label{sec:experiments}
This section will empirically investigate the DQ estimator and a number of alternatives in two settings: the simple example of Section~\ref{sec:example}, originally
proposed by \cite{johari2022experimental}; and more realistically, a city-scale simulator of a ride-hailing platform similar to what large ride-hailing operators use in production. The alternatives we consider include: 1) the Naive estimator;  2) TSRI-1 and TSRI-2, the ``two-sided randomization'' (TSR) designs/estimators from \cite{johari2022experimental}; and 3) a variety of OPE estimators. For the OPE estimators, we note that off-policy average reward estimation has only recently been addressed in \cite{wanLearningPlanningAverageReward2021,zhangAverageRewardOffPolicyPolicy2021}, and we implement their specific estimators which we simply denote as TD and GTD respectively. We also implement an extension to an LSTD type estimator proposed in \cite{shi2020reinforcement}.
\subsection{A Simple Example}
We first study all of our estimators in the example of Section~\ref{sec:example}, a simple setting that does not call for any sort of value function approximation. Our goal now is to understand the relative merits of practical implementations of these estimators, in terms of their bias and variance.

To recap, this MDP is a stylized model of a rental marketplace, consisting of a 1-D Markov chain on $N=5000$ states parameterized by a `customer arrival' rate $\lambda$ and a `rental duration' rate $\mu$. At a given state $n$ (so that $n$ units of inventory are in the system), the probability that an arriving customer rents a unit is impacted by the intervention. As such if the intervention increases the probability of a customer renting, this reduces the inventory availability for customers that arrive later. Our MDP and experimental setup exactly replicates that of \cite{johari2022experimental}, with $N=5000, \lambda=1, \mu=1$.
\begin{figure}[htbp]
 \centering
 \subfloat{\includegraphics[width=0.46\textwidth]{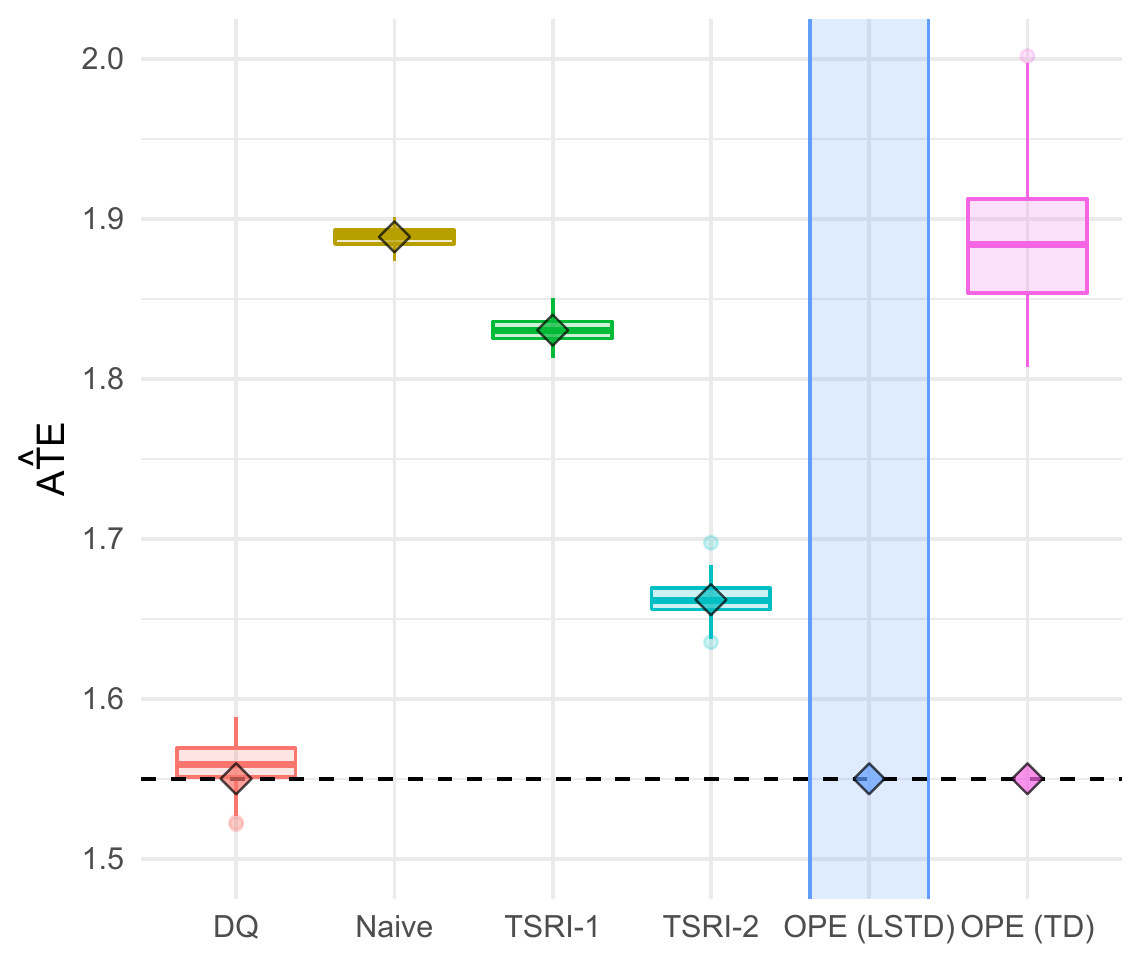}}
 \subfloat{\includegraphics[width=0.54\textwidth]{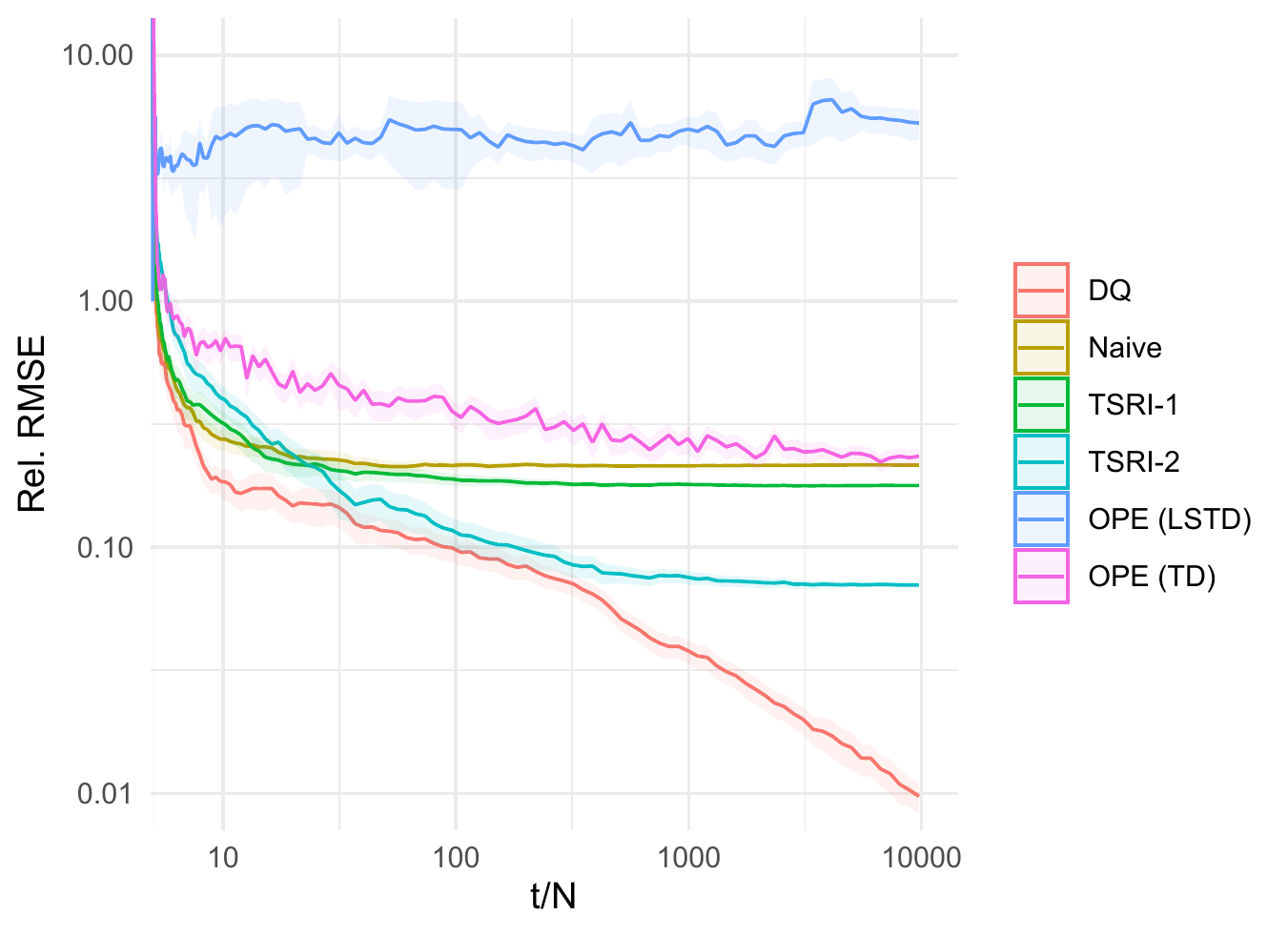}}
 \caption{Toy-example from \cite{johari2022experimental}. {\it Left}: Estimated ATE at time $t/N=10^{4}$ across $100$ trajectories. Dashed line indicates actual ATE. Diamonds indicate the asymptotic mean for each estimator. DQ shows compelling bias-variance tradeoff for this experimental budget.  {\it Right}: Relative RMSE vs. Time; DQ dominates the alternatives at all timescales. }
 \label{fig:birth-death}
\end{figure}
We run all estimators over 100 separate trajectories of length $t=10^{4} N$ of the above MDP initialized in its stationary distribution. Figure~\ref{fig:birth-death} summarizes the results of this experiment. Beginning with the left panel, which reports estimated quantities at $t = 10^4 N$, we immediately see:
\newline
\textbf{TSR improves on Naive: }The actual ATE in the experiment is $1.5\%$. Whereas it has the lowest variance of the estimators here, the Naive estimator has among the highest bias. The two TSR estimators reduce this bias substantially at a modest increase in variance. It is worth noting, as a sanity check, that these results precisely recreate those reported in \cite{johari2022experimental}. 
\newline
\textbf{OPE estimators are high variance: }The OPE estimators have the highest variance of those considered here. The TD estimator has the lower variance but this is simply because it is implicitly regularized. Run long enough, both estimators will recover the treatment effect.  
\newline
\textbf{DQ shows a compelling bias-variance tradeoff: }In contrast, the DQ estimator has the lowest bias at $t = 10^4 N$ and its variance is comparable to the TSR estimators (It is worth noting that run long enough, the DQ estimator had a bias of  $\sim-5 \times 10^{-7}$).
\newline
\textbf{Conclusions hold across experimental budgets: }Turning our attention briefly to the right chart in Figure~\ref{fig:birth-death}, we show the relative RMSE (i.e. RMSE normalized by the treatment effect) of the various estimators considered here {\em across all experimental budgets} $t$. RMSE effectively scalarizes bias and variance and we see that on this scalarization the DQ estimator dominates the other estimators considered here over all choice of $t$.

We note that specialized designs such as TSR can still be valuable in specific settings: when $\lambda \gg \mu$, for example, TSR is nearly unbiased (see \cite{johari2022experimental}), and can outperform DQ; see the appendix for such a study.
\subsection{A Large-Scale Ridesharing Simulator}
We next turn our attention to a city-scale ridesharing simulator similar to those used in production at large ride-hailing services. We will consider the problem of experimenting with changes to {\em dispatching} rules. Experimenting with these changes naturally creates Markovian interference by impacting the downstream supply/ positioning of drivers. Relative to the earlier toy example, the corresponding MDP here has an intractably large state-space, necessitating value function approximation for the DQ and OPE estimators.

\textbf{The Simulator: }Ridesharing admits a natural MDP; see e.g. \cite{qinReinforcementLearningRidesharing2021}. The state at the time of a request corresponds to that of all drivers at that time: position, assigned routes, riders, and the pickup/dropoff location of the request. Actions correspond to driver assignments and pricing decisions. The reward for a request is the price paid by the rider, less cost incurred to service the request.
Our simulator models Manhattan. Riders and drivers are generated according to real world data, based on \cite{TLCTripRecord}; this yields $\sim 300k$ requests and $\sim 7k$ unique drivers per real day. An arriving request is served a menu of options generated by a price engine. The rider chooses an option based on a choice model calibrated on taxi prices (for the outside option) and delay disutility. A dispatch engine assigns a driver to the rider; the engine chooses the driver who can serve the rider at minimal marginal cost, subject to the product's constraints. Finally drivers proceed along their assigned routes until the next request is received. The simulator implements pooling. Users can switch out demand and supply generation, pricing and dispatch algorithms, driver repositioning, and the choice model via a simple API. Other simulators exist in the literature \cite{qinReinforcementLearningRidesharing2021,yaoRidesharingSimulationPlatform2021}, but either lack an open-source implementation, or implement a subset of the functionality here.

\textbf{The Experiment: }We experiment with dispatch policies. Specifically, we consider assigning a request to an idle driver or a `pool' driver, i.e. a driver who already has riders in their car. A dispatch algorithm might prefer the former, but only if the cost of the resulting trip is at most $\alpha \%$ higher than the cost of assigning to a pool driver. We consider three experiments, each of which changes $\alpha$ from a baseline of $0$ to one of three distinct values: $30\%, 50\%$ or $70\%$, with ATEs of 0.5\%, -0.9\%, and -4.6\% respectively. As we noted earlier, we would expect significant interference in this experiment (or indeed any experiment that experiments with pricing or dispatch) since an intervention changes the availability / position of drivers for subsequent requests.

\begin{figure}[htbp]
  \centering
  \subfloat{\includegraphics[width=0.59\textwidth]{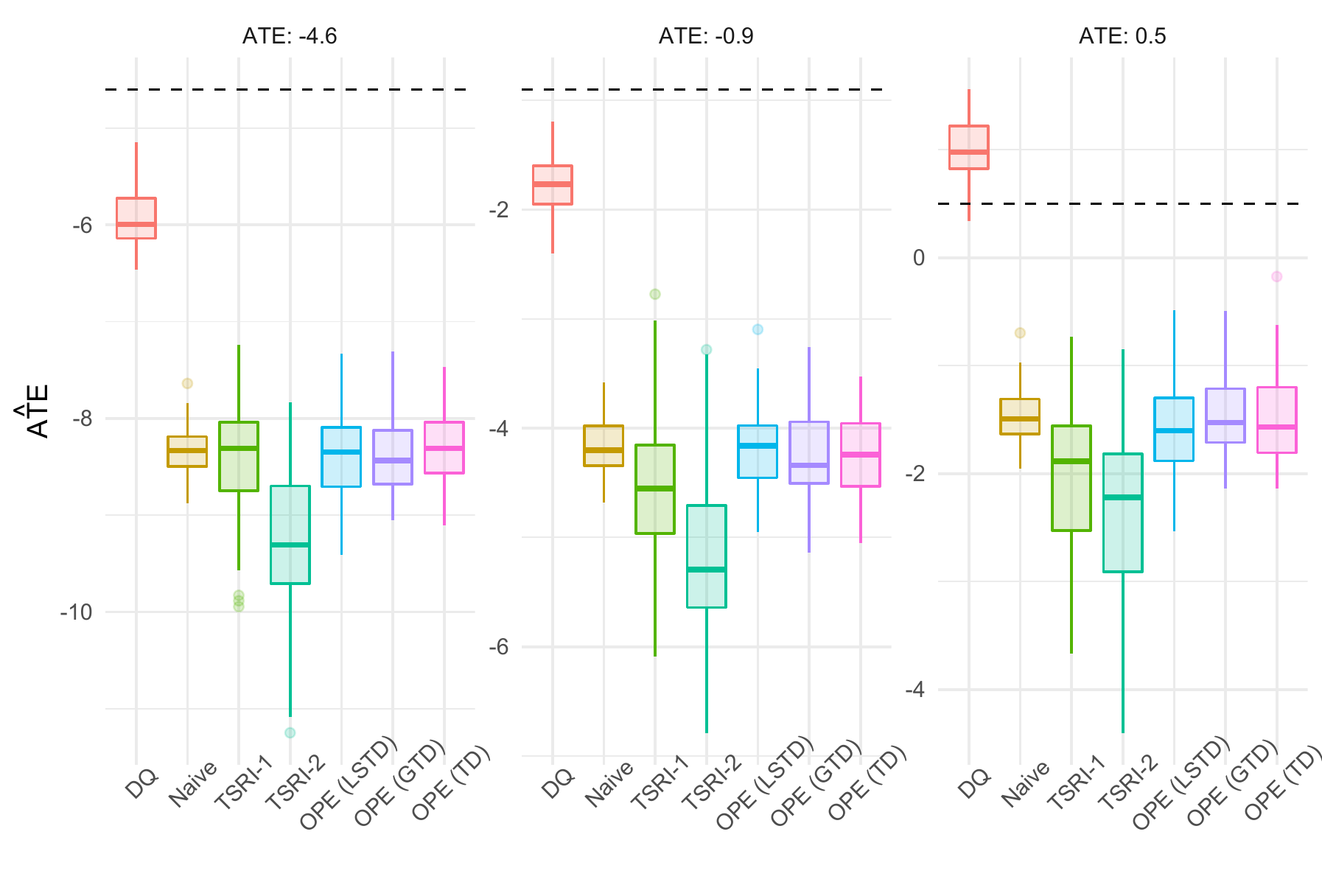}}
  \subfloat{\includegraphics[width=0.41\textwidth]{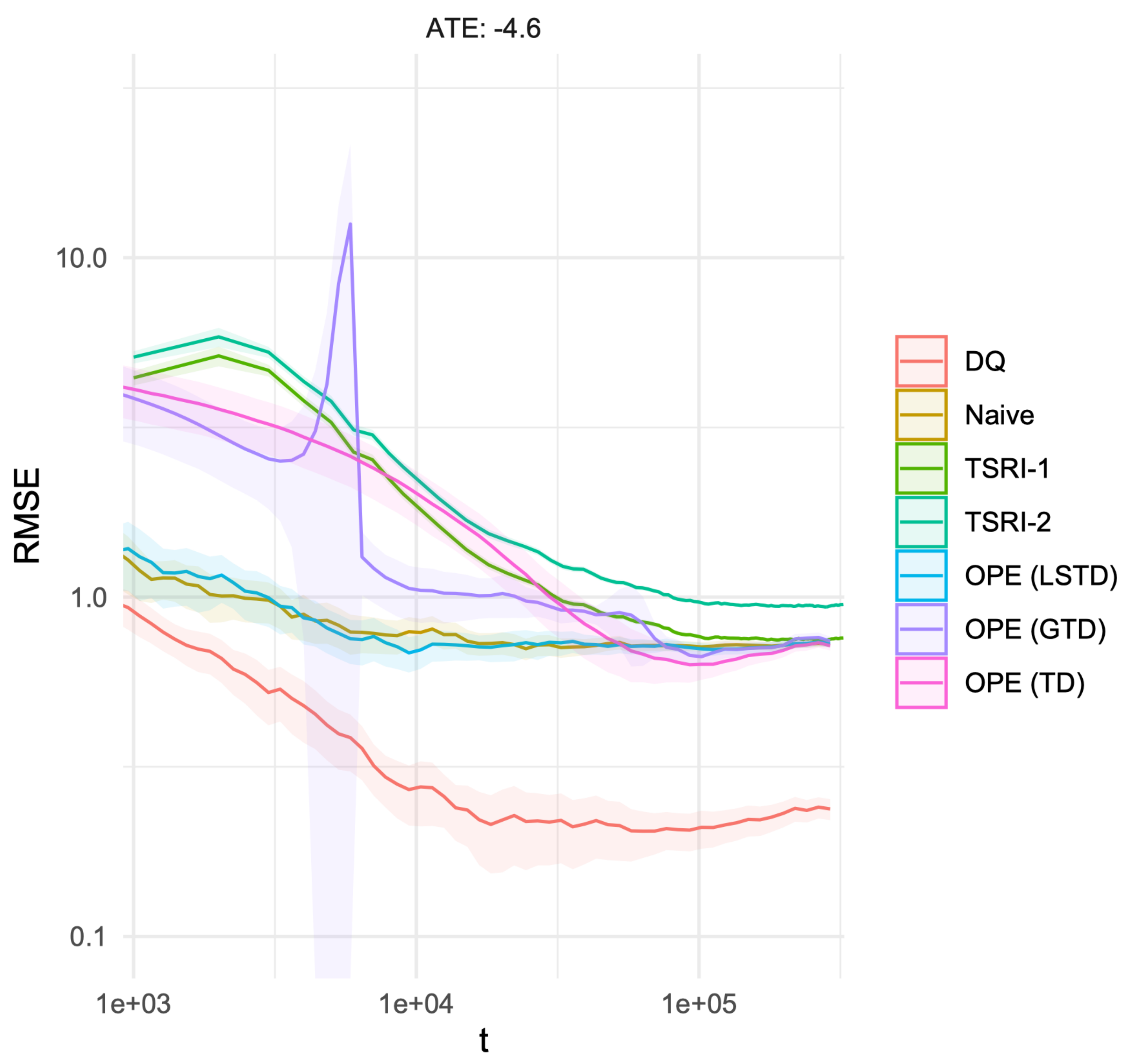}}
  \caption{Ridesharing model 
   {\it Left:} $\hat{\mathrm{ATE}}$ at $t = 3\times10^{5}$ over $50$ trajectories. Dashed line indicates actual ATE. DQ has lowest bias, and is only estimator to estimate correct sign of the treatment at all effect sizes.
  {\it Right:} RMSE vs. Time; DQ dominates at all time scales. 
  }
 \label{fig:rideshare}
\end{figure}

Figure~\ref{fig:rideshare} summarizes the results of the above experiments, wherein each estimator was run over $50$ independent simulator trajectories, each over $3\times10^5$ requests. The DQ and OPE estimators shared a common linear approximation architecture with basis functions that count the number of drivers at every occupancy level. We note that this approximation introduces its own bias which is not addressed by our theory. We immediately see: 
\newline
\textbf{Strong Impact of Interference: }As we might expect, interference has a significant impact here as witnessed by the large bias in the Naive estimator. 
\newline
\textbf{Incumbent estimators do not improve on Naive: }None of the incumbent estimators improve on Naive in this hard problem. This is also the case for the TSR designs, which in this large scale setting surprisingly appear to have significant variance. The OPE estimators have lower variance due to the regularization caused by value function approximation.
\newline
\textbf{DQ works: }In all three experiments, the bias in DQ (although in a relative sense higher than in the toy model) is {\em substantially} smaller than the alternatives, and also smaller than the ATE. This is evident in the left panel in Figure~\ref{fig:rideshare}. Notice that in the rightmost experiment (ATE $= 0.5$), DQ is the only estimator to learn that the ATE is positive. Like in the toy model, the right panel shows that these results are robust over experimentation budgets.  

\section{Discussion: Bias-Variance Tradeoffs, Policy Optimization} \label{sec:conclusion}

To summarize, we have shown that the DQ estimator achieves a surprising bias-variance tradeoff by applying on-policy estimation to the Markovian interference problem, and more generally to OPE. Here we draw further connections between the Naive, DQ, and OPE estimators and provide methods to realize other points on the bias-variance curve. Furthermore, we draw surprising connections between our estimator and trust-region methods in policy optimization, and show that DQ can serve as a drop-in replacement for these policy optimization surrogates with {\it  provably} lower bias.

\subsection{A $k^{\rm th}$-order Bias Correction}\label{sec:korder-correction}

As alluded to in Section~\ref{sec:estimator}, we can view the DQ estimator as a first-order correction to the Naive estimator, based on a Taylor series expansion of the ATE. This immediately motivates a $k^{\rm th}$-order correction, with the goal of obtaining estimators with bias $O(\delta^{k+1})$ for arbitrary $k$.

This correction turns out to have a suprising and intuitive form\footnote{For now, we assume for simplicity that rewards are only a function of state. Similar results can be derived for the more general case where $r$ is a function of both state and action, although the resulting formulas are more complex}. In short, to obtain the $k^{\rm th}$ order correction term for $k$ odd (the correction for $k$ even is 0), we simply compute the DQ estimator, but replace rewards in the MDP with the $(k-1)^{\rm th}$ order Difference-in-Q functions -- effectively a Difference-in-Qs-of-Difference-in-Qs.

Precisely, for some reward function $f: \mathcal{S} \mapsto \mathbb{R}$, we can define an auxiliary MDP with the same transition  probabilities, but rewards $f(s)$ at each state. Let $Q(s, a; f)$ be the corresponding Q function, under policy $\pi_{1 / 2}$. We now define
the $k^{\rm th}$ order Q-function to be $Q^{(k)}(s, a) = Q(s, a; f^{(k-1)})$, where the rewards are defined as $f^{(k-1)}(s) = \frac{1}{2} \left( Q^{(k-1)}(s, 1) - Q^{(k-1)}(s, 0) \right)$; in other words, the previous-order Difference-in-Q functions. We take as a base case $f^{(0)}(s) = r(s)$. Finally, we can define the $K^{\rm th}$-order Difference-in-Qs estimate of the ATE to be the sum of all lower-order correction terms: $
\hat{\mathrm{ATE}}_{\rm DQ}^{(K)} = \sum_{k \text{ odd}, k \leq K} \mathbb{E}_{\rho_{1/2}}\left[Q^{(k)}(s, 1) - Q^{(k)}(s, 0)\right]$.

In principle this approach enables {\it off-policy}  evaluation with arbitarily low bias -- entirely via estimation of {\it on-policy} quantities. One can verify that ${\rm{ATE}}_{\rm DQ}^{(1)}$ is the expected value of the DQ estimator. We now generalize \cref{th:dynkin_bias} to provide a bias bound for the $k^{\rm th}$ order correction\footnote{The variance of such plug-in estimators can be bounded by iteratively applying \cref{lem:non-expansive-lemma}, which is omitted for simplicity.}:

\begin{theorem}\label{th:korder-bias}
For any $K=0, 1, 2, \dotsc$, we have $\left|{\rm{ATE}} - {\rm{ATE}}_{\rm DQ}^{(K)}\right| \leq C' \left(\frac{1}{1-\lambda}\right)^{K+1} \delta^{K+1} r_{\max}$, where $C'$ is a constant depending (polynomially) on $\log(C).$
\end{theorem}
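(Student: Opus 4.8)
The plan is to show that the idealized $K$-th order estimator ${\rm ATE}_{\rm DQ}^{(K)}$ is \emph{exactly} the degree-$K$ truncation of the Taylor series for the ATE already derived in \eqref{eq:taylor-expansion-ATE}, so that the error reduces to the remainder term $\delta^{K+1}a_K$ — whose bound was established en route to \cref{th:dynkin_bias}. Throughout write $G := (I - P_{1/2})^{\#}$, recall $P_1 - P_0 = 2\delta A$ with $\|A\|_{1,\infty}\le 1$, and note that in the state-only-reward case treated here $r_0 = r_1 = r_{1/2} = r$.

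First I would record the one-step Difference-in-Q's identity for an arbitrary state-only reward $f$: its $\pi_{1/2}$-value function is $V_f = G f$, and since $(P_1 - P_0)\mathbf{1} = 0$ (so the additive normalization of $V_f$ is immaterial),
\[
Q(s,1;f) - Q(s,0;f) = \big((P_1 - P_0) V_f\big)(s) = 2\delta\,(A G f)(s).
\]
Feeding this into the recursion $f^{(k)} = \tfrac12\big(Q^{(k)}(\cdot,1) - Q^{(k)}(\cdot,0)\big)$ with $Q^{(k)}(\cdot,a) = Q(\cdot,a;f^{(k-1)})$ gives $f^{(k)} = \delta\, A G\, f^{(k-1)}$, hence $f^{(k)} = (\delta A G)^k r$ by induction from $f^{(0)} = r$. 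Therefore
\[
\E_{\rho_{1/2}}\big[Q^{(k)}(s,1) - Q^{(k)}(s,0)\big] = 2\delta\,\rho_{1/2}^{\top} A G\, f^{(k-1)} = 2\delta^{k}\,\rho_{1/2}^{\top} (AG)^{k} r,
\]
and summing over odd $k \le K$ yields ${\rm ATE}_{\rm DQ}^{(K)} = \sum_{k\text{ odd},\,k\le K} 2\delta^{k}\rho_{1/2}^{\top}(AG)^{k} r$.

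Next I would match this to \eqref{eq:taylor-expansion-ATE}: specializing that expansion to $r_0 = r_1 = r$, its degree-$k$ coefficient is $\rho_{1/2}^{\top}(AG)^{k} r - \rho_{1/2}^{\top}(-AG)^{k} r$, which vanishes for even $k$ and equals $2\rho_{1/2}^{\top}(AG)^{k} r$ for odd $k$. Hence the degree-$K$ truncation of \eqref{eq:taylor-expansion-ATE} is precisely ${\rm ATE}_{\rm DQ}^{(K)}$, so ${\rm ATE} - {\rm ATE}_{\rm DQ}^{(K)} = \delta^{K+1} a_K$. The bound $|a_K| \le 2\big(\tfrac{2\ln C + 1}{1-\lambda}\big)^{K+1} r_{\max}$ — proved in the course of \cref{th:dynkin_bias} using $|a^{\top}b| \le \|a\|_1\|b\|_{\max}$, $\|a^{\top}B\|_1 \le \|a\|_1\|B\|_{1,\infty}$, $\|\rho_{1/2}\|_1 = 1$, $\|A\|_{1,\infty}\le 1$, and \cref{lem:group-inverse-L1} — then gives the claim with $C' = 2(2\ln C + 1)^{K+1}$, a polynomial in $\log C$ for each fixed $K$; the edge case $K = 0$ (empty sum, ${\rm ATE}_{\rm DQ}^{(0)} = 0$) is handled identically.

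The only genuine step is the inductive identity above, i.e.\ checking that the recursively-defined plug-in $Q$-functions telescope onto the Taylor coefficients; I expect this to be the main (though routine) obstacle. The one thing to watch is the average-reward bookkeeping — each $V_{f^{(k-1)}}$ is only pinned down up to an additive multiple of $\mathbf{1}$, but this is harmless because $(P_1 - P_0)$ annihilates constants, and $\rho_{1/2}^{\top}G$ correctly propagates the reweighting at each stage. Once $f^{(k)} = (\delta AG)^k r$ is established, everything else is immediate from results already in the excerpt.
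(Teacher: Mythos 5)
Your proposal is correct and follows exactly the route the paper intends: identify $\hat{\mathrm{ATE}}_{\rm DQ}^{(K)}$ with the degree-$K$ truncation of the Taylor expansion \eqref{eq:taylor-expansion-ATE} (using $f^{(k)} = (\delta A G)^k r$, the vanishing of even-order coefficients when $r_0=r_1$, and the annihilation of the additive normalization of $V_f$ by $P_1-P_0$), and then invoke the remainder bound $|a_K| \le 2\bigl(\tfrac{2\ln C+1}{1-\lambda}\bigr)^{K+1} r_{\max}$ already established in the proof of \cref{th:dynkin_bias}. The inductive identification of the iterated Difference-in-Q's with the Taylor coefficients, which the paper leaves implicit, is worked out correctly here.
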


\subsection{Interpolating from OPE to Naive via Regularization}

Here, we view the DQ estimator again as an intermediate point on the bias-variance curve between Naive and OPE. This time, however, we interpolate between these extremes by regularizing certain key nuisance parameters in estimating the ATE.

\paragraph{An OPE meta-estimator.} First, we situate the DQ estimator in the context of existing OPE techniques. Consider the following exact identity for the ATE:  $\mathrm{ATE} = \E_{\rho_{1 / 2}}[\zeta(s) (Q_{\pi_{1 / 2}}(s, 1) - Q_{\pi_{1 / 2}}(s, 0))]$ where $\zeta(s) = \frac{1}{2}\frac{ \rho_{1}(s) + \rho_{0}(s)}{\rho_{1 / 2}(s)}$ is the likelihood ratio of the stationary distributions. A variety of OPE estimators -- including doubly-robust (\cite{kallus2020double,ueharaMinimaxWeightQFunction2020}) and primal-dual (\cite{daiBoostingActorDual2017,tangDoublyRobustBias2019}) estimators -- in fact estimate ATE explicitly by plugging in estimates $\hat{\zeta}, \hat{Q}_{\pi_{1 / 2}}$ of the likelihood ratio and value functions (referred to as the ``doubly-robust meta-estimator'' in \cite{kallus2020double}):

\begin{equation}
  \label{eq:dr-meta}
 \hat{\mathrm{ATE}}_{\rm DR} = \frac{1}{|T_{1}|}\sum_{t \in T_1}\hat{\zeta}(s_t) \hat{Q}_{\pi_{1 / 2}}(s_t, 1) - \frac{1}{|T_0|} \sum_{t\in T_0}\hat{\zeta}(s_t) \hat{Q}_{\pi_{1 / 2}}(s_t, 0)
\end{equation}

\noindent\textbf{\textsf{Explicit regularization}.} In estimating $\hat{\zeta}(s)$, one can directly penalize its deviation from one, where increasing the penalty interpolates from OPE to DQ. Given that estimation of $\hat{\zeta}(s)$ is the key difference between DQ and unbiased OPE -- and therefore the source of the massive variance gap (Theorems~\ref{th:dynkin_var} and \ref{th:cramer-rao}) -- we would expect this to be a particularly powerful approach to OPE, and indeed similar penalties have produced strong empirical performance \cite{nachum2019dualdice}. Similarly, one can directly penalize the deviation of $\hat{V}_{\pi_{1 / 2}}$ from zero, as in regularized variants of LSTD (see e.g. \cite{kolterRegularizationFeatureSelection2009}). As we increase the regularization penalty on $\hat{\zeta}(s)$, we interpolate from OPE to DQ; additionally increasing the regularization penalty on $\hat{V}_{\pi_{1 / 2}}$ then interpolates from DQ to Naive. Approaches combining both forms of regularization have been explored in \cite{yangOffPolicyEvaluationRegularized2020}.

\noindent\textbf{\textsf{Function approximation}.} More generally, one can restrict $\hat{\zeta}(s)$ and $\hat{V}_{\pi_{1 / 2}}$ to lie in particular function classes, with one extreme being any mapping $\mathcal{S} \mapsto \mathbb{R}$, and the other extreme being the constant functions $\hat{V}_{\pi_{1 /2}}(s) = c$ or $\hat{\zeta}(s) = 1$. As one example, when the state space is massive we may approximate it using state aggregation. At the extreme, aggregating all states into a single aggregate state implies that the value function (or likelihood ratio) must be a constant. As the aggregation for $\hat{\zeta}(s)$ goes from fine to coarse, we  interpolate between OPE and DQ; increasing the coarseness of $\hat{V}_{\pi_{1 / 2}}(s)$ then interpolates between DQ and Naive.
%

\subsection{DQ as a Policy Optimization Objective}
\label{sec:policy-optimization}

\paragraph{Surrogate objectives in trust-region methods}
DQ also has a suprising relationship to trust-region methods \cite{schulman2015trust,schulman2017proximal,kakade2002approximately}. At each iteration, these methods essentially solve an {\it offline} policy optimization problem: using data collected under some ``behavioral''policy $\pi_{\rm b}$, they evaluate (and subsequently optimize) a candidate policy $\pi$. The policy evaluation step is exactly an OPE problem, and they construct a surrogate objective based on an identity sometimes referred to as Dynkin's identity \cite{dynkin1965markov}: $\lambda({\pi}) = \E_{\pi_{b}}[r(s, a)] + \E_{s \sim \rho_{\pi_{b}}, a \sim \pi}\left[ \frac{\rho_{\pi}(s)}{\rho_{\pi_{b}}(s)} \left( Q^{\pi_{b}}(s, a) - V^{\pi_{b}}(s) \right) \right]$, where $\rho_{\pi_b}, V^{\pi_b}, Q^{\pi_b}$ is the stationary distribution, $V$-function, and $Q$-function of the policy $\pi_{b}$ respectively and $\rho_{\pi}$ is the stationary distribution of $\pi$. The referenced trust-region methods then effectively take the likelihood ratio $\rho_{\pi}(s) / \rho_{\pi_{b}}(s)$ to be identically one, yielding the (idealized) surrogate objective:
$\hat{\lambda}_{\rm TR}({\pi}) = \E_{\pi_{b}}[r(s, a)] + \E_{s \sim \rho_{\pi_{b}}, a \sim \pi}\left[  Q^{\pi_{b}}(s, a) - V^{\pi_{b}}(s) \right]$.

\paragraph{A DQ-based surrogate} As it turns out, DQ derives from a very similar identity -- with a small but critical difference which allows DQ to obtain \textit{even lower bias}. Applying the peturbation bound in \cref{lem:perturbation-stationary-distribution} to $\lambda(\pi)$, we obtain a slightly different identity for $\lambda(\pi)$: $\lambda({\pi}) = \E_{\pi_{b}}[r_{\pi}(s)] + \E_{s \sim \rho_{\pi_{b}}, a \sim \pi}\left[ \frac{\rho_{\pi}(s)}{\rho_{\pi_{b}}(s)} \left( Q^{\pi_{b}}(s, a; r_{\pi}) - V^{\pi_{b}}(s; r_{\pi})\right) \right]$ where $r_{\pi}(s) = \mathbb{E}_{a \sim \pi}\left[r(s, a)\right]$ is the expected reward under $\pi$, and recall that $Q^{\pi_{b}}(\cdot, \cdot; r_{\pi}), V^{\pi_{b}}(\cdot, \cdot; r_{\pi})$ are the value functions for an auxiliary MDP with the same transition probabilities, but {\it taking $r_{\pi}$ to be the reward function}. The biased  form of this estimator, which forms the basis of the DQ estimator\footnote{The DQ estimator in our original setting can actually be derived as either $\hat{\lambda}_{\rm TR}(\pi_{1}) -\hat{\lambda}_{\rm TR}(\pi_{0}) $ or $\hat{\lambda}_{\rm DQ}(\pi_{1}) -\hat{\lambda}_{\rm DQ}(\pi_{0}) $, but this results from a very suprising cancellation of terms in the subtraction; i.e. $\hat{\lambda}_{\rm TR}(\pi) $ and $\hat{\lambda}_{\rm DQ}(\pi)$ are individually different estimators with very different properties, as we will see; and the higher-order corrections must be derived from $\hat{\lambda}_{\rm DQ}$.}, is: $\hat{\lambda}_{\rm DQ}({\pi}) = \E_{\pi_{b}}[r_{\pi}(s)] + \E_{s \sim \rho_{\pi_{b}}, a \sim \pi}\left[   Q^{\pi_{b}}(s, a; r_{\pi}) - V^{\pi_{b}}(s; r_{\pi}) \right]$
which is precisely $\hat{\lambda}_{\rm TR}(\pi)$, but computed on an MDP with rewards $r_{\pi}$.
\paragraph{Lower-order bias} This striking resemblance between the surrogatees $\hat{\lambda}_{\rm DQ}(\pi)$ and $\hat{\lambda}_{\rm TR}(\pi)$ naturally raises the question of how they compare. As it turns out, the simple act of replacing the rewards with $r_{\pi}$ in $\hat{\lambda}_{\rm DQ}$ has significant consequences in terms of bias:

\begin{theorem}[Bias of the DQ surrogate] Suppose it holds that $d_{\rm{TV}}(p(s, a, \cdot), p(s, a', \cdot)) \leq \delta $  for all $s, a, a'$, and that $d_{\rm{TV}}(\pi(s, \cdot), \pi'(s, \cdot)) \leq \delta'$ for all $s$. Then, the biases of $\hat{\lambda}_{\rm TR}(\pi)$\footnote{This result for $\hat{\lambda}_{\rm TR}$ is in fact a slightly refined version of the key perturbation bound in \cite{schulman2015trust}.} and $\hat{\lambda}_{\rm DQ}(\pi)$ satisfy
  \begin{align*}
    |\hat{\lambda}_{\rm TR}(\pi) - \lambda^\pi| = O(\delta (\delta')^2 ) && |\hat{\lambda}_{\rm DQ}(\pi) - \lambda^\pi| = O((\delta\delta')^2 )
  \end{align*}
\end{theorem}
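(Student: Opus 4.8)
The plan is to use the perturbation identity of \cref{lem:perturbation-stationary-distribution} to rewrite \emph{both} surrogates in one common form, so that their biases differ only through a single easy-to-bound term. Throughout write $\pi_{\rm b}$ for the behavioral policy, $\Delta_P := P_\pi - P_{\pi_{\rm b}}$, and set $W_\pi := (I-P_{\pi_{\rm b}})^{\#} r_\pi$ and $W_{\rm b} := (I-P_{\pi_{\rm b}})^{\#} r_{\pi_{\rm b}}$ (these are the $V$-functions of $\pi_{\rm b}$ on the auxiliary MDPs with state-only rewards $r_\pi$ and $r_{\pi_{\rm b}}$, respectively). As in \cref{lem:perturbation-stationary-distribution} assume $\pi$ and $\pi_{\rm b}$ induce irreducible aperiodic chains, and as in \cref{th:dynkin_bias} assume $\pi_{\rm b}$ mixes, so that $\norm{(I-P_{\pi_{\rm b}})^{\#}}_{1,\infty}$ is a constant controlled by \cref{lem:group-inverse-L1}.

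First I would record the exact identity: applying \cref{lem:perturbation-stationary-distribution} to $\lambda^\pi = \rho_\pi^\top r_\pi$ around $P_{\pi_{\rm b}}$ gives $\lambda^\pi = \rho_{\pi_{\rm b}}^\top r_\pi + \rho_\pi^\top \Delta_P W_\pi$. I would then reduce each surrogate to the same shape with $\rho_\pi$ swapped for $\rho_{\pi_{\rm b}}$. A one-line Bellman computation (using $(I-P_{\pi_{\rm b}})V^{\pi_{\rm b}}(\cdot;f) = f - \lambda^{\pi_{\rm b}}(f)\,\mathbf{1}$ together with $\sum_a(\pi(s,a)-\pi_{\rm b}(s,a)) = 0$) shows $\E_{a\sim\pi}[Q^{\pi_{\rm b}}(s,a;r_\pi) - V^{\pi_{\rm b}}(s;r_\pi)] = (\Delta_P W_\pi)(s)$, hence $\hat\lambda_{\rm DQ}(\pi) = \rho_{\pi_{\rm b}}^\top r_\pi + \rho_{\pi_{\rm b}}^\top \Delta_P W_\pi$; the same computation with the \emph{true} state--action reward gives $\E_{a\sim\pi}[Q^{\pi_{\rm b}}(s,a) - V^{\pi_{\rm b}}(s)] = (r_\pi(s)-r_{\pi_{\rm b}}(s)) + (\Delta_P W_{\rm b})(s)$, and adding $\E_{\pi_{\rm b}}[r(s,a)] = \lambda^{\pi_{\rm b}} = \rho_{\pi_{\rm b}}^\top r_{\pi_{\rm b}}$ and cancelling the $r_{\pi_{\rm b}}$ terms gives $\hat\lambda_{\rm TR}(\pi) = \rho_{\pi_{\rm b}}^\top r_\pi + \rho_{\pi_{\rm b}}^\top \Delta_P W_{\rm b}$. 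Subtracting the exact identity yields the two clean bias formulas
\begin{gather*}
\hat\lambda_{\rm DQ}(\pi) - \lambda^\pi = (\rho_{\pi_{\rm b}} - \rho_\pi)^\top \Delta_P W_\pi, \\
\hat\lambda_{\rm TR}(\pi) - \lambda^\pi = (\rho_{\pi_{\rm b}} - \rho_\pi)^\top \Delta_P W_\pi + \rho_{\pi_{\rm b}}^\top \Delta_P (W_{\rm b} - W_\pi).
\end{gather*}

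It then remains to estimate the pieces via $|x^\top B y| \le \norm{x}_1\norm{B}_{1,\infty}\norm{y}_{\max}$. From $(\Delta_P)(s,\cdot) = \sum_a(\pi(s,a)-\pi_{\rm b}(s,a))\,p(s,a,\cdot)$, subtracting a common reference row (legitimate since the coefficients sum to zero) and applying the two total-variation hypotheses gives $\norm{\Delta_P}_{1,\infty} \le 4\delta\delta'$; \cref{lem:perturbation-stationary-distribution} with \cref{lem:group-inverse-L1} gives $\norm{\rho_{\pi_{\rm b}} - \rho_\pi}_1 \le \norm{\Delta_P}_{1,\infty}\norm{(I-P_{\pi_{\rm b}})^{\#}}_{1,\infty} = O(\delta\delta')$; and $\norm{W_\pi}_{\max}, \norm{W_{\rm b}}_{\max} = O(r_{\max})$, while $\norm{W_{\rm b}-W_\pi}_{\max} \le \norm{(I-P_{\pi_{\rm b}})^{\#}}_{1,\infty}\norm{r_{\pi_{\rm b}}-r_\pi}_{\max} = O(\delta' r_{\max})$ since $\norm{r_{\pi_{\rm b}}-r_\pi}_{\max} \le 2\delta' r_{\max}$. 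Multiplying out, the shared term is $O(\delta\delta')\cdot O(\delta\delta')\cdot O(r_{\max}) = O((\delta\delta')^2)$ --- which is already the DQ bound --- and the extra TR term is $O(1)\cdot O(\delta\delta')\cdot O(\delta' r_{\max}) = O(\delta(\delta')^2)$, which dominates (as $\delta \le 1$) and gives the TR bound.

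The only substantive step --- and the origin of the gap --- is the reduction in the second paragraph. DQ is \emph{exactly} the perturbation identity with $\rho_\pi \mapsto \rho_{\pi_{\rm b}}$ and nothing else changed, so it pays only the distribution-mismatch term $(\rho_{\pi_{\rm b}}-\rho_\pi)^\top\Delta_P W_\pi$. TR, in addition, has implicitly replaced the ``right'' auxiliary value function $W_\pi = (I-P_{\pi_{\rm b}})^{\#}r_\pi$ by $W_{\rm b} = (I-P_{\pi_{\rm b}})^{\#}r_{\pi_{\rm b}}$ --- an $O(\delta')$ perturbation, because $r_\pi$ and $r_{\pi_{\rm b}}$ differ only through the policy, not the transitions --- and this costs it the extra term of size $\norm{\Delta_P}_{1,\infty}\norm{W_{\rm b}-W_\pi}_{\max} = O(\delta\delta')\cdot O(\delta') = O(\delta(\delta')^2)$, which DQ never incurs. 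I expect no delicate estimate beyond recognizing this; everything else is $\ell_1/\ell_\infty$ bookkeeping against the already-established bound on $\norm{(I-P_{\pi_{\rm b}})^{\#}}_{1,\infty}$.
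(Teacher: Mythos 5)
Your proof is correct and follows the route the paper itself sets up in Section~\ref{sec:policy-optimization}: both surrogates are the likelihood-ratio-equals-one truncations of the exact identities obtained from \cref{lem:perturbation-stationary-distribution}, so the DQ bias is exactly the distribution-mismatch term $(\rho_{\pi_b}-\rho_\pi)^\top \Delta_P W_\pi$ while TR additionally pays for substituting $W_{\rm b}=(I-P_{\pi_b})^{\#}r_{\pi_b}$ for $W_\pi=(I-P_{\pi_b})^{\#}r_\pi$. The $\ell_1/\ell_\infty$ bookkeeping --- $\lVert \Delta_P\rVert_{1,\infty}=O(\delta\delta')$ via the zero-sum reference-row trick, $\lVert \rho_{\pi_b}-\rho_\pi\rVert_1=O(\delta\delta')$ via \cref{lem:perturbation-stationary-distribution} and \cref{lem:group-inverse-L1}, and $\lVert W_{\rm b}-W_\pi\rVert_\infty=O(\delta' r_{\max})$ --- is exactly right and yields the stated $O((\delta\delta')^2)$ versus $O(\delta(\delta')^2)$ separation.
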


This characterization is sharp, in that there exist non-pathological examples where the exact bias of $\hat{\lambda}_{\rm DQ}$ is a factor $\delta$ smaller than that of $\hat{\lambda}_{\rm TR}$. Crucially, this means that even if the distance between {\it policies} has no non-vacuous upper bound (i.e. $\delta' = 2$), as long as the resulting {\it transition functions} are similar, then the bias of $\hat{\lambda}_{\rm DQ}$ will be small, whereas the bias of $\hat{\lambda}_{\rm TR}$ can be of the order of $\delta$. This immediately suggests that optimizing $\hat{\lambda}_{\rm DQ}$ with respect to $\pi$ should allow for both larger and more accurate policy improvement steps.

\section{Conclusion} \label{sec:real-conclusion}
We propose a novel estimator, the DQ estimator, to solve the interference problem in experiments with simple randomized designs. The DQ estimator achieves second-order bias in estimating the average treatment effect, while its variance can be exponentially smaller than that of any unbiased estimator. We conducted a large scale ride-hailing experiment that demonstrated the superior performance of the DQ estimator over state-of-the-art approaches. The striking and rigorous bias-variance trade-offs induced by the DQ estimator and its generalizations provide a new lens for general off-policy evaluation and policy optimization in reinforcement learning.

\linespread{.5}
\begin{multicols}{2}
\bibliographystyle{abbrv}
\footnotesize
\bibliography{reference} 
%
%
\end{multicols}

%




\newpage
\appendix

\section*{Appendix}
\section{Notation}
For a vector $a \in \R^{n}$, we use $\norm{a}_{1} = \sum_{i=1}^{n} |a_i|$ and $\norm{a}_{\infty} = \max_{i=1}^{n} |a_i|.$ For a matrix $M \in \R^{n\times m}$, we use $\norm{M}_{1,\infty} =\max_{1\leq i\leq n}\sum_{j=1}^{m} |a_{ij}|$ to represent the maximal row-wise $l_1$-norms. We use $\one$ to represent the vectors with all ones. We use $A^{\#}$ to represent the group inverse of $A$. For an irreducible and aperiodic Markov chain with associated transition matrix $P$ and the stationary distribution $\rho$, there is $(I-P)^{\#} = (I-P+\one\rho^{\top})^{-1} - \one\rho^{\top}.$

\section{Analysis of the Example}\label{sec:calculation-example}
To begin, let us derive the ATE. Under policy $\pi_0$, the transition matrix is 
$$
P_{0} = \begin{bmatrix}(1-p)\lambda+\mu & p\lambda\\ \mu & \lambda\end{bmatrix}
$$
and the stationary distribution is $\rho_{0} = [\frac{\mu}{\mu + \lambda p}, \frac{\lambda p}{\mu + \lambda p}]^{\top}$ accordingly. Similarly, one can verify under policy $\pi_1$, the transition matrix is 
$$
P_{1} =  \begin{bmatrix}(1-p-\delta)\lambda+\mu & (p+\delta)\lambda\\ \mu & \lambda\end{bmatrix}
$$
and the stationary distribution is $\rho_{1} =  [\frac{\mu}{\mu + \lambda (p+\delta)}, \frac{\lambda (p+\delta)}{\mu + \lambda (p+\delta)}]^{\top}.$ Let $r_0 = [\lambda p, 0]^{\top}, r_1 = [\lambda (p+\delta), 0]^{\top}$ be the reward vector under actions $0$ or $1$. Then, the ATE is 
\begin{align*}
{\rm{ATE}} 
&= r_1^{\top} \rho_1 - r_0^{\top} \rho_0 \\
&= \frac{\mu \lambda (p+\delta)}{\mu + \lambda (p+\delta)} - \frac{\mu \lambda p}{\mu + \lambda p}\\
&= \frac{\delta \mu^2 \lambda}{(\mu + \lambda (p+\delta)) (\mu + \lambda p)}.
\end{align*}

Consider the transition matrix for $\pi_{1/2}$,
\begin{align*}
P = \begin{bmatrix}(1-p-\delta/2)\lambda+\mu & (p+\delta/2)\lambda\\ \mu & \lambda\end{bmatrix}.
\end{align*}
Then one can verify that the stationary distribution $\rho_{1/2}$ is
\begin{align*}
\rho_{1/2} =  \left[\frac{\mu}{\mu + \lambda (p+\delta/2)}, \frac{\lambda (p+\delta/2)}{\mu + \lambda (p+\delta/2)}\right]^{\top}.
\end{align*}
The naive estimator is 
\begin{align*}
\E[\hat{{\rm ATE}}_{\rm{NV}}] = \frac{\delta \lambda \mu}{\mu + \lambda (p+\delta / 2)}.
\end{align*}

Next, we consider the computation of $\E_{\rho_{1/2}}[\hat{{\rm{ATE}}}_{\rm DQ}]$, which can be written as
\begin{align*}
\E_{\rho_{1/2}}[\hat{{\rm{ATE}}}_{\rm DQ}] = \rho_{1/2}^{\top} (Q_1 - Q_0)
\end{align*}
where $Q_a$ is the Q-value vector for the policy $\pi_{1/2}$ under the action $a$. Furthermore, consider the following Bellman equation for $Q$-value function:
\begin{align*}
Q(s, a) = r(s, a) - \lambda^{1/2} + \sum_{s', a'} P_{a}(s, s') \frac{1}{2} Q(s', a').
\end{align*}
One can verify that one solution of the above equations is
\begin{align*}
Q(0, 0) = \frac{\mu \lambda p}{\mu + \lambda p}, &\quad Q(0, 1) = 0\\
Q(1, 0) = \frac{\mu \lambda (p+\delta)}{\mu + \lambda p}, &\quad Q(1,1) = 0
\end{align*}
Therefore,
\begin{align*}
\E_{\rho_{1/2}}[\hat{{\rm{ATE}}}_{\rm DQ}]
&=  \frac{\mu}{\mu + \lambda (p+\delta/2)} (Q(0, 1) - Q(0, 0))\\
&=  \frac{\mu}{\mu + \lambda (p+\delta/2)}  \frac{\mu \lambda \delta}{\mu + \lambda p}.
\end{align*}

For the bias induced by the DQ estimator, we have
\begin{align*}
{\rm{ATE}} - \E_{\rho_{1/2}}[\hat{{\rm{ATE}}}_{\rm DQ}]
&= \frac{\delta \mu^2 \lambda}{(\mu + \lambda (p+\delta)) (\mu + \lambda p)} - \frac{\mu}{\mu + \lambda (p+\delta/2)}  \frac{\mu \lambda \delta}{\mu + \lambda p}\\
&= \delta \frac{\mu^2 \lambda}{\mu + \lambda p} \left(\frac{1}{\mu + \lambda (p+\delta)} - \frac{1}{\mu + \lambda (p+\delta/2)} \right)\\
&\approx \frac{\delta}{2} \frac{\lambda}{(\mu+\lambda p)} {\rm{ATE}}
\end{align*}
where $\approx$ ignore the terms $O(\delta^3).$ This completes the analysis.

\section{Proof of Theorem \ref{th:dynkin_var}}\label{sec:proof-variance-bound}

\subsection{Entry-wise Non-Expansive Lemma}
To begin, we present a lemma that is the key enabler of establishing the striking variance improvement of the DQ estimator  over the un-biased estimator. The lemma simply states
\begin{replemma}{lem:non-expansive-lemma}[Entry-wise non-expansive lemma]
Let $W: \R^{|\Sscr|} \rightarrow \R^{|\Sscr|}$ be a map denoted by $W(\rho) := (I-P_{1/2})^{\#\top}(P_1-P_0)^{\top} \rho.$ Then, for any $s \in \Sscr$, 
\begin{align*}
\frac{1}{c}\left|W(\rho_{1/2}) (s)\right| \leq \rho_{1/2}(s)
\end{align*}
where $c := 4\frac{\ln(C) +  \ln \left(1/\rho_{\min}\right) + 1}{1-\lambda}.$
\end{replemma}
This is to say, the mapping $\frac{1}{c} W$ does not expand $\rho_{1/2}$ in terms of entry-wise values. To see the necessity of this lemma and gain some intuition, consider a special case where $P_{0}, P_{1}, P_{1/2}, \rho_0, \rho_1, \rho_{1/2}$ are all known, while only the rewards are unknown and can only be sampled under the distribution $\rho_{1/2}.$ For simplicity, assume $r_0=r_1=r$ and the sample for $r_{t} = r(s_t) + \epsilon_{t}$ is i.i.d from $\rho_{1/2}$ with some exogenous noise $\epsilon_{t} \sim \mathcal{N}(0, 1).$ Let us denote the empirical average estimator for $r$ be $\hat{r}.$ 
By CLT, we have 
$$
\sqrt{T}(\hat{r}-r) \overset{d}{\rightarrow} \mathcal{N}(0, D^{-1})
$$
where $D$ is a diagonal matrix with entries $D_{s,s} = \rho_{1/2}(s).$ This limiting variance captures the intuition that, for the state that is rarely visited, the variance for $\hat{r}(s) - r(s)$ can blow up. In fact, consider an un-biased estimator $(\rho_{1}^{\top}-\rho_{0}^{\top})\hat{r}$ for the ATE, $(\rho_1^{\top}-\rho_0^{\top})r$, (this is the un-biased estimator that achieves the optimal variance), we have
\begin{align*}
\sqrt{T} \left((\rho_{1}^{\top}-\rho_{0}^{\top})\hat{r} - {\rm{ATE}}\right)\overset{d}{\rightarrow} \mathcal{N}(0, \sigma_{0}^2)
\end{align*}
where
\begin{align*}
\sigma_0^2 
&:= (\rho_{1}^{\top}-\rho_{0}^{\top}) D^{-1} (\rho_{1}-\rho_{0})\\
&= \sum_{s} \frac{(\rho_1(s) - \rho_0(s))^2}{\rho_{1/2}(s)}.
\end{align*}
Note that there is no guarantee for the likelihood ratio $\frac{\rho_0(s)}{\rho(s)}$ and $\frac{\rho_1(s)}{\rho(s)}$ and in general $\sigma_0^2 = \Omega\left(\frac{1}{\rho_{\min}}\right)$ and this is the price to pay for the un-biased off-policy evaluation. 

On the other hand, one can verify that the DQ estimator is simply
\begin{align*}
{\rm{ATE}}_{D} = \rho_{1/2}^{\top} \left(P_0-P_1\right) (I-P_{1/2})^{\#} \hat{r}.
\end{align*}
This leads to the limiting variance of ${\rm{ATE}}_{\rm DQ}$:
\begin{align*}
\sqrt{T}\left({\rm{ATE}}_{D} - \E_{\rho_{1/2}}[{\rm{ATE}}_{D}]\right) \overset{d}{\rightarrow} \mathcal{N}(0, \sigma_{1}^2)
\end{align*} 
where 
\begin{align*}
\sigma_{1}^2 &:= \rho_{1/2}^{\top} \left(P_0-P_1\right) (I-P_{1/2})^{\#} D^{-1} (\rho_{1/2}^{\top} \left(P_0-P_1\right) (I-P_{1/2})^{\#})^{\top} \\
&=  \norm{\rho_{1/2}^{\top} \left(P_0-P_1\right) (I-P_{1/2})^{\#} D^{-1/2}}^2.
\end{align*}
By the definition of $W$ mapping, we then have
\begin{align*}
\sigma_{1}^2 
&= \sum_{s} \left(W(\rho_{1/2})(s) \frac{1}{\rho_{1/2}(s)^{1/2}}\right)^2\\
&\overset{(i)}{\leq} \sum_{s} \frac{1}{c^2} \rho_{1/2}(s)\\
&=  \frac{1}{c^2}.
\end{align*}
where (i) is due to \cref{lem:non-expansive-lemma}. Then $\sigma_{1}$ is in the order of $\log(1/\rho_{\min})$. In fact, without \cref{lem:non-expansive-lemma}, a loose analysis will provide $\sigma_{1}^2 = \Omega(1/\rho_{\min})$, which is in the same order of $\sigma_0^2$, that shows no advantage of using DQ estimator. Essentially \cref{lem:non-expansive-lemma} characterizes the explicit superiority of evaluating on-policy quantities over off-policy quantities. We believe this novel lemma is of independent interest for the field of OPE. The proof is postponed to the end of the section.

\subsection{Outline of the Proof}
In this section, we present the outline of the proof for \cref{th:dynkin_var}. We aim to use Markov chain CLT (\cite{jones2004markov}) to provide the asymptotic normality of our estimator. Note that Markov chain CLT states that for a Markov chain $X_{1}, X_{2}, \dotsc, $ and a bounded function $u$ with the domain on the state space, there exists $\Sigma_{u}$ such that
\begin{align*}
\sqrt{T}\left(\frac{1}{T}\sum_{t=1}^{T} u(X_t) - u^{*}\right) \overset{d}{\rightarrow} N(0, \Sigma_{u})
\end{align*}
where $u^{*}$ is the expected value of $u$ under the stationary distribution of the Markov chain. 

\textbf{Delta method.} Unfortunately, the estimator $\tauDK$ can not be directly written as an empirical average of some function $u.$ To address this issue, we use ``delta method'' (traced back to \cite{doob1935limiting}, see \cref{lem:mapping-CLT}). In particular, we write $\tauDK = f(u_T)$ as a function of a random vector $u_{T}$ given by $u_{T} := \frac{1}{T}\sum_{t=1}^{T} u(X_{t}).$ Under some minor conditions, ``delta method'' states that 
\begin{align*}
\sqrt{T} \left(f(u_{T}) - f(u^{*})\right) \overset{d}{\rightarrow} N(0, \sigma_{f}^2)
\end{align*}
where $\sigma_{f}^2 := \nabla f(u^{*})^{\top} \Sigma_{u} \nabla f(u^{*})$ and $\nabla f(u^{*})$ is the gradient of $f$ evaluating at the point $u^{*}.$ This forms the basis of proving \cref{th:dynkin_var}. 

\textbf{Linearization.} To simplify the analysis for $\sigma_{f}$, instead of computing $\Sigma_{u}$ explicitly, we ``linearize'' the function $f$ by defining $\tilde{f}(X_{t}) := \nabla f(u^{*})^{\top}(u(X_t)-u^{*})$ and the delta method in fact implies (see \cref{lem:linearization})
\begin{align*}
\sqrt{T}\left(\frac{1}{T}\sum_{t=1}^{T} \tilde{f}(X_t)\right) \overset{d}{\rightarrow} N(0, \sigma_{f}^2),
\end{align*}
i.e., the linearized $f$ converges with the same limiting variance as the original $f.$ Therefore, we can focus on $\tilde{f}$ for analyzing $\sigma_{f}.$ 

\textbf{Bounding $\sigma_f$ with Entry-wise Non-expansive Lemma.} To bound $\sigma_{f}$, we will invoke \cref{lem:bound-for-sigma}, which states that
\begin{align*}
\sigma_{f} \leq \sqrt{2}\sqrt{\frac{2\ln(C)+1}{1-\lambda}}\tilde{f}_{\max}
\end{align*}
where $\tilde{f}_{\max} := \max_{s} |\tilde{f}(s)|.$ Then the problem boils down to bound $\tilde{f}_{\max}$, which will be controlled by \cref{lem:non-expansive-lemma}.

Next, we present the proof in full details. 
\subsection{Delta Method and Linearization}
To begin, consider the Markov chain $X_{t} = (s_{t}, a_{t}, s_{t+1}).$ For $a \in \{0, 1\}$, denote $F^{(a)}, h^{(a)}$ by
\begin{align} \label{eq:def-F-h}
F^{(a)}(X_t) &:= 2E_{s_t}E_{s_{t+1}}^{\top}\cdot 1(a_{t}=a)\\
h^{(a)}(X_t) &:= 2r(s_t, a_t) \cdot E_{s_t}\cdot 1(a_t=a)
 \end{align}
 where $E_{s}$ is a vector with all entries zero except that the $s$-th entry is one. Let $F^{(a)}_{T} \in \R^{|\Sscr| \times |\Sscr|}, h^{(a)}_{T} \in \R^{|\Sscr|}$ be the empirical average of the function $F^{(a)}$ and $h^{(a)}$:
\begin{align*}
F^{(a)}_{T} &:= \frac{1}{T}\sum_{t=1}^{T} F^{(a)}(X_{t})\\
h^{(a)}_{T} &= \frac{1}{T}\sum_{t=1}^{T} h^{(a)}(X_{t}).
\end{align*} 
We aim to write $\tauDK := f(F^{(0)}_T, F^{(1)}_T, h^{(0)}_T, h^{(1)}_T)$ as a function of $F^{(0)}_T, F^{(1)}_T, h^{(0)}_T, h^{(1)}_T$ for applying delta method. To do so, let $D_{T}^{(a)}$ be an diagonal matrix with entries $D_{T}^{(a)}(s, s) = \sum_{s'} F_{T}^{(a)}(s, s').$ One can verify that
\begin{align*}
\hat{V} = (D_{T}^{(0)}+D_{T}^{(1)} - F^{(0)}_T - F^{(1)}_F)^{\#} (h^{(0)}_T + h^{(1)}_{T})
\end{align*}
gives the estimation of $V$-function in \cref{eq:LSTD0-V}. Further, one can verify that with a plugging-in estimator for $Q$, the DQ estimator is given by 
\begin{align*}
\tauDK 
&= f(F^{(0)}_T, F^{(1)}_T, h^{(0)}_T, h^{(1)}_T)\\
&=: \one^{\top}(F^{(1)}_T-F^{(0)}_T)(D_{T}^{(0)}+D_{T}^{(1)} - F^{(0)}_T - F^{(1)}_F)^{\#} (h^{(0)}_T + h^{(1)}_{T})\\
&\quad + \one^{\top} (h^{(1)}_T - h^{(0)}_T).
\end{align*}
By Markov chain CLT, we have when $T$ goes to infinity
\begin{align*}
F^{(0)}_T \rightarrow F^{*}_0 := DP_0,\quad  F^{(1)}_T \rightarrow F^{*}_1 := DP_1\\
h^{(0)}_T \rightarrow h^{*}_0 := Dr_0,\quad  h^{(1)}_T \rightarrow h^{*}_1 := Dr_1
\end{align*}
where $D$ is a diagonal matrix with entries $D_{s,s} = \rho_{1/2}(s).$ Then by the delta method (see \cref{lem:mapping-CLT}), we have\footnote{The group inverse is continuous if we consider the set of matrices with rank $|\Sscr| - 1$ (\cite{rakovcevic1997continuity}).}
\begin{align*}
\sqrt{T} (f(F^{(0)}_T, F^{(1)}_T, h^{(0)}_T, h^{(1)}_T) - f(F^{*}_0, F^{*}_1, h^{*}_0, h^{*}_1)) \overset{d}{\rightarrow} N(0, \sigma_{f}^2)
\end{align*}
which is equivalent to 
\begin{align*}
\sqrt{T} (\tauDK - \E_{\rho_{1/2}}[\tauDK]) \overset{d}{\rightarrow} N(0, \sigma_{f}^2)
\end{align*}
since $ f(F^{*}_0, F^{*}_1, h^{*}_0, h^{*}_1) = \E_{\rho_{1/2}}[\tauDK].$ To analyze $\sigma_{f}$, we consider the ``linearization'' of $f$ around $u^{*} := (F^{*}_0, F^{*}_1, h^{*}_0, h^{*}_1).$ In particular, let $u(X_t) = (F^{(0)}(X_t), F^{(1)}(X_t), h^{(0)}(X_t), h^{(1)}(X_t)).$ Let $(\lambda, V)$ be the average reward and the ``true'' $V$-function under the policy $\pi_{1/2}.$ One can verify that
\begin{align*}
 \tilde{f}(s,a,s')&:=\nabla f(u^{*})^{\top}(u(s, a, s')-u^{*}) \\
 &= (\one^{\top}D(P_1-P_0)(I-P_{1/2})^{\#}D^{-1})E_{s} (r(s,a)-\lambda + V(s') - V(s)) \\
 &\quad + 2(1(a=1)-1(a=0))(V(s') + r(s, a)) - c
\end{align*}
where $c := \E_{\rho_{1/2}}[2(1(a=1)-1(a=0))(V(s') + r(s, a)-\lambda)].$ By \cref{lem:linearization}, we have
\begin{align*}
\sqrt{T}\left(\frac{1}{T}\sum_{t=1}^{T} \tilde{f}(X_t)\right) \overset{d}{\rightarrow} N(0, \sigma_{f}^2).
\end{align*}
Here $\sigma_{f}^2$ is explicitly given by (by Markov Chain CLT)
\begin{align*}
\sigma_{f}^2 
&:= \sum_{s,a,s'} \tilde{f}(s,a,s')^2 \rho_{1/2}(s)P_{a}(s, s') \frac{1}{2} \\
&\quad + 2\sum_{s,a,s'} \rho_{1/2}(s)P_{a}(s,s')\frac{1}{2}\tilde{f}(s,a,s') \sum_{s_1} (I-P_{1/2})^{\#}_{s',s_1} t(s_1)
\end{align*}
where $t(s) = \sum_{a,s'} \tilde{f}(s,a,s')P_{a}(s,s')\rho_{1/2}(s)\frac{1}{2}.$



\subsection{Bound $\sigma_{f}$}
Next, we aim to provide a bound for $\sigma_{f}.$ Note that that the mixing time of $X_{t}$ is the same as $s_{t}$ and by \cref{lem:bound-for-sigma}, we have
\begin{align*}
\sigma_{f} \leq \sqrt{2}\tilde{f}_{\max} \sqrt{\frac{2\ln(C)+1}{1-\lambda}}
\end{align*}
where $\tilde{f}_{\max} = \max_{s,a,s'} |\tilde{f}(s,a,s')|.$ Then the problem boils down to bound $\tilde{f}_{\max}.$ 

Let $z_{s} := (\one^{\top}D(P_1-P_0)(I-P_{1/2})^{\#}D^{-1})E_{s}.$ By the definition of $\tilde{f}$, we have
\begin{align*}
\tilde{f}_{\max} \leq 2(z_{\max}+2) (V_{\max} + r_{\max})
\end{align*}
where $z_{\max} := \max_{s} |z_{s}|, V_{\max} := \max_{s} |V(s)|.$ For $V_{\max}$, we have
\begin{align*}
\norm{V}_{\infty} 
&= \norm{(I-P_{1/2})^{\#} r}_{\infty}\\
&\leq \norm{(I-P_{1/2})}_{1,\infty} r_{\max}\\
&\leq \frac{2\ln(C)+1}{1-\lambda} r_{\max}.
\end{align*}
For $z_{\max}$, note that
\begin{align*}
z_{s} 
&= \one^{\top}D(P_1-P_0)(I-P_{1/2})^{\#}D^{-1})E_{s}\\
&= \rho_{1/2}^{\top} (P_1-P_0)(I-P_{1/2})^{\#} D^{-1} E_s\\
&= \rho_{1/2}^{\top} (P_1-P_0)(I-P_{1/2})^{\#}E_s \frac{1}{\rho_{1/2}(s)}.
\end{align*}
Then, we can invoke \cref{lem:non-expansive-lemma} to obtain that 
\begin{align*}
z_{s} = W(\rho_{1/2})(s) \frac{1}{\rho_{1/2}(s)} \leq 4\frac{\ln(C) +  \ln \left(1/\rho_{\min}\right) + 1}{1-\lambda}.
\end{align*}
Combining all together, we have
\begin{align*}
\sigma_{f} \leq C' \log\left(\frac{1}{\rho_{\min}}\right) \left(\frac{1}{1-\lambda}\right)^{5/2} r_{\max}
\end{align*}
for some constant $C'$ that depends (polynomially) on $\log(C)$, which completes the proof of \cref{th:dynkin_var}.

\subsection{Proof of \cref{lem:non-expansive-lemma}}
The only thing remaining is the proof of \cref{lem:non-expansive-lemma}. 

Note that we have $W(\rho^{1/2}) = (I-P_{1/2})^{\#\top}(P_1-P_0)^{\top} \rho_{1/2}.$ Let $v:= (P_1 - P_0)^{\top} \rho_{1/2}$. We will show first (i) $\frac{1}{2} v$ is entry-wise non-expansive; and then (ii) $(I-P_{1/2})^{\top}v$ is entry-wise bounded. 
 
To begin, we claim that $|(P_1-P_0)(s,s')| \leq 2 P_{1/2}(s,s')$ for any $s$ and $s'.$ This is due to $2P_{1/2} = P_{0} + P_{1}$ and for any $a\geq 0, b\geq 0$, we have $|a-b| \leq a+b$.

Furthermore, note that $\rho_{1/2}^{\top} P_{1/2} = \rho_{1/2}^{\top}.$ Then for any $s'$, 
\begin{align*}
|v(s')| 
&= \left|\sum_{s} \rho_{1/2}(s) (P_{1}-P_{0})_{s, s'}\right|\\
&\leq \sum_{s} \rho_{1/2}(s) | (P_{1}-P_{0})_{s, s'}|\\
&\leq \sum_{s} \rho_{1/2}(s) 2 P_{1/2}(s,s')\\
&\leq 2 \rho_{1/2}(s').
\end{align*}
This is to say, $\frac{v}{2}$ is entry-wise bounded by $\rho_{1/2}$. Furthermore, this bound continues to hold after any transformation for $v$ under $P_{1/2}$:
\begin{align*}
|(v^{\top}P_{1/2}^{k})(s')| 
&= \left|\sum_{s} v(s) P_{1/2}^{k}(s, s')\right|\\
&\leq 2\sum_{s} \rho_{1/2}(s) P_{1/2}^{k}(s, s')\\
&\leq 2\rho_{1/2}(s').
\end{align*}

Next, consider
\begin{align*}
v^{\top}(I-P_{1/2})^{\#}E_{s}
&= \sum_{k=0}^{\infty} v^{\top} (P_{1/2}^{k} - \one\rho_{1/2}^{\top})E_{s}\\
&=: \sum_{k=0}^{\infty} a_{k}.
\end{align*}
Note that $|(v^{\top}P_{1/2}^{k})E_{s}| \leq 2 \rho_{1/2}(s).$ Further, $|v^{\top}\one\rho_{1/2}^{\top}E_{s}| \leq |v^{\top} \one| \rho_{1/2}(s) \leq 2\rho_{1/2}(s).$ Therefore, for any $k$,  $|a_{k}| \leq 4\rho_{1/2}(s).$
We also have
\begin{align*}
|a_{k}| 
&\leq \norm{v^{\top}}_{1} \norm{P^{k}-\one\rho^{\top}}_{1,\infty} \norm{E_{s}}_{\max}\\
&\leq 2 C\lambda^{k}.
\end{align*}
Using the same trick in proving \cref{lem:group-inverse-L1}, we have
\begin{align*}
\frac{1}{\rho_{1/2}(s)}\sum_{k=0}^{\infty} |a_{k}| 
&\leq\sum_{k=0}^{\infty} \min\left(4, 2 C\lambda^{k} \frac{1}{\rho_{1/2}(s)}\right) \\
&\leq 2\left(\sum_{k=0}^{\log_{\lambda}(C/\rho_{1/2}(s))-1} 2 + \sum_{k=\log_{\lambda}(C/\rho_{1/2}(s))} \frac{C}{\rho_{1/2}(s)} \lambda^{k}\right)\\
&= \frac{4\ln(C/\rho_{1/2}(s))}{-\ln(\lambda)} + \frac{2}{1-\lambda}\\
&\leq \frac{4\ln(C/\rho_{1/2}(s))}{1-\lambda} + \frac{2}{1-\lambda}\\
&\leq 4\frac{\ln(C) +  \ln \left(1/\rho_{\min}\right) + 1}{1-\lambda}.
\end{align*}
Then $\left|W(\rho_{1/2})(s)\right| = |\sum_{k}a_{k}| \leq c \rho_{1/2}(s)$ with $c := 4\frac{\ln(C) +  \ln \left(1/\rho_{\min}\right) + 1}{1-\lambda}.$ This completes the proof.

\section{Proof of Theorem \ref{th:cramer-rao}}
The proof is based on multi-variate Cram\'{e}r-Rao bound. To begin, we assume $P_{0}(s, s') > 0,P_{1}(s, s') > 0$ for all $(s, s')$.\footnote{The general case follows a similar proof and is omitted for simplicity.}  

Consider the parameters $\theta = (F_0, F_1)$ which controls the transition matrices
\begin{align*}
P_0(s, s') = \frac{F_0(s, s')}{\sum_{s''} F_0(s, s'')}, \quad P_{1}(s, s') = \frac{F_1(s, s')}{\sum_{s''} F_1(s, s'')}. 
\end{align*}
Given the observations $X_{t} = (s_{t}, a_{t}), t=0, 1, \dotsc, T$ under the policy $\pi_{1/2}$. We can compute the log-likelihood 
\begin{align*}
l(X_{1},\dotsc, X_{T}~|~ \theta) = \left(\sum_{s,a,s'} n_{s,a,s'} \cdot \ln(P_{a}(s, s'))\right) - T\ln(2)
\end{align*}
where $n_{s,a,s'} = \sum_{t} 1(s_t=s, a_t=a, s_{t+1}=s').$ Then, the entry of the Fisher information matrix with $\theta^{*} = (P_0, P_1)$ is given by
\begin{align*}
I_{k,m} 
&= -\E_{X}\left[\frac{\partial  l(X|\theta^{*})}{\partial \theta_{k} \partial \theta_{m}}\right] \\
&= -\E_{X}\left[ \sum_{s,a,s'} \frac{n_{s,a,s'}}{P_{a}(s,s')} \cdot \frac{\partial P_{a}(s, s')}{\partial \theta_{k} \partial \theta_{m}} \right] + \E_{X}\left[ \sum_{s,a,s'} \frac{n_{s,a,s'}}{P_{a}(s,s')^2} \cdot \frac{\partial P_{a}(s, s')}{\partial \theta_{k}}  \frac{\partial P_{a}(s, s')}{\partial \theta_{m}}  \right] \\
&=  -T\sum_{s,a,s'} \frac{1}{2}\rho_{1/2}(s) \cdot \frac{\partial P_{a}(s, s')}{\partial \theta_{k} \partial \theta_{m}}  + T\sum_{s,a,s'} \frac{1}{2}\frac{\rho_{1/2}(s)}{P_{a}(s,s')} \cdot \frac{\partial P_{a}(s, s')}{\partial \theta_{k}}  \frac{\partial P_{a}(s, s')}{\partial \theta_{m}} \\
&= -T\frac{\partial 1}{\partial \theta_{k} \partial \theta_{m}} + T\sum_{s,a,s'} \frac{1}{2}\frac{\rho_{1/2}(s)}{P_{a}(s,s')} \cdot \frac{\partial P_{a}(s, s')}{\partial \theta_{k}}  \frac{\partial P_{a}(s, s')}{\partial \theta_{m}}\\
&=  T\sum_{s,a,s'} \frac{1}{2}\frac{\rho_{1/2}(s)}{P_{a}(s,s')} \cdot \frac{\partial P_{a}(s, s')}{\partial \theta_{k}}  \frac{\partial P_{a}(s, s')}{\partial \theta_{m}}.
\end{align*}
Consider $\theta_k = F_0(i, j), \theta_{m} = F_0(i, l)$, we have
\begin{align*}
\frac{1}{T} I_{k, m} = \frac{1}{2} \frac{\rho_{1/2}(i)}{P_0(i,j)} 1(j=l) - \frac{1}{2} \rho_{1/2}(i).
\end{align*}
For $\theta_{k} = F_1(i, j), \theta_{m} = F_1(i, l)$, we have
\begin{align*}
\frac{1}{T} I_{k, m} = \frac{1}{2} \frac{\rho_{1/2}(i)}{P_1(i,j)} 1(j=l) - \frac{1}{2} \rho_{1/2}(i).
\end{align*}
Otherwise it is easy to see that $I_{k,m} = 0.$ 

Next, consider an unbiased estimator $\hat{\tau}(X_1, \dotsc, X_{T})$ for $\rm{ATE}$. We can write ${\rm{ATE}} = f(F_0, F_1)$ as a function of $F_0$ and $F_1.$ Further, one can verify that
\begin{align*}
\frac{\partial f(\theta^{*})}{\partial F_0(i, j)} &= -\rho_0(i)(V_{\pi_0}(j) - V_{\pi_0}(i) + r_0(i) - \lambda^{\pi_0})\\
\frac{\partial f(\theta^{*})}{\partial F_1(i, j)} &= \rho_1(i)(V_{\pi_1}(j) - V_{\pi_1}(i) + r_1(i) - \lambda^{\pi_1}).
\end{align*} 
Finally, we aim to use the multi-variate Cram\'{e}r-rao bound. To do so, let $v^{(1)}_{i}$ be an vector with the $j$-th element being $v^{(1)}_i(j) = \rho_1(i)(V_{\pi_1}(j) - V_{\pi_1}(i) + r_1(i) - \lambda^{\pi_1})$. Let 
$$
I^{(1)}_{i}(j, l) = \frac{T}{2} \frac{\rho_{1/2}(i)}{P_1(i,j)} 1(j=l) - \frac{T}{2} \rho_{1/2}(i)
$$
be a matrix. Similarly, define $v^{(0)}_i$ and $I^{(0)}_{i}$ accordingly. Then, by the multi-variate Cram\'{e}r-rao bound for the singular Fisher information matrix \cite{stoica2001parameter}, we have
\begin{align*}
T{\rm Var}(\hat{\tau}) 
&\geq  \sum_{i} v^{(1) \top}_i (I^{(1)}_i)^{-1} v^{(1)}_i + \sum_{i} v^{(0) \top}_i (I^{(0)}_i)^{-1} v^{(0)}_i\\
&= 2\sum_{i} \frac{\rho_0(i)^2}{\rho_{1/2}(i)} \sum_{j} P_{0}(i, j) (V_{\pi_0}(j) - V_{\pi_0}(i) + r_0(i) - \lambda^{\pi_0})^2\\
&\quad + 2\sum_{i} \frac{\rho_1(i)^2}{\rho_{1/2}(i)} \sum_{j} P_{1}(i, j) (V_{\pi_1}(j) - V_{\pi_1}(i) + r_1(i) - \lambda^{\pi_1})^2
\end{align*}
which completes the proof. 

\subsection{Unbiased Estimator that achieves the lower-bound}
\label{sec:efficient-ope}

In this section, we construct an LSTD(0)-type OPE estimator that achieves the aforementioned Cram\'{e}r-Rao lower bound. To do so, we solve the following least square optimization problems that are similar to \cref{eq:LSTD0-V}, 
\begin{align}
(\hat{V}_1, \hat{\lambda}^{\pi_1}) &= \arg\min_{\hat V, \hat \lambda}
\sum_{s \in \Sscr} 
\left(\sum_{t, s_{t}=s, a_t=1} 
r(s_t, a_t) - \hat{\lambda} + \hat V(s_{t+1}) - \hat V(s_t)
\right)^2\\
(\hat{V}_0, \hat{\lambda}^{\pi_0}) &= \arg\min_{\hat V, \hat \lambda}
\sum_{s \in \Sscr} 
\left(\sum_{t, s_{t}=s, a_t=0} 
r(s_t, a_t) - \hat{\lambda} + \hat V(s_{t+1}) - \hat V(s_t)
\right)^2.
\end{align}
Then, the estimation for the average treatment effect is given by 
\begin{align*}
\tau_{{\rm{off}}} := \hat{\lambda}^{\pi_1} - \hat{\lambda}^{\pi_0}.
\end{align*}
To analyze the variance of $\hat{\tau}$, we follow the similar analysis as in Theorem \ref{th:dynkin_var}. To begin, one can verify that  
\begin{align*}
\hat{\lambda}^{\pi_0} - \lambda^{\pi_0} = \left(\hat{\rho}^{\top}_0 - \rho_0^{\top}\right)r_0
\end{align*}
where $\hat{\rho}_0$ is the empirical stationary distribution for the empirical transition matrix $\hat{P}_0$ ($\hat{\rho}_1$ and $\hat{P}_1$ can be defined accordingly).

Next, by the perturbation bound of $\hat{\rho}_0$, we have
\begin{align*}
    \hat{\rho}_0^{\top} - \rho_0^{\top} = \rho_0^{\top} (\hat{P}_0 - P_0) (I - \hat{P}_0)^{\#}.
\end{align*}
Hence,
\begin{align*}
\hat{\lambda}_0 - \lambda^{\pi_0} 
&= (\hat{\rho}_0^{\top} - \rho_0^{\top})r_0\\
&= \rho_0^{\top} (\hat{P}_0 - P_0) (I - \hat{P}_0)^{\#} r_0.
\end{align*}
Note that $\hat{P}_0$ is a function of $F^{(0)}_T$ ($\hat{P}_0(i, j) = F^{(0)}_T(i,j) / \sum_{k} F^{(0)}_T(i,k)$, $F^{(0)}$ is defined in \cref{eq:def-F-h}). Therefore, we can define $f_0(F^{(0)}_T) := \hat{\lambda}_0 - \lambda^{\pi_0}$ as a function of $F^{(0)}_T.$ Similarly, we can define 
$$
f_1(F^{(1)}_T) := \hat{\lambda}_1 - \lambda^{\pi_1} = \rho_1^{\top} (\hat{P}_1 - P_1) (I - \hat{P}_1)^{\#} r_1
$$
Then by \cref{lem:linearization}, we have the asymptotic normality for $\eoff$:
$$
\sqrt{T}(\eoff - {\rm{ATE}}) = \sqrt{T}(f_1(F^{(1)}_T) - f_0(F^{(0)}_T)) \overset{d}{\rightarrow} N(0, \sigoff^2).
$$
In order to compute $\sigoff$ by using \cref{lem:linearization}, we will linearize $f_1-f_0$ around $(F_0^{*}, F_1^{*})$. To do so, consider
\begin{align*}
    \frac{\partial f_0(F_0)}{\partial (F_0)(i, j)} &= \rho_0^{\top} \frac{\partial (\hat{P}_0 - P_0)}{\partial F_0(i, j)} (I - P_0)^{-1} (r_0 - \lambda^{\pi} \one) \\
    &\quad +  \rho_0^{\top} (P_0 - P_0) \frac{\partial (I - P_0)^{-1}}{\partial (F_0)(i,j)} (r_0 - \lambda^{\pi} \one)\\
    &= \rho_0^{\top} \frac{\partial \hat{P}_0}{\partial F_0(i, j)} V_0\\
    &= \sum_{k} \rho_0(i) V_0(k)  \frac{\partial \hat{P}_0(i,k)}{\partial F_0(i, j)} 
\end{align*}
Note that $\hat{P}(i, k) = \hat{F}_0(i,k) / \sum_{l} \hat{F}_0(i, l)$. Therefore,
\begin{align*}
    \frac{\partial f_0(F_0)}{\partial (F_0)(i, j)}
    &= \sum_{k} \rho_0(i) V_0(k) \frac{\partial \frac{F_0(i,k)}{\sum_{l} F_0(i, l)}}{\partial F_0(i, j)} \\
    &= \sum_{k} \rho_0(i) V_0(k) \frac{1(j=k)\sum_{l} F_0(i, l) - F_0(i, k)}{(\sum_{l} F_0(i, l))^2} \\
    &= \sum_{k} \rho_0(i) V_0(k) \frac{1(j=k)\rho(i) - \rho(i)P_0(k|i)}{\rho(i)^2}\\
    &= \frac{\rho_0(i)}{\rho(i)}V_0(j) - \frac{\rho_0(i)}{\rho(i)}\sum_{k} P_0(k|i)V_0(k)\\
    &= \frac{\rho_0(i)}{\rho(i)}(V_0(j) - V_0(i) + r_0(i) - \lambda^{\pi_0}). 
\end{align*}
Hence, the linearization of $f_0$ is 
\begin{align*}
&\sum_{ij} \frac{\partial f_0(F_0)}{\partial (F_0)(i, j)} \left(\left(F_0(s, s', a)\right)_{ij} - F_0(i,j) \right)\\
&= 2\cdot 1(a=0) \frac{\rho_0(s)}{\rho(s)}(V_0(s') - V_0(s) + r_0(s) - \lambda^{\pi_0}) - \sum_{ij}\rho_0(i)(V_0(j)P_0(j|i) - V_0(i) + r_0(i) - \lambda^{\pi_0})\\
&=  2\cdot 1(a=0) \frac{\rho_0(s)}{\rho(s)}(V_0(s') - V_0(s) + r_0(s) - \lambda^{\pi_0}).
\end{align*}
The similar linearization can be done for $f_1.$ Then the linearization of $f_1-f_0$ is
\begin{align*}
    g((s, s', a)) 
    &= -2\cdot 1(a=0) \frac{\rho_0(s)}{\rho(s)}(V_0(s') - V_0(s) + r_0(s) - \lambda^{\pi_0})\\ 
    &\quad + 2\cdot 1(a=1) \frac{\rho_1(s)}{\rho(s)}(V_1(s') - V_1(s) + r_1(s) - \lambda^{\pi_1}).
\end{align*}
Note that for any $E[g(X_k)|X_1=(s,s',a)] = 0$ for any $(s, s', a)$ and $k\geq 2$. Hence
\begin{align*}
    \sigoff^2 
    &={\rm{Var}}_{\rho}(g) + 2\sum_{k=2}^{\infty} {\rm{Cov}}_{\rho}[g(X_{k})g(X_1)] \\
    &={\rm{Var}}_{\rho}(g) \\
    &=2 \sum_{s,s'} \frac{\rho_0(s)^2 P_0(s'|s)}{\rho(s)}(V_0(s') - V_0(s) + r_0(s) - \lambda^{\pi_0})^2\\
    &\quad + 2 \sum_{s,s'} \frac{\rho_1(s)^2 P_1(s'|s)}{\rho(s)}(V_1(s') - V_1(s) + r_1(s) - \lambda^{\pi_1})^2
\end{align*}
which completes the proof. 

\section{Proof of Theorem \ref{th:price}}
We construct a birth-death Markov chain with $n$ states. Let $P \in \R^{n\times n}$ be a transition matrix where $P(s, s+1) = \frac{1}{4} - \delta, P(s, s-1) = \frac{1}{4}$ and $P(s, s) = 1/2+\delta$ (exception at two ends with $P(0, 0)=3/4+\delta$ and $P(n-1,n-1) = 3/4$).

Let the stationary distribution of $P$ be $\rho.$ Then $\rho(s) = c\left(1-4\delta\right)^{s}$ for $0\leq s \leq n-1$ and $c := \frac{1}{\sum_{s} \left(1-4\delta\right)^{s}}$ is a constant. By \cite{chen2013mixing}, we have the spectral gap of the chain is in the order of $\gamma = O(1/n)$. Furthermore, the mixing time of the chain is bounded by
\begin{align*}
\norm{P^{k} - \one \rho^{\top}}_{1,\infty} 
&\leq \left(\frac{1}{\rho_{\min}}\right) (1-\gamma)^{k}\\
\norm{(I-P)^{\#}}_{1,\infty} 
&\leq \log \left(\frac{1}{\rho_{\min}}\right)O(n) = O(n^2).
\end{align*}
Following the same proof in \cref{th:dynkin_var}, we have the on-policy variance is bounded by 
\begin{align*}
\sigon = O(n^{6}).
\end{align*}
On the other hand, consider the node $k$ where $\sum_{s=k}^{n} \rho(s) \leq c'\delta / n^2$ and $\sum_{s=k-1}^{n} \rho(s) > c'\delta / n^2$ for some sufficient small constant $c'$. Let $P_1$ be the same as $P$ except $\forall s\geq k$
\begin{align*}
P_1(s, s+1) &= \frac{1}{4} \\
P_1(s, s) &= \frac{1}{2}.
\end{align*}
Let $\rho_1$ be the stationary distribution of $P_1.$ One can verify that $\rho_1(n) = O(1/n^2).$ We then construct $r$ such that $r(n, 1) = 1$ and $\lambda^{\pi_1} = 0.$ Then
\begin{align*}
\sigoff 
&\geq \sqrt{2\frac{\rho_1(n)^2}{\rho(n)}\frac{3}{4}} \\
&= \Omega\left(\frac{e^{cn}}{n^2}\right)
\end{align*}
for some constant $c$. Therefore,
\begin{align*}
\frac{\sigon}{\sigoff} = O\left(\frac{n}{e^{c' n}}\right)
\end{align*}
for some constant $c'.$ Next, consider the bias of DQ estimator. Suppose ${\rm ATE}=\delta$ without loss (one can always achieve this by adding some constants to $r$). Let $P_0 = 2 \cdot P_1 - P$ and $\rho_0$ be the stationary distribution of $P_0.$ One can verify that
\begin{align*}
\norm{\rho_1 - \rho}_{1} = O(\delta/n^2), \norm{\rho_0-\rho}_1 = O(\delta/n^2).
\end{align*}
Furthermore, following the proof in \cref{th:dynkin_bias}, we have
\begin{align*}
|(\mathrm{ATE}-\E[\hat{\mathrm{ATE}}_{\rm DQ}])/\mathrm{ATE}|
&\leq (\norm{\rho_1 - \rho}_{1} + \norm{\rho_0 - \rho}_{1}) \norm{I-P}^{\#}_{1,\infty} \\
&\leq C\cdot c'\delta \frac{1}{n^2} n^2\\
&\leq \delta 
\end{align*}
for sufficient small constant $c'.$ This completes the proof. 

\section{Technical Lemmas}
%
\begin{replemma}{lem:group-inverse-L1}
    Suppose $P \in \R^{n\times n}$ is the transition matrix of a finite-state aperiodic and irreducible Markov Chain and $\rho$ is the stationary distribution. Suppose there exists $C$ and $\lambda$ such that for any $k=0,1,\dotsc$
    $$
    \norm{P^{k}-\one\rho^{\top}}_{1,\infty} \leq C \lambda^{k}.
    $$
    Then
    \begin{align*}
        \norm{(I-P)^{\#}}_{1,\infty} \leq \frac{2\ln(C)+1}{1-\lambda}.
    \end{align*}
\end{replemma}
\begin{proof}
Note that 
\begin{align*}
    A &= (I-P+1\rho^{\top})^{-1} - 1\rho^{\top}\\
    &= \sum_{k=0}^{\infty} \left( P^{k} - 1\rho^{\top}\right).
\end{align*}
Then
\begin{align*}
    \norm{A}_{1,\infty} 
    &\leq \sum_{k=0}^{\infty} \norm{P^{k} - 1\rho^{\top}}_{1,\infty}\\
    &\leq \sum_{k=0}^{\infty} \min\left(2, C \lambda^{k}\right)\\
    &\leq \sum_{k=0}^{\log_{\lambda}(1/C)-1} 2 + \sum_{k=\log_{\lambda}(1/C)}^{\infty} C\lambda^{k}\\
    &\leq 2\log_{\lambda}(1/C) + \frac{1}{1-\lambda}\\
    &= 2\frac{\ln(C)}{-\ln(\lambda)} + \frac{1}{1-\lambda}\\
    &\overset{(i)}{\leq} \frac{2\ln(C)+1}{1-\lambda}
\end{align*}
where (i) is due to $-\ln(x) \leq 1 - x$ for $x>0.$
\end{proof}

\begin{lemma}\label{lem:bound-for-sigma}
    For a finite-state aperiodic and irreducible Markov Chain $X_1, X_2, \dotsc, X_{t}$. Let $P$ be the transition matrix, $\rho$ be the stationary distribution, and $\Sscr$ be the state space. Suppose there exists $C$ and $\lambda$ such that for $k=0,1,\dotsc$,
    $$
    	\norm{P^{k} - \one\rho^{\top}}_{1,\infty} \leq C \lambda^{k}.
    $$ 
Then for any bounded function $f: \Sscr \rightarrow [a, b]$, there exists $\sigma$ such that when $T$ goes to infinity, 
    \begin{align} \label{eq:CLT}
        \frac{1}{\sqrt{T}} \sum_{t=1}^{T} \left(f(X_t) - f^{*}\right) \overset{d}{\rightarrow} N(0, \sigma^2)
    \end{align}
    where $f^{*} = \E_{\rho}(f)$ is the expected value of $f$ under the stationary distribution and 
    \begin{align}\label{eq:variance-bound}
        \sigma \leq \sqrt{2}(b-a)\sqrt{\frac{2\ln(C)+1}{1-\lambda}}.
    \end{align}
\end{lemma}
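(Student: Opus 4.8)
The plan is to invoke the Markov chain CLT for the existence of $\sigma$ together with its standard covariance-series formula for $\sigma^{2}$, then collapse that series into the group inverse $(I-P)^{\#}$ and bound it entrywise using \cref{lem:group-inverse-L1}. First I would center the function: set $g := f - f^{*}\one$, so that $\E_{\rho}[g]=0$ and, since $f(s),f^{*}\in[a,b]$, we get $\norm{g}_{\infty}\le b-a$. The hypothesis $\norm{P^{k}-\one\rho^{\top}}_{1,\infty}\le C\lambda^{k}$ forces geometric (indeed uniform) ergodicity, so the Markov chain CLT (\cite{jones2004markov}) gives the convergence in \eqref{eq:CLT}, with limiting variance
\[
\sigma^{2} = \E_{\rho}\!\left[g(X_{1})^{2}\right] + 2\sum_{k=1}^{\infty}\E_{\rho}\!\left[g(X_{1})g(X_{1+k})\right],
\]
the series converging absolutely. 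Writing $\ip{u}{v}_{\rho} := \sum_{s}\rho(s)u(s)v(s)$ and using $\E_{\rho}[g(X_{1})g(X_{1+k})] = \ip{g}{P^{k}g}_{\rho}$, this reads $\sigma^{2} = \ip{g}{g}_{\rho} + 2\sum_{k\ge1}\ip{g}{P^{k}g}_{\rho}$.

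Second, I would collapse the series. Because $\rho^{\top}g=0$, the identity $(I-P)^{\#}=\sum_{k=0}^{\infty}(P^{k}-\one\rho^{\top})$ (already used in the proof of \cref{lem:group-inverse-L1}) gives $(I-P)^{\#}g=\sum_{k\ge0}P^{k}g$, hence $\sum_{k\ge1}P^{k}g = (I-P)^{\#}g - g$ and therefore
\[
\sigma^{2} = 2\,\ip{g}{(I-P)^{\#}g}_{\rho} - \ip{g}{g}_{\rho} \;\le\; 2\,\ip{g}{(I-P)^{\#}g}_{\rho},
\]
after dropping the nonnegative term $\ip{g}{g}_{\rho}$. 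I then bound the remaining inner product entrywise: $\ip{g}{(I-P)^{\#}g}_{\rho}\le \norm{g}_{\infty}\,\norm{(I-P)^{\#}g}_{\infty}\sum_{s}\rho(s)\le \norm{g}_{\infty}^{2}\,\norm{(I-P)^{\#}}_{1,\infty}$, where the last step uses $\norm{Mv}_{\infty}\le\norm{M}_{1,\infty}\norm{v}_{\infty}$. Applying \cref{lem:group-inverse-L1} to bound $\norm{(I-P)^{\#}}_{1,\infty}\le\frac{2\ln(C)+1}{1-\lambda}$ and using $\norm{g}_{\infty}\le b-a$ yields $\sigma^{2}\le 2(b-a)^{2}\frac{2\ln(C)+1}{1-\lambda}$, which is \eqref{eq:variance-bound} after taking square roots.

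The only genuinely delicate ingredient is the appeal to the Markov chain CLT: one must justify that for a bounded functional of a uniformly ergodic finite chain the asymptotic variance is exactly the absolutely convergent covariance series above (and is independent of the initial distribution of $X_{1}$). Once that is granted, everything else is a short computation. If one prefers to avoid quoting the group-inverse identity, an equivalent route is to bound each covariance term directly, $|\ip{g}{P^{k}g}_{\rho}|\le \norm{g}_{\infty}\norm{P^{k}g}_{\infty}\le(b-a)^{2}\min(1,C\lambda^{k})$ (using that $P^{k}$ is stochastic and $\rho^{\top}g=0$, together with the mixing bound), so that $\sigma^{2}\le 2\sum_{k\ge0}|\ip{g}{P^{k}g}_{\rho}| \le 2(b-a)^{2}\sum_{k\ge0}\min(2,C\lambda^{k})\le 2(b-a)^{2}\frac{2\ln(C)+1}{1-\lambda}$, where the final sum is estimated exactly as in the proof of \cref{lem:group-inverse-L1}; this re-derives \eqref{eq:variance-bound} straight from the covariance series.
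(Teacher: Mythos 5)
Your proposal is correct and follows essentially the same route as the paper: invoke the Markov chain CLT for \eqref{eq:CLT}, write the limiting variance as the centered autocovariance series, collapse it into $2\ip{g}{(I-P)^{\#}g}_{\rho}$ minus a nonnegative term using $\rho^{\top}g=0$, and bound the result by $2\norm{g}_{\infty}^{2}\norm{(I-P)^{\#}}_{1,\infty}$ via \cref{lem:group-inverse-L1}. The termwise variant you sketch at the end is the same estimate carried out summand by summand; no gaps.
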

\begin{proof}
Note that \cref{eq:CLT} is simply due to the Markov chain CLT (\cite{jones2004markov}). Let $D$ be an diagonal matrix with entries $D_{ii} = \rho_{i}.$ \cite{jones2004markov} further states that
\begin{align*}
\sigma^2 
&= {\rm Var}_{\rho}(f) + 2\sum_{k=2}^{\infty} \E_{\rho}[(f(X_{1})-f^{*})(f(X_{k}) - f^{*})]\\
&= (f - f^{*})^{\top} D (f-f^{*}) + 2\sum_{k=1}^{\infty} (f-f^{*})^{\top} D P^{k} (f-f^{*})\\
&=  2\sum_{k=0}^{\infty} (f-f^{*})^{\top} D (P^{k} - \one \rho^{\top}) (f-f^{*}) - (f - f^{*})^{\top} D (f-f^{*})\\
&\leq 2\sum_{k=0}^{\infty} (f-f^{*})^{\top} D (P^{k} - \one \rho^{\top}) (f-f^{*})\\
&\leq 2 \norm{(f-f^{*})^{\top} D}_{1} \norm{I-P}^{\#}_{1,\infty} \norm{f-f^{*}}_{\max}\\
&\overset{(i)}{\leq } 2\norm{f-f^{*}}_{\max}^2 \frac{2\ln(C)+1}{1-\lambda}.
\end{align*}
where (i) is due to \cref{lem:group-inverse-L1}. Therefore, 
\begin{align*}
\sigma \leq \sqrt{2}(b-a) \sqrt{\frac{2\ln(C)+1}{1-\lambda}}.
\end{align*}
\end{proof}

\begin{lemma}[Theorem 6.2 \cite{konda2002actor}]\label{lem:mapping-CLT}
    Let $U_{k}$ be a sequence of random variables in $\R^{p}$ converging in probability to $u$. Let $a_{k}$ be a deterministic non-negative sequence increasing to $\infty.$ Let $\sqrt{\alpha_{k}}(U_k - u)$ converge in distribution to $N(0, \Gamma).$ Let $f: R^{p} \rightarrow R^{q}$ be a function twice differentiable in a neighborhood of $u$. Then, denoting the Jacobian of $f$ at $u$ by $\nabla f(u)$, we have
    \begin{enumerate}
        \item $f(U_k)$ converges in probability to $f(u).$
        \item $\sqrt{\alpha_k}(f(U_{k}) - f(u))$ converges in distribution to $N(0, \nabla f(u^{*}) \Gamma \nabla f(u^{*})^{\top}).$
    \end{enumerate}
\end{lemma}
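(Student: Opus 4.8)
The plan is to recognize this as the classical \emph{delta method} (continuous-mapping plus Taylor expansion) and prove the two claims in sequence. Throughout I write $\alpha_k$ for the deterministic sequence increasing to infinity and abbreviate $J := \nabla f(u)$ for the Jacobian at the limit point; the claimed limiting covariance is $J \Gamma J^{\top}$.

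For the first claim, I would simply invoke the continuous mapping theorem. Twice-differentiability of $f$ in a neighborhood of $u$ implies $f$ is continuous there, and since $U_k \to u$ in probability, the continuous mapping theorem immediately yields $f(U_k) \to f(u)$ in probability.

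For the second claim, the key device is a first-order Taylor expansion with a controlled remainder. I would write
$$f(U_k) - f(u) = J\,(U_k - u) + R_k,$$
where, by twice-differentiability on a neighborhood $N$ of $u$, the remainder obeys $\|R_k\| \le M \|U_k - u\|^2$ whenever $U_k \in N$, with $M$ a bound on the Hessian over $N$. Multiplying by $\sqrt{\alpha_k}$ gives
$$\sqrt{\alpha_k}\left(f(U_k) - f(u)\right) = J \cdot \sqrt{\alpha_k}\,(U_k - u) + \sqrt{\alpha_k}\,R_k.$$
The first term on the right is the fixed linear map $J$ applied to $\sqrt{\alpha_k}(U_k - u)$, which converges in distribution to $N(0,\Gamma)$ by hypothesis; the continuous mapping theorem then sends its image to $N(0, J\Gamma J^{\top})$, matching the asserted covariance. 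It remains to show the remainder is asymptotically negligible, $\sqrt{\alpha_k}\,R_k \to 0$ in probability, after which Slutsky's theorem combines the two pieces and delivers the conclusion.

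The main obstacle — and really the only substantive step — is controlling the remainder. Since $\sqrt{\alpha_k}(U_k - u)$ converges in distribution, it is tight, i.e.\ $\sqrt{\alpha_k}(U_k - u) = O_p(1)$, so $\|U_k - u\| = O_p(\alpha_k^{-1/2})$. This tightness has two consequences: first, since $U_k \to u$ in probability, $\PR(U_k \in N) \to 1$, so the Hessian bound applies on an event of probability approaching one and the remainder estimate is asymptotically valid; second, squaring gives $\|U_k - u\|^2 = O_p(\alpha_k^{-1})$, whence
$$\big\|\sqrt{\alpha_k}\,R_k\big\| \le M\,\sqrt{\alpha_k}\,\|U_k - u\|^2 = O_p\big(\alpha_k^{-1/2}\big) \to 0$$
in probability because $\alpha_k \to \infty$. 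Combining the distributional limit of the linear term with this vanishing remainder via Slutsky's theorem completes the proof of the second claim.
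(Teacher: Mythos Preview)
The paper does not supply its own proof of this lemma: it is quoted verbatim as Theorem~6.2 of \cite{konda2002actor} and used as a black box, so there is no in-paper argument to compare against. Your proposal is a correct and standard proof of the delta method via continuous mapping, first-order Taylor expansion, tightness of $\sqrt{\alpha_k}(U_k-u)$, and Slutsky.

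One very minor technical remark: your remainder bound $\|R_k\|\le M\|U_k-u\|^2$ tacitly assumes the Hessian is \emph{bounded} on the neighborhood $N$, which does not follow from mere twice-differentiability without continuity of the second derivative. This is easily repaired --- either assume twice \emph{continuous} differentiability (as is typical), or simply use the first-order remainder $R_k = o(\|U_k-u\|)$ that comes directly from differentiability at $u$; the same tightness argument then gives $\sqrt{\alpha_k}R_k = o_p(1)$ without any second-order control.
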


\begin{lemma}\label{lem:linearization}
Consider an irreducible and aperiodic finite-state space Markov Chain $X_1, X_2, \dotsc, X_{t}$. Let $S$ be the state space and $\rho$ be the stationary distribution. Let $u: S \rightarrow \R^{p}$ be a function with each component $u_{i}, 1\leq i \leq p.$ Let $u^{*} = \sum_{s \in S} \rho(s) u(s)$ be the expected value of $u$ under the stationary distribution $\rho.$

Let $f: \R^{p} \rightarrow \R$ be a function twice differentiable in a neighbor of $u^{*}.$ Then, there exists $\sigma \geq 0$ such that when $T \rightarrow \infty$, 
\begin{align*}
    \sqrt{T}\left(f\left(\frac{1}{T}\sum_{i=1}^{T} u(X_t)\right) - f(u^{*})\right) &\overset{d}{\rightarrow} N(0, \sigma^2)\\
    \sqrt{T} \left(\sum_{i=1}^{p} \left(u_i(X_t)-u^{*}_i\right) \cdot \frac{\partial f(u^{*})}{\partial u_i}  \right) &\overset{d}{\rightarrow} N(0, \sigma^2)
\end{align*}
\end{lemma}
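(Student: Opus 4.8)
The plan is to combine the multivariate Markov chain CLT with the delta method (\cref{lem:mapping-CLT}), and to exploit the fact that the linearized quantity is, by construction, exactly the first-order term in the delta-method expansion, so that it is forced to have the same Gaussian limit as $f(u_T)$.

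First, I would establish a vector-valued CLT for the empirical average $u_T := \frac1T\sum_{t=1}^T u(X_t)$. Because the chain is finite-state, irreducible and aperiodic, it is uniformly geometrically ergodic, so the Markov chain CLT (\cite{jones2004markov}) applies to each bounded coordinate $u_i$, and the Cram\'{e}r--Wold device upgrades this to $\sqrt{T}\,(u_T - u^{*}) \overset{d}{\rightarrow} N(0,\Gamma)$ for some covariance $\Gamma \in \R^{p\times p}$ (finite, since $u$ is bounded on the finite state space). The ergodic theorem also gives $u_T \to u^{*}$ almost surely, hence in probability.

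Next, I would apply \cref{lem:mapping-CLT} with $\alpha_T = T$, $U_T = u_T$, limit $u^{*}$, covariance $\Gamma$, and the twice-differentiable map $f$: its hypotheses (in-probability consistency of $U_T$, twice-differentiability of $f$ near $u^{*}$) hold by the previous step and by assumption, so it yields $f(u_T)\to f(u^{*})$ in probability and $\sqrt{T}\,(f(u_T)-f(u^{*}))\overset{d}{\rightarrow} N(0,\sigma^2)$ with $\sigma^2 := \nabla f(u^{*})^{\top}\Gamma\,\nabla f(u^{*}) \ge 0$. That is the first claimed convergence. For the second, I would note that with $\tilde f(s) := \nabla f(u^{*})^{\top}(u(s)-u^{*})$ --- a bounded scalar function with $\E_\rho[\tilde f]=0$ --- linearity gives the exact identity $\frac1T\sum_{t=1}^T \tilde f(X_t) = \nabla f(u^{*})^{\top}(u_T - u^{*})$, with no remainder term. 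Hence $\sqrt T\,\frac1T\sum_{t=1}^T \tilde f(X_t) = \nabla f(u^{*})^{\top}\big[\sqrt T\,(u_T - u^{*})\big]$, and since $v\mapsto \nabla f(u^{*})^{\top}v$ is continuous, the continuous mapping theorem (equivalently, the delta-method expansion $\sqrt T(f(u_T)-f(u^{*})) = \nabla f(u^{*})^{\top}\sqrt T(u_T-u^{*}) + o_P(1)$ together with Slutsky) forces this to converge in distribution to $\nabla f(u^{*})^{\top}Z \sim N(0,\sigma^2)$ for $Z\sim N(0,\Gamma)$ --- the same $\sigma^2$.

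I do not anticipate a serious obstacle: the result is a routine composition of the Markov chain CLT, the delta method, and the continuous mapping theorem. The only points needing care are verifying the hypotheses of \cref{lem:mapping-CLT} and observing that the linearized sum equals $\nabla f(u^{*})^{\top}(u_T-u^{*})$ \emph{exactly}, which is what pins its limit to the delta-method limit rather than merely making them comparable; boundedness of $u$ on the finite state space is what guarantees $\Gamma$ is finite and that the CLT is available in the first place.
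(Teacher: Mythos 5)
Your proposal is correct, and for the first claimed convergence it coincides with the paper's proof: both invoke the Markov chain CLT for the vector average $u_T$ and then apply \cref{lem:mapping-CLT} to get $\sqrt{T}\,(f(u_T)-f(u^*))\overset{d}{\rightarrow}N(0,\sigma^2)$ with $\sigma^2=\nabla f(u^*)^\top\Gamma\,\nabla f(u^*)$. Where you diverge is the second claim. The paper applies the Markov chain CLT a \emph{second} time, directly to the scalar function $F(X):=(u(X)-u^*)^\top\nabla f(u^*)$, obtains its asymptotic variance $\sigma_F^2$ from the autocovariance-series formula, and then expands that series term by term to verify $\sigma_F^2=\nabla f(u^*)^\top\Sigma\,\nabla f(u^*)=\sigma^2$. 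You instead observe the exact algebraic identity $\frac{1}{T}\sum_t\tilde f(X_t)=\nabla f(u^*)^\top(u_T-u^*)$ and conclude by the continuous mapping theorem applied to the already-established vector CLT. Your route is a bit cleaner: it needs no second appeal to the CLT and no explicit manipulation of the covariance series, since linearity pins the limit to $\nabla f(u^*)^\top Z$ with $Z\sim N(0,\Gamma)$ automatically. The paper's route has the minor advantage of producing the explicit representation of $\sigma_F^2$ as a sum of autocovariances of the scalar $\tilde f$, which is exactly the form consumed downstream by \cref{lem:bound-for-sigma} when bounding $\sigma_f$; with your argument one would extract that representation afterwards anyway. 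Both are complete and correct.
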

\begin{proof}
 To begin, note that by Markov Chain CLT (Corollary 5 \cite{jones2004markov}), we have
 \begin{align*}
     \sqrt{T} \left(\frac{1}{T}\sum_{i=1}^{T} u(X_t) - u^{*}\right) \overset{d}{\rightarrow} N(0, \Sigma)  
 \end{align*}
 for some covariance matrix $\Sigma \in \R^{p \times p}.$ In particular,
 \begin{align}\label{eq:def-Sigma}
     \Sigma := E_{\rho}[(u(X_1)-u^{*})(u(X_1)-u^{*})^{\top}] + 2\sum_{k=2}^{\infty}  E_{\rho}[(u(X_1)-u^{*})(u(X_{k})-u^{*})^{\top}]
 \end{align}
 where $E_{\rho}$ denotes the expectation when the initial distribution of the Markov chain is $\rho$. 
 
 Then, since $f$ is twice differentiable in a neighbor of $u^{*}$, we can invoke \cref{lem:mapping-CLT} to get 
 \begin{align*}
      \sqrt{T}\left(f\left(\frac{1}{T}\sum_{i=1}^{T} u(X_t)\right) - f(u^{*})\right) &\overset{d}{\rightarrow} N(0, \sigma^2)
 \end{align*}
 where $\sigma^2 := \nabla f(u^{*})^{\top} \Sigma \nabla f(u^{*}).$

 Next, let $F(X) := \sum_{i=1}^{p} \left(u_i(X)-u^{*}_i\right) \cdot \frac{\partial f(u^{*})}{\partial u_i} = (u(X) - u^{*})^{\top} \nabla f(u^{*})$. Then using 
the fact $\frac{1}{T}\sum_{t=1}^{T}u(X_{t}) \rightarrow u^{*}$ and invoking Markov Chain CLT again, we have
 \begin{align*}
     \sqrt{T}\left(\frac{1}{T}\sum_{t=1}^{T}F(X_{t})\right) &\overset{d}{\rightarrow} N(0, \sigma_{F}^2)
 \end{align*}
 where
 \begin{align*}
     \sigma_{F}^2 := E_{\rho}[F(X_1)^2] + 2\sum_{k=2}^{\infty} E_{\rho}[F(X_1)F(X_{k})]. 
 \end{align*}
 Expanding $F(X)$ by $(u(X) - u^{*})^{\top} \nabla f(u^{*})$, we have
 \begin{align*}
     \sigma_{F}^2 
     &= E_{\rho}[((u(X_1) - u^{*})^{\top} \nabla f(u^{*}))^2] + 2\sum_{k=2}^{\infty} E_{\rho}[(u(X_1) - u^{*})^{\top} \nabla f(u^{*}) (u(X_k) - u^{*})^{\top} \nabla f(u^{*})]\\
     &= \nabla f(u^{*})^{\top} E_{\rho}[(u(X_1)-u^{*})(u(X_1)-u^{*})^{\top}] \nabla f(u^{*}) \\
     &\quad + \nabla f(u^{*})^{\top} \sum_{k=2}^{\infty} E_{\rho}[(u(X_1)-u^{*})(u(X_k)-u^{*})^{\top}] \nabla f(u^{*}) \\
     &\overset{(i)}{=}  \nabla f(u^{*})^{\top} \Sigma \nabla f(u^{*})\\
     &= \sigma^2 
 \end{align*}
 where (i) uses \cref{eq:def-Sigma}. This implies that $F$ (the linearization of $f$ at the point $u^{*}$) will converge with the same limiting variance as $f$.  
\end{proof}


\section{Experiment details}

\subsection{Synthetic example}
\subsubsection{Environment}
We replicate exactly the environment of \cite{johari2022experimental}. We model a rental marketplace with $N=5000$ homogeneous listings.  Customers arrive according to a Poisson process with rate $N \lambda$, decide whether to rent a listing (with rental probability controlled by the intervention), and if they do rent, they occupy a listing for an exponentially distributed time with mean $\frac{1}{\mu}$.

Specifically, we define our MDP to be the discrete-time jump chain of this process, with events indexed by $t$ and state $s_t \in \{0, 1 \ldots N \}$ representing the current inventory of listings. At the $t^{\rm th}$ event, the system chooses to apply control ($a_t = 0$) or treatment ($a_t = 1$). One of the following state transition and reward scenarios may then happen:

\begin{enumerate}
  \item A previously occupied rental becomes available, i.e. $s_{t+1} = s_t + 1$ and $r_t = 0$; this occurs with probability $\frac{(N - s_t) \mu}{N\mu + N \lambda}$
  \item A customer arrives, with probability $\frac{N\lambda}{N\mu + N\lambda}$, and subsequently:
        \begin{enumerate}
          \item Rents a listing, so $s_{t+1} = s_t - 1$ and $r_t = 1$; this occurs with probability $\frac{s_t v(a_t)}{N + s_t v(a_t)}$ where $v(0) = 0.315$ and $v(1) = 0.3937$ are the average utility under control and treatment, respectively.
          \item Does not rent a listing , so $s_{t+1} = s_t$ and $r_t = 0$; this occurs with probability $\frac{N}{N + s_t v(a_t)}$.
        \end{enumerate}
  \item No state change occurs; i.e. $s_{t+1} = s_t$ and $r_t = 0$.
\end{enumerate}

\cite{johari2022experimental} also describes a two-sided randomization scheme, where listings are also assigned to control or treatment, and the customer's purchase probability depends on both the customer's treatment assignment $a_t$, as well as the number of control listings and the number of treatment listings. This corresponds to a more complicated MDP with a two-dimensional state $s_t = (s_t^{\rm co}, s_t^{\rm tr})$, where $s_t^{\rm co}$ corresponds to the number of available control listings, and $s_t^{\rm tr}$  the number of available treatment listings. The average utility of a control listing is $v_{\rm co}(0) = v_{\rm co}(1) = v(0)$, while the average utility of a treatment listing is $v_{\rm tr} (0) = v(0)$ and $v_{\rm tr}(1) = v(1)$. We defer to \cite{johari2022experimental} for further details of this scheme.

\subsubsection{Implementation details}
Here we list algorithms and hyperparameters tuned for this experiment. Hyperparameters were chosen to minimize MSE averaged over 10 held-out trajectories. As in \cite{johari2022experimental}, we also include a burn-in period of $T_0=5N$.

\begin{enumerate}
  \item Naive. This has no hyperparameters.
  \item TSRI. This has several hyperparameters, which affect both the experimental design (customer randomization probability $p$ and listing randomization probability $p_{L}$), as well as the estimator (parameters $k$ and $\beta$, as described in \cite{johari2022experimental}). We set $p, p_{L}, \beta$ assuming $\lambda, \mu$ are known, exactly as prescribed in \cite{johari2022experimental}. Specifically, we compute the values reported in Table~\ref{tbl:synthetic-hyperparams} as:
        \begin{align*}
p=\left(1-e^{-\lambda / \mu}\right)+ 0.5  e^{-\lambda / \mu} && p_{L}=0.5\left(1-e^{-\lambda / \mu }\right)+e^{-\lambda / \mu }  &&  \beta = e^{-\lambda / \mu}
        \end{align*}
        We report results for both $k=1$ and $k=2$.
  \item DQ with LSTD, which we estimate using a slight modification of Equation~\eqref{eq:LSTD0-V}. Specifically, we directly estimate the state-action value function $Q$ instead of separately estimating the state value function $V$ and $P_{1}, P_{0}$, and we add an $L_{2}$ regularization term. In short, we approximate and solve for a fixed point to the regularized least-squares problem:
        \begin{equation*}
Q = \arg \min_{Q'} \|Q' - r - PQ + \lambda\|^2_{2} + \alpha \|Q'\|^2_{2}
        \end{equation*}
        where $Q \in \mathbb{R}^{2(N+1)}$ is the vector of estimated $Q(s,a)$ values, and$P \in \mathbb{R}^{2(N+1) \times 2(N+1)}$ is the state-action transition matrix. We use sample means in each state to construct plug-in estimates of $r, P$ and $\lambda$.

  \item Off-Policy with LSTD, which we note is novel in the literature. In Section~\ref{sec:efficient-ope} we describe this algorithm, provide convergence guarantees, and show that this algorithm is efficient. This can be construed as a direct analog of \cite{shi2020reinforcement}'s off-policy estimator, which applies LSTD in the discounted-reward setting. It has no hyperparameters.
  \item Off-Policy with TD, where $Q$ -functions and off-policy average rewards are calculated according to the Differential TD algorithm of \cite{wanLearningPlanningAverageReward2021}. This approach has two hyperparameters: the learning rate for the $Q$ -function $\gamma / \sqrt{t}$, and the learning rate for the mean reward estimate $\beta \gamma / \sqrt{t}$.
\end{enumerate}

For these experiments, we exclude the Off-Policy GTD variant described in \cite{zhangAverageRewardOffPolicyPolicy2021} as their convergence guarantees do not apply to the tabular setting.

\begin{table}[htbp]
  \centering
  \begin{tabular}{r| l }
    Algorithm & Hyperparameters \\
    \hline
    TSRI & $p = {\bf 0.816}, p_{L} = {\bf 0.683}, k \in \{ {\bf 1, 2} \}, \beta = {\bf 0.368}$  \\
    DQ (LSTD) & $\alpha \in \{ 0.01, {\bf 0.1}, 1, 10, 100\}$  \\
Off-Policy (TD) & $\beta \in \{0.2, {\bf 0.5} \}, \gamma \in \{ 0.001, {\bf 0.01}, 0.1, 1.\}$
  \end{tabular}
  \caption{Hyperparameters for the synthetic example of \cite{johari2022experimental}. Parameter settings reported in the main text are in bold.}
  \label{tbl:synthetic-hyperparams}
\end{table}

\subsubsection{Additional results} We note that there are scenarios for which which specialized designs and  estimators -- specifically TSR, in this example -- can provide a superior bias-variance tradeoff. \cite{johari2022experimental} shows that the TSRI estimators become unbiased when $\lambda \gg \mu$. We ran the synthetic example setting $\lambda=10, \mu=1$ (also mirroring results from \cite{johari2022experimental}), and indeed for this setting for reasonable horizons TSR achieves lower RMSE. Recall, however, that TSR is ill-defined for settings where there is no natural notion of two-sided randomization (i.e. in any MDP without a notion of two sides), and its bias properties are clearly highly instance-specific and depend on knowledge of $\lambda, \mu$. DQ still outperforms all alternatives besides TSR in this setting,  and even in this extremely unbalanced setting bachieves a much lower asymptotic bias than TSR (-5e-3 vs 1e-2, as a proportion of the treatment effect magnitude). .

\begin{figure}[htbp]
 \centering
 \subfloat{\includegraphics[width=0.46\textwidth]{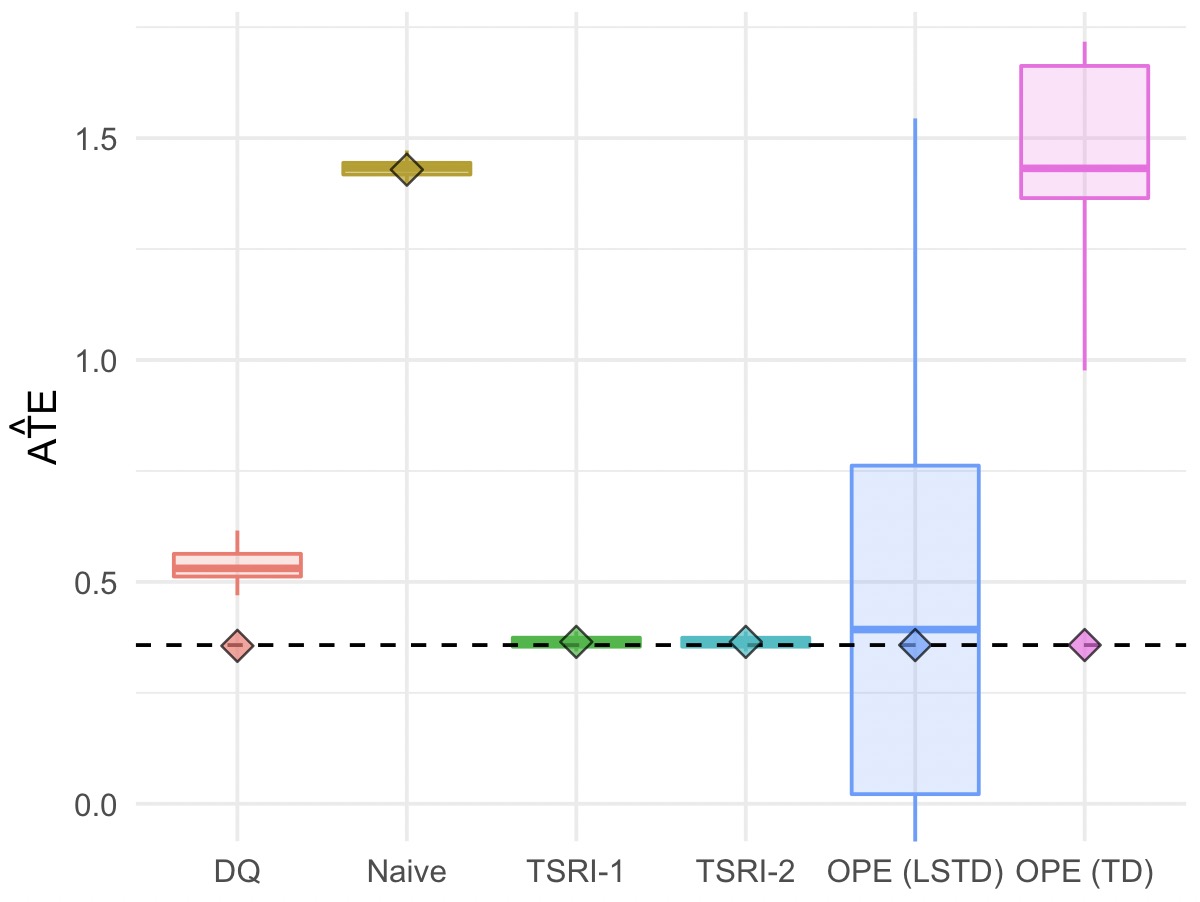}}
 \subfloat{\includegraphics[width=0.5\textwidth]{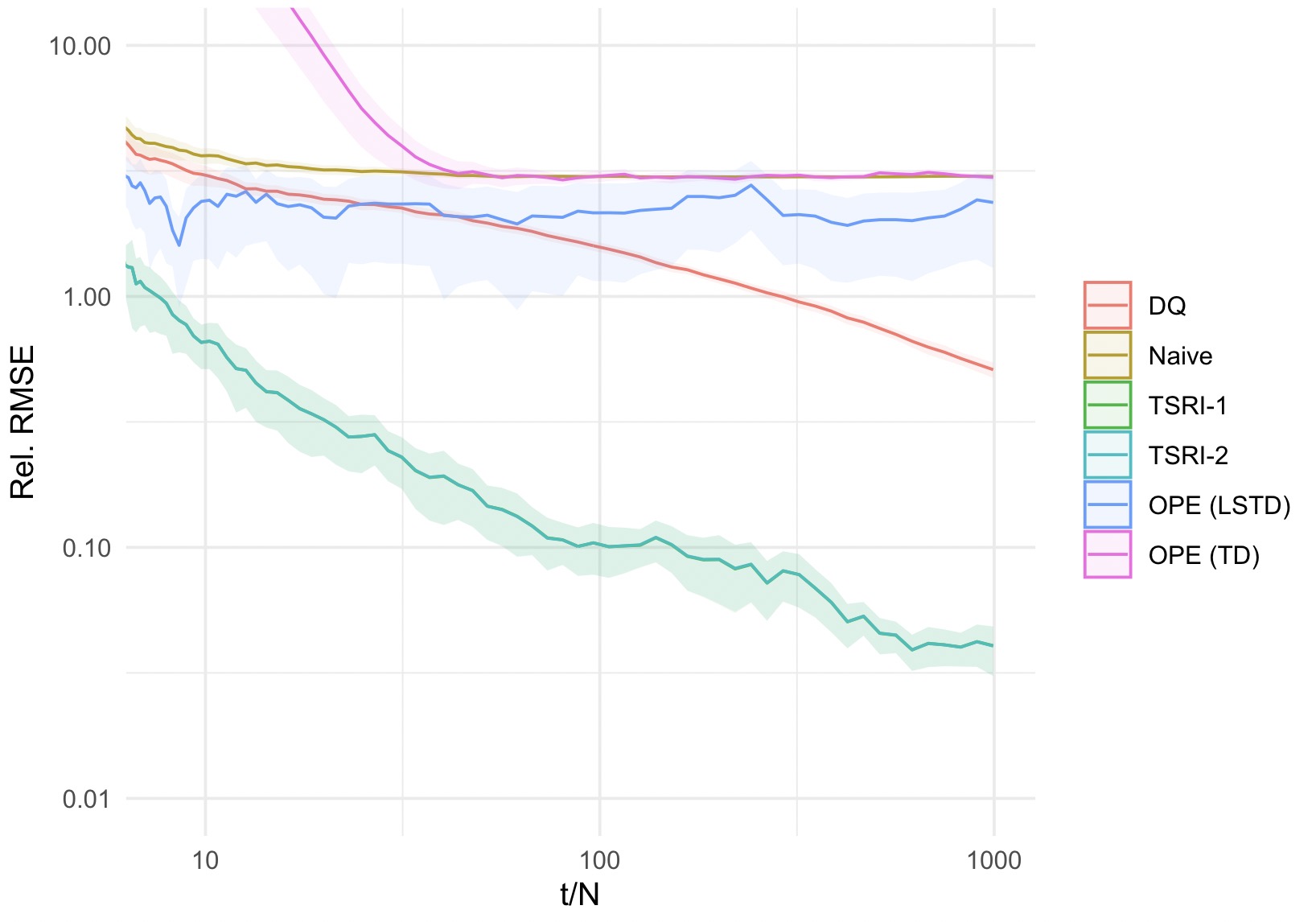}}
 \caption{Simple example from \cite{johari2022experimental}, with $\lambda=10$. {\it Left}: Estimated ATE at time $t/N=10^{3}$ across $100$ trajectories. Dashed line indicates actual ATE. Diamonds indicate the asymptotic mean for each estimator. Over this horizon, TSRI-1 and TSRI-2 exhibit small bias and variance, although asymptotically DQ still has lower bias.}
 \label{fig:high-lambda}
\end{figure}

\subsubsection{Computing environment} These experiments were performed on a personal desktop with a 24-core Intel Xeon X5670 CPU and 128 GB RAM. Total compute time per seed averaged less than two hours.

\subsection{Ridesharing Simulator}
\subsubsection{Environment}

We implement a ridesharing simulator, with code available on Github.

\begin{enumerate}
  \item Riders are generated based on trips resampled from the NYC Taxi Dataset \cite{TLCTripRecord} (specifically, from January 11, 2015), with a random willingness-to-pay per second distributed as $\mathrm{LogNormal}(\log(0.01), 1.)$. The the rider's outside option is assumed to be the trip they actually took in the dataset, and the cost (i.e., negative utility) the rider incurs for this option is the fare recorded in the dataset, plus the trip time times the rider's WTP per second.
  \item Drivers enter the system at pickup locations in the same dataset, but at a lower arrival rate (tuned to achieve a utilization of $\sim70\%$). Drivers stay in the system for an exponential time with a mean of one hour, and stop serving new requests once they exit the system.
  \item When a request enters the system, the pricing engine computes the cost to serve that request with an idle driver (where cost is based on recent per-mile and per-minute fare rates), and discounts this by 10\%; this is the price offered to the rider. The pricing engine also offers the rider a worst-case time-to-destination (ETD) guarantee, which is 1.5 times the time to serve the request with an idle driver. The rider then chooses to accept or reject the offer, based on whether their worst-case utility for the trip exceeds the utility of the outside option. If the rider rejects the offer they exit the system.
  \item If the rider accepts, the request is submitted to the dispatch engine. The dispatcher searches for the nearest idle driver and the 10 nearest pool drivers to the request. This list of candidates is filtered to those who can serve the request while satisfying the ETD guarantees of all riders. The pool candidates are then further filtered to those whose cost to service the request is at most $\frac{1}{1 + \alpha_t}$ times the cost of the idle driver, where $\alpha_t = \alpha_{\rm co} = 0$ in control ($a_t = 0$) and $\alpha_t = \alpha_{\rm tr}$ in treatment ($a_t = 1$), where we vary $\alpha_{\rm tr} \in \{0.3, 0.5, 0.7\}$. Finally, the minimum cost driver among this set is dispatched.
\end{enumerate}

We can implement two-sided randomization in this market as follows. Each driver is also randomized into either treatment or control. The dispatcher then dispatches to the minimum cost driver among the following set:

\begin{itemize}
  \item All idle drivers (i.e., drivers currently assigned no passengers).
\item Control pool drivers, whose cost is at most $\frac{1}{1 + \alpha_{\rm co}}$ times the minimum cost idle driver.
\item Treatment pool drivers, whose cost is at most $\frac{1}{1 + a_t \alpha_{\rm tr} + (1 - a_t) \alpha_{\rm co}}$ times the minimum cost idle driver.
\end{itemize}

\subsubsection{Algorithms}
We use the same approximation architecture for each algorithm, where $Q(s, a) = \theta^\top\phi(s, a)$ is a linear function of features $\phi: \mathcal{S} \times \mathcal{A} \mapsto \mathbb{R}^{d}$ with coefficients $\theta$. We take features $\phi(s_t, a_t)$ to consist of the number of drivers in the system with each of 0, 1, 2, and 3 open seats remaining, as well as the price and cost of the current request.

The algorithms are then:

\begin{enumerate}
  \item Naive, with no hyperparameters.
  \item TSRI, again with hyperparameters $p, p_{L}, k, \beta$. We set these based on the relative supply and demand characteristics of the simulator. Specifically, with analogy to the synthetic problem, the system averages around $600$ drivers, with 3 passenger seats per driver, for a total of $N \approx 1800$ available units of capacity. The arrival rate is 4 passengers per second, yielding $\lambda \approx 4 / 1800$, while the average trip lasts 12 minutes, yielding $\mu \approx 720$. Ultimately we have $\lambda / \mu \approx 1.6$, and set the algorithm hyperparameters accordingly.
  \item DQ with LSTD, with a single regularization hyperparameter $\alpha$. Here we solve for $\theta$ by approximating and solving for a fixed point to the regularized least-squares problem:
        \begin{equation*}
\theta = \arg \min_{\theta'} \|\Phi\theta' - r - P\Phi\theta + \lambda\|^2_{2} + \alpha \|\theta'\|^2_{2}
        \end{equation*}
        where $\Phi \in \mathbb{R}^{|\mathcal{S}| \times |\mathcal{A}|}$ is the matrix of state-action feature representations.
  \item Off-Policy with LSTD, where we solve simultaneously for $\theta_{1}, \lambda_{1}$ by solving for the unique fixed point of the projected Bellman equation $\Phi_{1}^\top\Phi_{1} \theta_{1} = \Phi_{1}^\top (r_{1} - {\bf 1}\lambda_{1}) + \Phi_{1}^\top P_{1}\Phi_{1} \theta_1$, where $\Phi_{1} \in \mathbb{R}^{|\mathcal{S}|}$ is the matrix of state-action features corresponding to action $1$, and $r_{1} \in \mathbb{R}^{|\mathcal{S}|}$ is the vector of rewards for action 1. We solve an analogous equation for $\theta_{0}, \lambda_{0}$. This effectively extends the algorithm of Section~\cite{sec:efficient-ope} to the setting of linear function approximation. This has no hyperparameters.
  \item Off-Policy with TD, where $Q$ -functions and off-policy average rewards are calculated according to the extension of \cite{wanLearningPlanningAverageReward2021} to linear function approximation, as provided in \cite{zhangAverageRewardOffPolicyPolicy2021}. This approach has two hyperparameters: the learning rate for the $Q$ -function $\gamma / \sqrt{t}$, and the learning rate for the mean reward estimate $\beta \gamma / \sqrt{t}$.
  \item Off-Policy with Gradient TD (GTD), as in \cite{zhangAverageRewardOffPolicyPolicy2021}. This has the same  hyperparameters  $\beta, \gamma$ as TD.
\end{enumerate}

A single hyperparameter was selected for each algorithm across all treatment effect settings, based on a scalarization of MSE across all settings, and tuned on 10 held-out trajectories for each setting.

\begin{table}[htbp]
  \centering
  \begin{tabular}{r| l }
    Algorithm & Hyperparameters \\
    \hline
    TSRI & $p = {\bf 0.9}, p_{L} = {\bf 0.6}, k \in \{ {\bf 1, 2} \}, \beta = {\bf 0.2}$  \\
    DQ (LSTD) & $\alpha \in \{ 0.01, 0.1, {\bf 1}, 10, 100\}$  \\
Off-Policy (TD) & $\beta \in \{0.2, {\bf 0.5} \}, \gamma \in \{ 0.001, {\bf 0.01}, 0.1, 1.\}$ \\
Off-Policy (GTD) & $\beta \in \{0.2, {\bf 0.5} \}, \gamma \in \{ 0.001, {\bf 0.01}, 0.1, 1.\}$
  \end{tabular}
  \caption{Hyperparameters for the ridesharing setting. Parameter settings reported in the main text are in bold.}
  \label{tbl:ridesharing-hyperparams}
\end{table}

\subsubsection{Computing environment} These experiments were performed on an internal cluster. Each run of the simulator took an average of four hours, allocating a single CPU and 8GB of RAM.


\end{document}